\newenvironment{claim}[1]{\par\noindent\underline{Claim:}\space#1}{}
\newenvironment{claimproof}[1]{\par\noindent\underline{Proof:}\space#1}{\hfill $\blacksquare$}
\newcommand{\vTask}{{t}}
\newcommand{\vTaskOne}{{t_{1}}}
\newcommand{\vTaskJ}{{t_{j}}}
\newcommand{\vTaskR}{{t_{r}}}
\newcommand{\vTaskK}{{t_{k}}}
\newcommand{\vCurrentRound}{r}
\newcommand{\vNumTotalRounds}{R}
\newcommand{\vNumTotalTasks}{{|\mathcal{T}|}}
\newcommand{\vTempIndexA}{i}
\newtheorem{theorem}{Theorem}
\newtheorem{assumption}{Assumption}
\newtheorem{lemma}{Lemma}
\newtheorem*{mylemma1}{Lemma 1}
\newtheorem*{mylemma2}{Lemma 2}
\newtheorem*{mylemma3}{Lemma 3}
\newtheorem*{mylemma4}{Lemma 4}
\newtheorem*{mylemma5}{Lemma 5}
\newtheorem*{mytheo}{Theorem 1}
\icmltitlerunning{Safe Policy Search for Lifelong Reinforcement Learning with Sublinear Regret}
\begin{document}

\setlength{\abovedisplayskip}{4.0pt plus 1.0pt minus 2.0pt}
\setlength{\belowdisplayskip}{4.0pt plus 1.0pt minus 2.0pt}
\setlength{\abovedisplayshortskip}{0.0pt plus 2.0pt}
\setlength{\belowdisplayshortskip}{0.0pt plus 2.0pt minus 1.0pt} 

\setlength{\parskip}{5pt  minus 1pt} 

\twocolumn[
\icmltitle{Safe Policy Search for Lifelong Reinforcement Learning with Sublinear Regret}
\icmlauthor{Haitham Bou Ammar}{haithamb@seas.upenn.edu}
\icmlauthor{Rasul Tutunov}{tutunov@seas.upenn.edu}
\icmlauthor{Eric Eaton}{eeaton@cis.upenn.edu}
\icmladdress{University of Pennsylvania, Computer and Information Science Department, Philadelphia, PA 19104 USA}

\icmlkeywords{lifelong learning, online multi-task learning, reinforcement learning, policy gradients, lifelong policy search}

\vskip 0.20in
]

\begin{abstract}
Lifelong reinforcement learning provides a promising framework for developing versatile agents that can accumulate knowledge over a lifetime of experience and rapidly learn new tasks by building upon prior knowledge.  However, current lifelong learning methods exhibit non-vanishing regret as the amount of experience increases, and include limitations that can lead to suboptimal or unsafe control policies.  To address these issues, we develop a lifelong policy gradient learner that operates in an adversarial setting to learn multiple tasks online while enforcing safety constraints on the learned policies.  We demonstrate, for the first time, {\em sublinear regret} for lifelong policy search, and validate our algorithm on several benchmark dynamical systems and an application to quadrotor control. \vspace{-1.5em}
\end{abstract}

\section{Introduction}

Reinforcement learning (RL)~\cite{Busoniu,Sutton} often requires substantial experience before achieving acceptable performance on individual control problems. One major contributor to this issue is the \emph{tabula-rasa} assumption of typical RL methods, which learn from scratch on each new task. In these settings, learning performance is directly correlated with the quality of the acquired samples. Unfortunately, the amount of experience necessary for high-quality performance increases exponentially with the tasks' degrees of freedom, inhibiting the application of RL to high-dimensional control problems.


When data is in limited supply, transfer learning can significantly improve model performance on new tasks by reusing previous learned knowledge during training \cite{Taylor09,Lazaric1,lazaric2011transfer, ferrante2008transfer,HaithamTwo}. 
Multi-task learning (MTL) explores another notion of knowledge transfer, in which task models are trained simultaneously and share knowledge during the joint learning process~\cite{Wilson,Zhang_flexiblelatent}.


In the \emph{lifelong learning} setting~\cite{Thrun96discoveringstructure,Thrun_1996_679}, 
which can be framed
as an online MTL problem, agents acquire knowledge incrementally
by learning multiple tasks consecutively over their lifetime.  
Recently, based on the work of~\citet{Ruvolo2013ELLA} on supervised lifelong learning, \citet{BouAmmar2014Online}
developed a lifelong learner for policy gradient RL.  To ensure efficient learning over consecutive tasks, these works employ a second-order Taylor expansion around the parameters that are (locally) optimal for each task without transfer. This assumption simplifies
the MTL objective into a weighted quadratic form for online learning, but since it is based on single-task learning, this technique can lead to parameters far from globally optimal.  
Consequently, the success of these methods for RL highly depends on the policy initializations, which must lead to near-optimal trajectories for meaningful updates. Also, since their objective functions average loss over all tasks, these methods exhibit non-vanishing regrets of the form $\mathcal{O}(R)$, where $R$ is the total number of rounds in a non-adversarial setting.  

In addition, these methods may produce control policies with unsafe behavior (i.e., capable
of causing damage to the agent or environment, catastrophic failure,
etc.). This is a critical issue in robotic control, where unsafe control
policies can lead to physical damage or user injury. This problem is caused by using constraint-free optimization
over the shared knowledge during the transfer process, which may lead to uninformative or
unbounded policies.

In this paper, we address these issues by proposing the first \emph{safe lifelong learner}
for policy gradient RL operating in an adversarial framework. Our approach rapidly learns 
high-performance \emph{safe control policies} based on the agent's
previously learned knowledge and safety constraints on each task, accumulating knowledge over multiple consecutive tasks to optimize overall performance.  We theoretically analyze the regret
exhibited by our algorithm, showing \emph{sublinear} dependency of
the form $\mathcal{O}(\sqrt{\vNumTotalRounds})$ for $\vNumTotalRounds$ 
rounds, thus outperforming current methods. We then evaluate our approach empirically on a set of dynamical systems. 

\section{Background}

\label{Sec:Background}

\subsection{Reinforcement Learning}
An RL agent sequentially chooses actions to minimize its expected
cost. Such problems are formalized as Markov decision processes (MDPs)
$\left\langle \mathcal{X},\mathcal{U},\mathcal{P},\bm{c},\gamma\right\rangle $,
where $\mathcal{X}\subset\mathbb{R}^{d}$ is the (potentially infinite)
state space, $\mathcal{U}\in\mathbb{R}^{d_{a}}$ is the set of all
possible actions, $\mathcal{P}:\mathcal{X}\times\mathcal{U}\times\mathcal{X}\rightarrow[0,1]$
is a state transition probability describing the system's dynamics,
$\bm{c}:\mathcal{X}\times\mathcal{U}\times\mathcal{X}\rightarrow\mathbb{R}$
is the cost function measuring the agent's performance, and $\gamma\in[0,1]$
is a discount factor. At each time step $m$, the agent is in state
$\bm{x}_{m}\in\mathcal{X}$ and must choose an action $\bm{u}_{m}\in\mathcal{U}$,
transitioning it to a new state $\bm{x}_{m+1}\sim\mathcal{P}\left(\bm{x}_{m+1}|\bm{x}_{m},\bm{u}_{m}\right)$
and yielding a cost $\bm{c}_{m+1}=\bm{c}(\bm{x}_{m+1},\bm{u}_{m},\bm{x}_{m})$.
The sequence of state-action pairs forms a trajectory $\bm{\tau}=\left[\bm{x}_{0:M-1},\bm{u}_{0:M-1}\right]$
over a (possibly infinite) horizon $M$. A policy $\pi:\mathcal{X}\times\mathcal{U}\rightarrow[0,1]$
specifies a probability distribution over state-action pairs, where
$\pi\left(\bm{u}|\bm{x}\right)$ represents the probability of selecting
an action $\bm{u}$ in state $\bm{x}$. The goal of RL is to find
an optimal policy $\pi^{\star}$ that minimizes the total expected cost.

\textbf{Policy search methods} have shown success in solving high-dimensional
problems, such as robotic control~\cite{Kober, Peters_NN_2008,Sutton00policygradient}.
These methods represent the policy $\pi_{\bm{\alpha}}(\bm{u}|\bm{x})$
using a vector $\bm{\alpha}\in\mathbb{R}^{d}$ of control parameters.
The optimal policy $\pi^{\star}$ is found by determining the parameters
$\bm{\alpha}^{\star}$ that minimize the expected average cost: 
\begin{equation}
l(\bm{\alpha})=\sum_{k=1}^{n}p_{\bm{\alpha}}\!\left(\bm{\tau}^{(k)}\right)\bm{C}\!\left(\bm{\tau}^{(k)}\right)\enspace,\label{Eq:St_PG}
\end{equation}
where $n$ is the total number of trajectories, and $p_{\bm{\alpha}}\!\left(\bm{\tau}^{(k)}\right)$
and $\bm{C}\!\left(\bm{\tau}^{(k)}\right)$ are the probability and
cost of trajectory $\bm{\tau}^{(k)}$: 
\begin{align}
\begin{split}p_{\bm{\alpha}}\left(\bm{\tau}^{(k)}\right) & =\mathcal{P}_{0}\left(\bm{x}_{0}^{(k)}\right)\prod_{m=0}^{M-1}\mathcal{P}\left(\bm{x}_{m+1}^{(k)}|\bm{x}_{m}^{(k)},\bm{u}_{m}^{(k)}\right)\\[-0.4em]
 & \hspace{7.6em}\times\pi_{\bm{\alpha}}\left(\bm{u}_{m}^{(k)}|\bm{x}_{m}^{(k)}\right)
\end{split}
\\
\bm{C}\left(\bm{\tau}^{(k)}\right) & =\frac{1}{M}\sum_{m=0}^{M-1}\bm{c}\left(\bm{x}_{m+1}^{(k)},\bm{u}_{m}^{(k)},\bm{x}_{m}^{(k)}\right) \enspace ,
\end{align}
with an initial state distribution $\mathcal{P}_{0}:\mathcal{X}\rightarrow[0,1]$. We handle a constrained version of policy search, in which optimality not only corresponds to minimizing the total expected cost, but also to ensuring that the policy satisfies safety constraints. These constraints vary between applications, for example corresponding to maximum joint torque or prohibited physical positions.

\subsection{Online Learning \& Regret Analysis}
In this paper, we employ a special form of \emph{regret minimization games}, which we briefly review here. A regret minimization game is a triple $\langle \mathcal{K}, \mathcal{F}, R\rangle$, where $\mathcal{K}$ is a non-empty decision set, $\mathcal{F}$ is the set of moves of the adversary which contains bounded convex functions from $\mathbb{R}^{n}$ to $\mathbb{R}$, and $R$ is the total number of rounds. The game proceeds in rounds, where at each round $j=1,\dots, R$, the agent chooses a prediction $\bm{\theta}_{j} \in \mathcal{K}$ and the environment (i.e., the adversary) chooses a loss function $l_{j} \in \mathcal{F}$. At the end of the round, the loss function $l_{j}$  is revealed to the agent and the decision $\bm{\theta}_{j}$ is revealed to the environment. In this paper, we handle the full-information case, where the agent may observe the entire loss function $l_{j}$ as its feedback and can exploit this in making decisions. The goal is to minimize the cumulative regret $\sum_{j=1}^{R} l_{j}(\bm{\theta}_{j})  - \text{inf}_{\bm{u} \in \mathcal{K}} \left[\sum_{j=1}^{R} l_{j}(\bm{u})\right]$.
%
When analyzing the regret of our methods, we use a variant of this definition to handle the lifelong RL case: 
\begin{equation*}
\mathfrak{R}_{R}=\sum_{j=1}^{R}l_{t_{j}}(\bm{\theta}_{j})-\inf_{u\in \mathcal{K}}\left[\sum_{j=1}^{R}l_{t_{j}}(\bm{u})\right] \enspace ,
\end{equation*}
where $l_{t_{j}}(\cdot)$ denotes the loss of task $t$ at round $j$. 

For our framework, we adopt a variant of regret minimization called ``Follow the Regularized Leader,'' which minimizes regret in two steps. First, the unconstrained solution $\tilde{\bm{\theta}}$ is determined (see Sect.~\ref{Sec:Unconstrained}) by solving an unconstrained optimization over the accumulated losses observed so far. 
Given $\tilde{\bm{\theta}}$, the constrained solution is then determined by learning a projection into the constraint set via Bregman projections (see~\citet{AYBKSSz13}).

\section{Safe Lifelong Policy Search}

We adopt a lifelong learning framework in which the agent learns multiple
RL tasks consecutively, providing it the opportunity to transfer knowledge
between tasks to improve learning. Let $\mathcal{T}$ denote the set
of tasks, each element of which is an MDP. At any time, the learner
may face any previously seen task, and so must strive to maximize
its performance across all tasks. The goal is to learn optimal policies
$\pi_{\bm{\alpha}_{1}^{\star}}^{\star},\dots,\pi_{\bm{\alpha}_{\vNumTotalTasks}^{\star}}^{\star}$
for all tasks, where policy $\pi_{\bm{\alpha}_{t}^{\star}}^{\star}$
for task $t$ is parameterized by $\bm{\alpha}_{t}^{\star}\in\mathbb{R}^{d}$.
In addition, each task is equipped with safety constraints to ensure
acceptable policy behavior: $\bm{A}_{\vTask}\bm{\alpha}_{\vTask}\leq\bm{b}_{\vTask}$,
with $\bm{A}_{\vTask}\in\mathbb{R}^{d\times d}$ and $\bm{b}_{\vTask}\in\mathbb{R}^{d}$
representing the allowed policy combinations. The precise form of
these constraints depends on the application domain, but this formulation supports constraints on (e.g.) joint torque, acceleration, position,
etc.

At each round $j$, the learner observes a set of $n_{\vTaskJ}$
trajectories $\left\{ \bm{\tau}_{\vTaskJ}^{(1)},\dots,\bm{\tau}_{\vTaskJ}^{(n_{\vTaskJ})}\right\} $
from a task $\vTaskJ\in\mathcal{T}$, where each trajectory has
length $M_{\vTaskJ}$. To support knowledge transfer between tasks,
we assume that each task's policy parameters $\bm{\alpha}_{\vTaskJ}\in\mathbb{R}^{d}$
at round $j$ can be written as a linear combination of a shared latent
basis $\bm{L}\in\mathbb{R}^{d\times k}$ with coefficient vectors
$\bm{s}_{\vTaskJ}\in\mathbb{R}^{k}$; therefore, $\bm{\alpha}_{\vTaskJ}=\bm{L}\bm{s}_{\vTaskJ}$.
Each column of $\bm{L}$ represents a chunk of transferrable knowledge;
this task construction has been used successfully in previous multi-task
learning work \cite{daume12gomtl,Ruvolo2013ELLA,BouAmmar2014Online}.
Extending this previous work, we ensure that the shared knowledge
repository is ``informative'' by incorporating bounding constraints
on the Frobenius norm $\|\cdot\|_{\mathsf{F}}$ of $\bm{L}$. Consequently, the optimization
problem after observing $\vCurrentRound$ rounds is: 
\begin{align}
\min_{\bm{L},\bm{S}} & \sum_{j=1}^{\vCurrentRound}\left[\eta_{\vTaskJ}l_{\vTaskJ}\left(\bm{L}\bm{s}_{\vTaskJ}\right)\right]+\mu_{1}\left|\left|\bm{S}\right|\right|_{\mathsf{F}}^{2}+\mu_{2}\left|\left|\bm{L}\right|\right|_{\mathsf{F}}^{2}\label{Eq:OriginalOpti}\\
 & \ \ \ \text{s.t.}\ \ \ \ \bm{A}_{\vTaskJ}\bm{\alpha}_{\vTaskJ}\leq\bm{b}_{\vTaskJ}\ \ \forall\vTaskJ\in\mathcal{I}_{\vCurrentRound}\nonumber \\
 & \ \ \ \ \ \ \ \ \ \ \ \bm{\lambda}_{\text{min}}\left(\bm{L}\bm{L}^{\mathsf{T}}\right)\geq p\ \ \text{and}\ \ \bm{\lambda}_{\text{max}}\left(\bm{L}\bm{L}^{\mathsf{T}}\right)\leq q\enspace,\nonumber 
\end{align}
where $p$ and $q$ are the constraints on $\|\bm{L}\|_{\mathsf{F}}$, $\eta_{\vTaskJ}\in\mathbb{R}$
are design weighting parameters\footnote{We describe later how to set the $\eta$'s later in Sect.~\ref{sect:MainResults} to obtain regret bounds, and leave them as variables now for generality.}, $\mathcal{I}_{\vCurrentRound}=\left\{ t_{1},\dots,t_{\vCurrentRound}\right\} $ denotes
the set of all tasks observed so far through round $\vCurrentRound$, and $\bm{S}$ is the collection of all coefficients  
\begin{align*}
\begin{split}\bm{S}(:,h)=\left\{ \begin{array}{lr}
\bm{s}_{t_{h}} & \text{if \ensuremath{t_{h}\in\mathcal{I}_{\vCurrentRound}}}\\
0 & \text{otherwise}
\end{array}\right.\end{split}
\mbox{~~~\ensuremath{\forall h\in\{1,\dots,\vNumTotalTasks\}}}\enspace.
\end{align*}

The loss function $l_{\vTaskJ}(\bm{\alpha}_{\vTaskJ})$ in Eq.~\eqref{Eq:OriginalOpti}
corresponds to a policy gradient learner for task $\vTaskJ$, as
defined in Eq.~\eqref{Eq:St_PG}. Typical policy gradient methods~\cite{Kober,Sutton00policygradient}
maximize a lower bound of the expected cost $l_{\vTaskJ}\!\left(\bm{\alpha}_{\vTaskJ}\right)$,
which can be derived by taking the logarithm and applying Jensen's
inequality:
\begin{align}
 & \log\!\left[l_{\vTaskJ}\!\left(\bm{\alpha}_{\vTaskJ}\right)\right]=\log\!\left[\sum_{k=1}^{n_{\vTaskJ}}p_{\bm{\alpha}_{\vTaskJ}}^{\left(\vTaskJ\right)}\!\left(\bm{\tau}_{\vTaskJ}^{(k)}\right)\bm{C}^{\left(\vTaskJ\right)}\!\left(\bm{\tau}_{\vTaskJ}^{(k)}\right)\right]\label{Eq:Loss}\hspace{-1em}\\[-.25em]
\nonumber 
& \geq \log\!\left[n_{\vTaskJ}\right]\! +\mathbb{E}\!\!\left[ \sum_{m=0}^{M_{\vTaskJ}-1}\!\!\!\log\!\left[\pi_{\bm{\alpha}_{\vTaskJ}}\!\!\left(\bm{u}_{m}^{\left(k,\vTaskJ\right)} \mid \bm{x}_{m}^{\left(k,\vTaskJ\right)}\!\right)\right]\! \right]_{k=1}^{n_{\vTaskJ}}\!\!\!\!\!\!\!\!\!\!\!\!+\! \text{\footnotesize const}~.\nonumber
\end{align}
Therefore, our goal is to minimize the following objective: 
\begin{align}
 & \bm{e}_{\vCurrentRound}=\sum_{j=1}^{\vCurrentRound}\!\left(\!\!-\frac{\eta_{\vTaskJ}}{n_{\vTaskJ}}\sum_{k=1}^{n_{\vTaskJ}}\sum_{m=0}^{M_{\vTaskJ}-1}\!\!\!\log\!\left[\pi_{\bm{\alpha}_{\vTaskJ}}\!\!\left(\bm{u}_{m}^{\left(k,\vTaskJ\right)}\mid\bm{x}_{m}^{\left(k,\vTaskJ\right)}\right)\right]\!\!\right)\label{Eq:RelaxedOne}\\[-0.8em]
 & \hspace{4em}+\mu_{1}\left\Vert \bm{S}\right\Vert _{\mathsf{F}}^{2}+\mu_{2}\left\Vert \bm{L}\right\Vert _{\mathsf{F}}^{2}\nonumber \\
 & \ \ \ \text{s.t.}\ \ \ \ \bm{A}_{\vTaskJ}\bm{\alpha}_{\vTaskJ}\leq\bm{b}_{\vTaskJ}\ \ \forall\vTaskJ\in\mathcal{I}_{\vCurrentRound}\nonumber \\
 & \ \ \ \ \ \ \ \ \ \ \ \bm{\lambda}_{\text{min}}\left(\bm{L}\bm{L}^{\mathsf{T}}\right)\geq p\ \ \text{and}\ \ \bm{\lambda}_{\text{max}}\left(\bm{L}\bm{L}^{\mathsf{T}}\right)\leq q\enspace.\nonumber 
\end{align}

\subsection{Online Formulation}

The optimization problem above can be mapped to the standard online
learning framework by unrolling $\bm{L}$ and $\bm{S}$ into a vector
$\bm{\theta}=[\text{vec}(\bm{L})\ \text{vec}(\bm{S})]^{\mathsf{T}}\in\mathbb{R}^{dk+k\vNumTotalTasks}$. Choosing $\bm{\Omega}_{0}(\bm{\theta}) =\mu_{2}\sum_{i=1}^{dk}\bm{\theta}_{i}^{2}+\mu_{1}\sum_{i=dk+1}^{dk+k\vNumTotalTasks}\bm{\theta}_{i}^{2}\enspace$, and
$\bm{\Omega}_{j}(\bm{\theta}) =\bm{\Omega}_{j-1}(\bm{\theta})+\eta_{\vTaskJ}l_{\vTaskJ}(\bm{\theta})$, we can write the safe lifelong policy search problem (Eq.~\eqref{Eq:RelaxedOne}) as:
\begin{align}
\bm{\theta}_{\vCurrentRound+1} & =\arg\min_{\bm{\theta}\in\mathcal{K}}\bm{\Omega}_{\vCurrentRound}(\bm{\theta})\enspace,\label{Eq:Online}
\end{align}
where $\mathcal{K}\subseteq\mathbb{R}^{dk+k\vNumTotalTasks}$ is the set of allowable policies under the given safety constraints.
Note that the loss for task $\vTaskJ$ can be written as a bilinear
product in $\bm{\theta}$: 
\[
l_{\vTaskJ}(\bm{\theta})=-\frac{1}{n_{\vTaskJ}}\sum_{k=1}^{n_{\vTaskJ}}\sum_{m=0}^{M_{\vTaskJ}-1}\!\!\!\log\!\left[\pi_{\bm{\Theta_{\bm{L}}}\bm{\Theta}_{\bm{s}_{\vTaskJ}}}^{\left(\vTaskJ\right)}\!\!\left(\bm{u}_{m}^{\left(k,\ \vTaskJ\right)}\mid\bm{x}_{m}^{\left(k,\ \vTaskJ\right)}\right)\!\right]
\]
\begin{align*}
\bm{\Theta}_{\bm{L}}=\left[\!\begin{array}{ccc}
\bm{\theta}_{1} & \!\hdots\! & \bm{\theta}_{d(k-1)+1}\\
\vdots & \!\vdots\! & \vdots\\
\bm{\theta}_{d} & \!\hdots\! & \bm{\theta}_{dk}
\end{array}\!\right],\ \bm{\Theta}_{\bm{s}_{\vTaskJ}}=\left[\!\begin{array}{c}
\bm{\theta}_{dk+1}\\
\vdots\\
\bm{\theta}_{(d+1)k+1}
\end{array}\!\right].
\end{align*}

We see that the problem in Eq.~\eqref{Eq:Online} is equivalent
to Eq.~\eqref{Eq:RelaxedOne} by noting
that at $\vCurrentRound$ rounds, $\bm{\Omega}_{\vCurrentRound}=\sum_{j=1}^{\vCurrentRound}\eta_{\vTaskJ}l_{\vTaskJ}(\bm{\theta})+\bm{\Omega}_{0}(\bm{\theta})$.  


\section{Online Learning Method}
\label{Sec:SolutionStrategy} 

We solve Eq.~\eqref{Eq:Online}
in two steps. First, we determine the unconstrained solution 
$\tilde{\bm{\theta}}_{\vCurrentRound+1}$  when $\mathcal{K}=\mathbb{R}^{dk+k\vNumTotalTasks}$ (see Sect.~\ref{Sec:Unconstrained}). Given $\tilde{\bm{\theta}}_{\vCurrentRound+1}$,
we derive the constrained solution 
$\hat{{\bm{\theta}}}_{\vCurrentRound+1}$ by learning a projection 
$\text{Proj}_{\bm{\Omega}_{\vCurrentRound},\mathcal{K}}\left(\tilde{\bm{\theta}}_{\vCurrentRound+1}\right)$ 
to the constraint set $\mathcal{K}\subseteq\mathbb{R}^{dk+k\vNumTotalTasks}$,
which amounts to minimizing the Bregman divergence over $\bm{\Omega}_{\vCurrentRound}(\bm{\theta})$ (see Sect.~\ref{Sec:Constraint})%
\footnote{In Sect.~\ref{Sec:Constraint}, we linearize the
loss around the constrained solution of the previous round
to increase stability and ensure convergence. Given the linear losses,
it suffices to solve the Bregman divergence over the regularizer, reducing the  
computational cost.%
}.  The complete approach is given in Algorithm~\ref{alg:SafeOnlinePG} and is available as a software implementation on the authors' websites.


\subsection{Unconstrained Policy Solution}

\label{Sec:Unconstrained} Although Eq.~\eqref{Eq:RelaxedOne}
is not jointly convex in both $\bm{L}$ and $\bm{S}$, it is separably convex (for log-concave policy distributions). Consequently,
we follow an alternating optimization approach, first computing $\bm{L}$ while holding $\bm{S}$ fixed, and then updating
$\bm{S}$ given the acquired $\bm{L}$. We detail this process for two popular PG learners, eREINFORCE~\cite{Williams92simplestatistical} and eNAC~\cite{Peters20081180}. The derivations of the update rules below can be found in Appendix~\ref{App:Update}. 

These updates are governed by learning rates $\beta$ and $\lambda$ that decay over time; $\beta$ and $\lambda$ can be chosen using line-search methods as discussed by~\citet{Boyd}. In our experiments, we adopt a simple yet effective strategy, where $\beta=c j^{-1}$ and $\lambda = c j^{-1}$, with $0 < c < 1$. 

\textbf{Step 1: Updating $\bm{L}$~~} Holding $\bm{S}$ fixed, the
latent repository can be updated according to: 
\begin{align*}
\bm{L}_{\beta+1} & =\bm{L}_{\beta}-\eta_{\bm{L}}^{\beta}\nabla_{\bm{L}}\bm{e}_{\vCurrentRound}(\bm{L},\bm{S}) \hspace{-2em}& \text{(eREINFORCE)}\\
\bm{L}_{\beta+1} & =\bm{L}_{\beta}-\eta_{\bm{L}}^{\beta}\bm{G}^{-1}(\bm{L}_{\beta},\bm{S}_{\beta})\nabla_{\bm{L}}\bm{e}_{\vCurrentRound}(\bm{L},\bm{S}) \hspace{-2em}& \text{(eNAC)}
\end{align*}
with learning rate $\eta_{\bm{L}}^{\beta}\in\mathbb{R}$,
and $\bm{G}^{-1}(\bm{L},\bm{S})$ as the inverse of the Fisher
information matrix~\cite{Peters20081180}.

In the special case of Gaussian policies, the update for $\bm{L}$
can be derived in a closed form as $\bm{L}_{\beta+1}=\bm{Z}_{\bm{L}}^{-1}\bm{v}_{\bm{L}}$,
where
\begin{align*}
\bm{Z}_{\bm{L}} & = \!2\mu_{2}\bm{I}_{dk\times dk}\!+\!\sum_{j=1}^{\vCurrentRound}\!\frac{\eta_{\vTaskJ}}{n_{\vTaskJ}\sigma_{\vTaskJ}^{2}}\!\sum_{k=1}^{n_{\vTaskJ}}\!\sum_{m=0}^{M_{\vTaskJ}\!-1}\!\!\!\text{vec}\!\left(\!\bm{\Phi}\bm{s}_{\vTaskJ}^{\mathsf{T}}\!\right)\!\!\left(\!\bm{\Phi}^{\mathsf{T}}\!\otimes\!\bm{s}_{\vTaskJ}^{\mathsf{T}}\!\right)\\
\bm{v}_{\bm{L}} & =\sum_{j}\frac{\eta_{\vTaskJ}}{n_{\vTaskJ}\sigma_{\vTaskJ}^{2}}\sum_{k=1}^{n_{\vTaskJ}}\sum_{m=0}^{M_{\vTaskJ}\!-1}\!\!\!\text{vec}\left(\bm{u}_{m}^{\left(k,\ \vTaskJ\right)}\bm{\Phi}\bm{s}_{\vTaskJ}^{\mathsf{T}}\right) \enspace ,
\end{align*}
$\sigma_{\vTaskJ}^{2}$ is the covariance of the Gaussian
policy for a task $\vTaskJ$, and $\bm{\Phi}=\bm{\Phi}\left(\bm{x}_{m}^{\left(k,\ \vTaskJ\right)}\right)$
denotes the state features.

\textbf{Step 2: Updating $\bm{S}$~~} Given the fixed basis $\bm{L}$, the coefficient
matrix $\bm{S}$ is updated column-wise for all $\vTaskJ\in\mathcal{I}_{\vCurrentRound}$:
\begin{align*}
\bm{s}_{\lambda+1}^{(\vTaskJ)} & =\bm{s}_{\lambda+1}^{(\vTaskJ)}-\eta_{\bm{S}}^{\lambda}\nabla_{\bm{s}_{\vTaskJ}}e_{\vCurrentRound}(\bm{L},\bm{S}) \hspace{-3em}& \text{(eREINFORCE)}\\
\bm{s}_{\lambda+1}^{(\vTaskJ)} & =\bm{s}_{\lambda+1}^{(\vTaskJ)}-\eta_{\bm{S}}^{\lambda}\bm{G}^{-1}(\bm{L}_{\beta},\bm{S}_{\beta})\nabla_{\bm{s}_{\vTaskJ}}e_{\vCurrentRound}(\bm{L},\bm{S}) \hspace{-3em}& \text{(eNAC)}
\end{align*}
with learning rate $\eta_{\bm{S}}^{\lambda}\in\mathbb{R}$.  For Gaussian policies, the closed-form of the update
is $\bm{s}_{\vTaskJ}=\bm{Z}_{\bm{s}_{\vTaskJ}}^{-1}\bm{v}_{\bm{s}_{\vTaskJ}}$, where
\begin{align*}
\bm{Z}_{\bm{s}_{\vTaskJ}} & =2\mu_{1}\bm{I}_{k\times k}+\sum_{\vTaskK=\vTaskJ}\frac{\eta_{\vTaskJ}}{n_{\vTaskJ}\sigma_{\vTaskJ}^{2}}\sum_{k=1}^{n_{\vTaskJ}}\sum_{m=0}^{M_{\vTaskJ}\!-1}\!\!\!\bm{L}^{\mathsf{T}}\bm{\Phi}\bm{\Phi}^{\mathsf{T}}\bm{L}\\
\bm{v}_{\vTaskJ} & =\sum_{\vTaskK=\vTaskJ}\frac{\eta_{\vTaskJ}}{n_{\vTaskJ}\sigma_{\vTaskJ}^{2}}\sum_{k=1}^{n_{\vTaskJ}}\sum_{m=0}^{M_{\vTaskJ}\!-1}\!\!\!\bm{u}_{m}^{\left(k,\ \vTaskJ\right)}\bm{L}^{\mathsf{T}}\bm{\Phi} \enspace .
\end{align*}

\subsection{Constrained Policy Solution}

\label{Sec:Constraint} Once we have obtained the unconstrained solution $\tilde{\bm{\theta}}_{\vCurrentRound+1}$ (which
satisfies Eq.~\eqref{Eq:Online}, but can lead to policy parameters
in unsafe regions), we then derive the constrained solution to ensure safe policies.  We learn a projection $\text{Proj}_{\bm{\Omega}_{\vCurrentRound},\mathcal{K}}\left(\tilde{\bm{\theta}}_{\vCurrentRound+1}\right)$ from $\tilde{\bm{\theta}}_{\vCurrentRound+1}$ to the constraint set: 
\begin{equation}
\hat{\bm{\theta}}_{\vCurrentRound+1}=\arg\min_{\bm{\theta}\in\mathcal{K}}\mathcal{B}_{\bm{\Omega}_{\vCurrentRound},\mathcal{K}}\left(\bm{\theta},\tilde{\bm{\theta}}_{\vCurrentRound+1}\right)\label{Eq:Bregman} \enspace ,
\end{equation}
where $\mathcal{B}_{\bm{\Omega}_{\vCurrentRound},\mathcal{K}}\!\left(\!\bm{\theta},\tilde{\bm{\theta}}_{\vCurrentRound+1}\!\right)$
is the Bregman divergence over $\bm{\Omega}_{\vCurrentRound}$: 
\begin{align*}
\mathcal{B}_{\bm{\Omega}_{\vCurrentRound},\mathcal{K}}\!\left(\!\bm{\theta},\tilde{\bm{\theta}}_{\vCurrentRound+1}\right) & =\bm{\Omega}_{\vCurrentRound}(\bm{\theta})-\bm{\Omega}_{\vCurrentRound}(\tilde{\bm{\theta}}_{\vCurrentRound+1})\\
 & \hspace{1em}-\text{trace}\!\left(\!\nabla_{\bm{\theta}}\bm{\Omega}_{\vCurrentRound}\left(\bm{\theta}\right)\Big|_{\tilde{\bm{\theta}}_{\vCurrentRound+1}}\!\left(\bm{\theta}-\tilde{\bm{\theta}}_{\vCurrentRound+1}\!\right)\!\!\right)~.
\end{align*}
Solving Eq.~\eqref{Eq:Bregman} is computationally
expensive since $\bm{\Omega}_{\vCurrentRound}(\bm{\theta})$ includes
the sum back to the original round. To remedy this problem, ensure the 
stability of our approach, and guarantee that the constrained solutions
for all observed tasks lie within a bounded region, we linearize the
current-round loss function $l_{\vTaskR}(\bm{\theta})$ around
the \emph{constrained} solution of the previous round $\hat{\bm{\theta}}_{\vCurrentRound}$:
\begin{equation}
l_{\vTaskR}\left(\hat{\bm{u}}\right)=\hat{\bm{f}}_{\vTaskR}\Big|_{\hat{\bm{\theta}}_{\vCurrentRound}}^{\mathsf{T}}\hat{\bm{u}}\label{Eq:Linear} \enspace ,
\end{equation}
where 
\begin{align*}
\hat{\bm{f}}_{\vTaskR}\Big|_{\hat{\bm{\theta}}_{\vCurrentRound}}&=\left[\!\!\begin{array}{c}
\nabla_{\bm{\theta}}l_{\vTaskR}\left(\bm{\theta}\right)\Big|_{\hat{\bm{\theta}}_{\vCurrentRound}}\\
l_{\vTaskR}\left(\bm{\theta}\right)\Big|_{\hat{\bm{\theta}}_{{\vCurrentRound}}}-\nabla_{\bm{\theta}}l_{\vTaskR}\left(\bm{\theta}\right)\Big|_{\hat{\bm{\theta}}_{{\vCurrentRound}}}\hat{\bm{\theta}}_{\vCurrentRound}
\end{array}\!\!\right], & \hat{\bm{u}}&=\left[\!\begin{array}{c}
\bm{u}\\
1
\end{array}\!\right] ~.
\end{align*}

Given the above linear form, we can rewrite the optimization problem
in Eq.~\eqref{Eq:Bregman} as: 
\begin{align}
\hat{\bm{\theta}}_{\vCurrentRound+1} & =\arg\min_{\bm{\theta}\in\mathcal{K}}\mathcal{B}_{\bm{\Omega}_{0},\mathcal{K}}\left(\bm{\theta},\tilde{\bm{\theta}}_{\vCurrentRound+1}\right)\label{Eq:ConstraintBregman} \enspace .
\end{align}
Consequently, determining \emph{safe policies} for lifelong policy
search reinforcement learning amounts to solving: 
\begin{align*}
\min_{\bm{L},\bm{S}}\  & \mu_{1}\|\bm{S}\|_{\mathsf{F}}^{2}+\bm{\mu}_{2}\|\bm{L}\|_{\mathsf{F}}^{2} \\
&+2\mu_{1}\text{trace}\left({\bm{S}}^{\mathsf{T}}\Big|_{\tilde{\theta}_{r+1}}\bm{S}\right) +2\mu_{2}\text{trace}\left({\bm{L}}\Big|_{\tilde{\theta}_{r+1}}\bm{L}\right)\\
\text{s.t.}\ &\bm{A}_{\vTaskJ}\bm{L}\bm{s}_{\vTaskJ}\leq\bm{b}_{\vTaskJ}\ \ \ \forall\vTaskJ\in\mathcal{I}_{\vCurrentRound}\\
 & \bm{L}\bm{L}^{\mathsf{T}}\leq p\bm{I}\ \ \ \ \text{and}\ \ \ \bm{L}\bm{L}^{\mathsf{T}}\geq q\bm{I} \enspace .
\end{align*}

To solve the optimization problem above, we start by converting the
inequality constraints to equality constraints by introducing slack variables
$\bm{c}_{\vTaskJ}\geq0$. We also guarantee that these slack variables
are bounded by incorporating $\|\bm{c}_{\vTaskJ}\|\leq\bm{c}_{\text{max}},\ \forall\vTaskJ\in\{1,\dots,\vNumTotalTasks\}$:
\begin{align*}
\min_{\bm{L},\bm{S},\bm{C}} \ \ & \mu_{1}\|\bm{S}\|_{\mathsf{F}}^{2}+\mu_{2}\|\bm{L}\|_{\mathsf{F}}^{2}\\
& +2\mu_{2}\text{trace}\left({\bm{L}}^{\mathsf{T}}\Big|_{\tilde{\bm{\theta}}_{r+1}}\bm{L}\right) +2\mu_{1}\text{trace}\left({\bm{S}}^{\mathsf{T}}\Big|_{\tilde{\bm{\theta}}_{r+1}}\bm{S}\right)\\
 \text{s.t.} \ & \bm{A}_{\vTaskJ}\bm{L}\bm{s}_{\vTaskJ}=\bm{b}_{\vTaskJ}-\bm{c}_{\vTaskJ}\ \ \ \forall\vTaskJ\in\mathcal{I}_{\vCurrentRound}\\
 &  \bm{c}_{\vTaskJ}>0\ \ \ \ \text{and}\ \ \ \|\bm{c}_{\vTaskJ}\|_{2}\leq\bm{c}_{\text{max}}\ \ \ \forall\vTaskJ\in\mathcal{I}_{\vCurrentRound}\\
 &\bm{L}\bm{L}^{\mathsf{T}}\leq p\bm{I}\ \ \ \ \text{and}\ \ \ \bm{L}\bm{L}^{\mathsf{T}}\geq q\bm{I} \enspace .
\end{align*}
With this formulation, learning $\text{Proj}_{\bm{\Omega}_{\vCurrentRound},\mathcal{K}}\left(\tilde{\bm{\theta}}_{\vCurrentRound+1}\right)$
amounts to solving second-order cone and semi-definite programs.

\subsubsection{Semi-Definite Program for Learning $\bm{L}$}\label{Sec:SemiDefiniteProgram}

This section determines the constrained projection of the shared 
basis $\bm{L}$ given fixed $\bm{S}$ and $\bm{C}$. We show that $\bm{L}$
can be acquired efficiently, since this step can be relaxed to solving
a semi-definite program in $\bm{L}\bm{L}^{\mathsf{T}}$~\cite{Boyd}.
To formulate the semi-definite program, note that 
\begin{align*}
\text{trace}\!\left(\!{\bm{L}}^{\mathsf{T}}\Big|_{\tilde{\bm{\theta}}_{r+1}}\bm{L}\!\right) & =\sum_{\vTempIndexA=1}^{k}{{\bm{l}}_{r+1}^{(\vTempIndexA)}}^{\!\!\mathsf{T}}\Big|_{\tilde{\bm{\theta}}_{r+1}}\bm{l}_{\vTempIndexA} \\
&\leq\sum_{\vTempIndexA=1}^{k}\left\|{\bm{l}}_{r+1}^{(\vTempIndexA)}\Big|_{\tilde{\bm{\theta}}_{r+1}}\right\|_{2} \left\|\bm{l}_{\vTempIndexA}\right\|_{2}\\
 & \leq\sqrt{\sum_{\vTempIndexA=1}^{k}\left\|{\bm{l}}_{\vCurrentRound}^{(\vTempIndexA)}\Big|_{\tilde{\bm{\theta}}_{r+1}}\right\|_{2}^{2}}\sqrt{\sum_{\vTempIndexA=1}^{k}\left|\left|{\bm{l}}_{\vTempIndexA}\right|\right|_{2}^{2}}\\
 & =\left|\left|{\bm{L}}\Big|_{\tilde{\bm{\theta}}_{r+1}}\right|\right|_{\mathsf{F}}\sqrt{\text{trace}\left(\bm{L}\bm{L}^{\mathsf{T}}\right)} \enspace .
\end{align*}
From the constraint set, we recognize: 
\begin{align*}
\bm{s}_{\vTaskJ}^{\mathsf{T}}\bm{L}^{\mathsf{T}} & =\left(b_{\vTaskJ}-\bm{c}_{\vTaskJ}\right)^{\mathsf{T}}\left(\bm{A}_{\vTaskJ}^{\dagger}\right)^{\mathsf{T}}\\
\implies\bm{s}_{\vTaskJ}^{\mathsf{T}}\bm{L}^{\mathsf{T}}\bm{L}\bm{s}_{\vTaskJ} & =\bm{a}_{\vTaskJ}^{\mathsf{T}}\bm{a}_{\vTaskJ}\ \ \ \ \text{with}\ \ \ \ \ \bm{a}_{\vTaskJ}=\bm{A}_{\vTaskJ}^{\dagger}\left(\bm{b}_{\vTaskJ}-\bm{c}_{\vTaskJ}\right) ~.
\end{align*}
Since $\text{spectrum}\left(\bm{L}\bm{L}^{\mathsf{T}}\right)=\text{spectrum}\left(\bm{L}^{\mathsf{T}}\bm{L}\right)$, 
we can write: 
\begin{align*}
\min_{\bm{X}\subset\mathcal{S}_{++}} & \mu_{2}\text{trace}(\bm{X})+2\mu_{2}\left|\left|{\bm{L}}\Big|_{\tilde{\bm{\theta}}_{r+1}}\right|\right|_{\mathsf{F}}\sqrt{\text{trace}\left(\bm{X}\right)}\\
 \text{s.t.}\ \ &\bm{s}_{\vTaskJ}^{\mathsf{T}}\bm{X}\bm{s}_{\vTaskJ}=\bm{a}_{\vTaskJ}^{\mathsf{T}}\bm{a}_{\vTaskJ}\ \ \ \ \forall \vTaskJ\in\mathcal{I}_{\vCurrentRound}\\
 & \bm{X}\leq p\bm{I}~~~\text{and}~~~\bm{X}\geq q\bm{I}~,~~~\text{with}~~~\bm{X}=\bm{L}^{\mathsf{T}}\bm{L} ~.
\end{align*}

\subsubsection{Second-Order Cone Program for Learning Task Projections}\label{Sec:Cone}

Having determined $\bm{L}$, we can acquire $\bm{S}$ and
update $\bm{C}$ by solving a second-order cone
program~\cite{Boyd} of the following form: 
\begin{align*}
 & \min_{\bm{s}_{\vTaskOne},\dots,\bm{s}_{\vTaskJ},\bm{c}_{\vTaskOne},\dots,\bm{c}_{\vTaskJ}}\mu_{1}\sum_{j=1}^{{\vCurrentRound}}\|\bm{s}_{\vTaskJ}\|_{2}^{2}+2\mu_{1}\sum_{j=1}^{\vCurrentRound}{\bm{s}}_{\vTaskJ}^{\mathsf{T}}\Big|_{\hat{\bm{\theta}}_{\vCurrentRound}}\bm{s}_{\vTaskJ}\\
 & \text{s.t.}\ \ \ \ \ \ \bm{A}_{\vTaskJ}\bm{L}\bm{s}_{\vTaskJ}=\bm{b}_{\vTaskJ}-\bm{c}_{\vTaskJ}\\
 & \ \ \ \ \ \ \ \ \ \ \ \ \ \bm{c}_{\vTaskJ}>0\ \ \ \ \ \|\bm{c}_{\vTaskJ}\|_{2}^{2}\leq\bm{c}_{\text{max}}^{2}\ \forall\vTaskJ\in\mathcal{I}_{\vCurrentRound} \enspace .
\end{align*}

\begin{algorithm}[t]
\caption{Safe Online Lifelong Policy Search} \label{alg:SafeOnlinePG}          
\begin{algorithmic}[1]
\STATE \textbf{Inputs:} Total number of rounds $R$, weighting factor $\eta=\sfrac{1}{\sqrt{R}}$, regularization parameters $\mu_{1}$ and $\mu_{2}$, constraints  $p$ and $q$, number of latent basis vectors $k$. 
\STATE $\bm{S}=\text{zeros}(k, |\mathcal{T}|)$, $\bm{L} = \text{diag}_{k}(\zeta)$ with $p \leq \zeta^{2} \leq q$
\FOR{$j=1$ to $R$}
\STATE $t_{j} \leftarrow \text{sampleTask}()$, and update $\mathcal{I}_{j}$
\STATE Compute \textbf{unconstrained solution} $\tilde{\bm{\theta}}_{j+1}$ (Sect.~\ref{Sec:Unconstrained})
\STATE Fix $\bm{S}$ and $\bm{C}$, and update $\bm{L}$ (Sect.~\ref{Sec:SemiDefiniteProgram})
\STATE Use updated $\bm{L}$ to derive $\bm{S}$ and $\bm{C}$ (Sect.~\ref{Sec:Cone})
\ENDFOR
\STATE \textbf{Output:} Safety-constrained $\bm{L}$ and $\bm{S}$
\end{algorithmic}
\end{algorithm}

\section{Theoretical Guarantees} \label{sect:MainResults}

This section quantifies the performance of our approach by providing
formal analysis of the regret after $\vNumTotalRounds$ rounds. We
show that the safe lifelong reinforcement learner exhibits \emph{sublinear}
regret in the total number of rounds. Formally, we prove the following
theorem: 
\begin{theorem}[Sublinear Regret]\label{Theo:Main} 
After $\vNumTotalRounds$ rounds and choosing $\forall \vTaskJ \in \mathcal{I}_\vNumTotalRounds \  \  \eta_{\vTaskJ}\!=\eta=\frac{1}{\sqrt{\vNumTotalRounds}}$,
$\bm{L}\Big|_{\hat{\bm{\theta}}_{1}}=\text{diag}_{\bm{k}}(\zeta)$,
with $\text{diag}_{\bm{k}}(\cdot)$ being a diagonal matrix among
the $\bm{k}$ columns of $\bm{L}$, $p\leq\zeta^{2}\leq q$, and $\bm{S}\Big|_{\hat{\bm{\theta}}_{1}}=\bm{0}_{k\times \vNumTotalTasks}$,
the safe lifelong reinforcement learner exhibits sublinear regret
of the form: 
\[
\sum_{j=1}^{\vNumTotalRounds}l_{\vTaskJ}\left(\hat{\bm{\theta}}_{j}\right)-l_{\vTaskJ}(\bm{u})=\mathcal{O}\left(\sqrt{\vNumTotalRounds}\right) \text{\ \ for any $\bm{u}\in\mathcal{K}$.}
\]
\end{theorem}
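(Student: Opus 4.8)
The plan is to read Eq.~\eqref{Eq:Online} as an instance of Follow-the-Regularized-Leader (FTRL) with Bregman projections onto $\mathcal{K}$, and to bound the regret of the constrained iterates $\hat{\bm{\theta}}_j$ by combining three classical ingredients: a convexity (linearization) reduction, a Be-the-Leader telescoping argument for the unconstrained sequence, and the generalized Pythagorean inequality for the projection step. Throughout I assume the per-task losses $l_{\vTaskJ}$ are convex (which holds for the log-concave/Gaussian policies used above) with gradients bounded in norm by some $G$ over the feasible set, and that the regularizer $\bm{\Omega}_0$ is $\sigma$-strongly convex with $\sigma=2\min(\mu_1,\mu_2)$; since each $\bm{\Omega}_j$ is obtained from $\bm{\Omega}_0$ by adding convex losses, every $\bm{\Omega}_j$ inherits this strong convexity. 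The first reduction is to pass from the true-loss regret in the theorem to the regret of the linearized surrogates: by convexity, for every round $j$ and every $\bm{u}\in\mathcal{K}$ we have $l_{\vTaskJ}(\hat{\bm{\theta}}_j)-l_{\vTaskJ}(\bm{u})\le\nabla_{\bm{\theta}}l_{\vTaskJ}(\hat{\bm{\theta}}_j)^{\transpose}(\hat{\bm{\theta}}_j-\bm{u})$, so it suffices to bound the regret of the linear losses $\hat{\bm{f}}_{\vTaskJ}$ of Eq.~\eqref{Eq:Linear}, which is exactly what the algorithm manipulates in Sect.~\ref{Sec:Constraint}. A useful consequence is that, because a Bregman divergence is invariant under adding affine terms to its potential, after linearization the projection in Eq.~\eqref{Eq:Bregman} collapses to a projection with respect to the fixed quadratic $\bm{\Omega}_0$, as recorded in Eq.~\eqref{Eq:ConstraintBregman}; the projection geometry is therefore stationary across rounds.

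Next I would bound the unconstrained FTRL regret. Applying the Be-the-Leader lemma to the cumulative objectives $\bm{\Omega}_j=\bm{\Omega}_0+\sum_{i\le j}\eta\,l_{t_i}$ gives $\sum_{j=1}^{\vNumTotalRounds}\eta\,[l_{\vTaskJ}(\tilde{\bm{\theta}}_j)-l_{\vTaskJ}(\bm{u})]\le\bm{\Omega}_0(\bm{u})-\bm{\Omega}_0(\tilde{\bm{\theta}}_1)+\sum_{j=1}^{\vNumTotalRounds}\eta\,[l_{\vTaskJ}(\tilde{\bm{\theta}}_j)-l_{\vTaskJ}(\tilde{\bm{\theta}}_{j+1})]$. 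The residual stability term is controlled by strong convexity: since $\tilde{\bm{\theta}}_{j+1}$ minimizes $\bm{\Omega}_j=\bm{\Omega}_{j-1}+\eta\,l_{\vTaskJ}$ while $\tilde{\bm{\theta}}_j$ minimizes $\bm{\Omega}_{j-1}$, $\sigma$-strong convexity yields $\|\tilde{\bm{\theta}}_j-\tilde{\bm{\theta}}_{j+1}\|\le\eta G/\sigma$, whence each summand is at most $\eta^2 G^2/\sigma$ via the $G$-Lipschitz bound.

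To move from $\tilde{\bm{\theta}}_j$ to the constrained $\hat{\bm{\theta}}_j$, I would invoke the generalized Pythagorean inequality for the $\bm{\Omega}_0$-Bregman projection, $\mathcal{B}_{\bm{\Omega}_0,\mathcal{K}}(\bm{u},\hat{\bm{\theta}}_j)\le\mathcal{B}_{\bm{\Omega}_0,\mathcal{K}}(\bm{u},\tilde{\bm{\theta}}_j)$ for every $\bm{u}\in\mathcal{K}$ (see~\citet{AYBKSSz13}), which shows the projection never increases divergence to a feasible comparator and hence preserves the telescoping bound. Collecting terms and dividing by $\eta$ produces a regret of the form $\frac{\bm{\Omega}_0(\bm{u})-\bm{\Omega}_0(\hat{\bm{\theta}}_1)}{\eta}+\frac{\vNumTotalRounds\,\eta\,G^2}{\sigma}$. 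The stated initialization $\bm{L}\big|_{\hat{\bm{\theta}}_1}=\text{diag}_{\bm{k}}(\zeta)$ with $p\le\zeta^2\le q$ and $\bm{S}\big|_{\hat{\bm{\theta}}_1}=\bm{0}_{k\times\vNumTotalTasks}$ makes $\bm{\Omega}_0(\hat{\bm{\theta}}_1)$ a fixed constant and guarantees feasibility, so the first term is $\mathcal{O}(1/\eta)$ and the second is $\mathcal{O}(\vNumTotalRounds\,\eta)$. Choosing $\eta=1/\sqrt{\vNumTotalRounds}$ balances the two and gives the claimed $\mathcal{O}(\sqrt{\vNumTotalRounds})$.

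I expect the main obstacle to be establishing the uniform gradient bound $G$, i.e. Lipschitz continuity of the policy-gradient losses over the feasible region: the $\log$-policy losses depend on trajectories, states, and actions, so a finite $G$ demands that the feasible parameters, and the trajectories they induce, remain bounded. This is precisely where the spectral constraints $p\le\bm{\lambda}_{\text{min}}(\bm{L}\bm{L}^{\transpose})$ and $\bm{\lambda}_{\text{max}}(\bm{L}\bm{L}^{\transpose})\le q$ together with the bounded slack $\|\bm{c}_{\vTaskJ}\|\le\bm{c}_{\text{max}}$ enter, confining $\hat{\bm{\theta}}_j$ to a compact set on which the gradients can be bounded. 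A secondary subtlety is verifying that the projection inequality remains valid even though $\bm{\Omega}_{\vCurrentRound}$ changes with $\vCurrentRound$; here the affine-invariance of the Bregman divergence after linearization is what removes the round dependence and reduces everything to the single fixed potential $\bm{\Omega}_0$.
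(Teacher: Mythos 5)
Your skeleton --- FTRL with linearized losses, a Be-the-Leader/stability argument under strong convexity of $\bm{\Omega}_{0}$, a Bregman--Pythagorean step for the projection, and balancing with $\eta = 1/\sqrt{R}$ --- is the same route the paper takes in Sect.~\ref{sect:MainResults} (its displayed chain of inequalities is exactly this argument, imported from \citet{AYBKSSz13}), and your observation that affine-invariance of the Bregman divergence reduces the projection to the fixed potential $\bm{\Omega}_{0}$ correctly mirrors Eq.~\eqref{Eq:ConstraintBregman}. But your very first reduction contains a genuine gap: the inequality $l_{t_{j}}(\hat{\bm{\theta}}_{j})-l_{t_{j}}(\bm{u})\leq\nabla_{\bm{\theta}}l_{t_{j}}(\hat{\bm{\theta}}_{j})^{\mathsf{T}}(\hat{\bm{\theta}}_{j}-\bm{u})$ requires $l_{t_{j}}$ to be convex in $\bm{\theta}$, and it is not. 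The loss is convex in the policy parameters $\bm{\alpha}_{t_{j}}$, but $\bm{\alpha}_{t_{j}}=\bm{L}\bm{s}_{t_{j}}$ is bilinear in $\bm{\theta}=[\text{vec}(\bm{L})\ \text{vec}(\bm{S})]^{\mathsf{T}}$; the paper itself stresses that Eq.~\eqref{Eq:RelaxedOne} is only \emph{separably} convex, not jointly convex. The same false premise underlies your claim that each $\bm{\Omega}_{j}$ ``inherits'' strong convexity by adding convex losses. The paper does not derive the linearization from convexity: it \emph{assumes} piecewise-linear losses (Eq.~\eqref{Eq:Linear}, with operating points $\hat{\bm{\theta}}_{r}$ produced by the algorithm) as an explicit modeling hypothesis stated at the start of Sect.~\ref{sect:MainResults}, and proves the regret bound for those losses. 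Without either that hypothesis or a genuine convexity argument, your reduction collapses.

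The second gap is the uniform gradient bound $G$, which you flag as ``the main obstacle'' but do not establish; in the paper this is not a side condition but the bulk of the proof (its Lemmas 1--4). Compactness of $\mathcal{K}$ alone is not how the bound is obtained --- it has to be threaded through the bilinear structure. Concretely, the paper (i) bounds $\|\nabla_{\bm{\alpha}_{t_{j}}}l_{t_{j}}\|_{2}$ for Gaussian policies in terms of $u_{\max}$, $\bm{\Phi}_{\max}$, $M_{t_{j}}$, $\sigma_{t_{j}}^{2}$, using that the active safety constraints force $\bm{\alpha}_{t_{k}}=\bm{A}_{t_{k}}^{+}(\bm{b}_{t_{k}}-\bm{c}_{t_{k}})$ with $\|\bm{c}_{t_{k}}\|\leq\bm{c}_{\max}$; (ii) converts $\nabla_{\bm{\alpha}}$ into $\nabla_{\bm{\theta}}$ by the chain rule, which introduces the factors $d\,\|\bm{s}_{t_{j}}\|_{2}^{2}+\|\bm{L}\|_{\mathsf{F}}^{2}$; (iii) controls these via $\bm{s}_{t_{j}}=\bm{L}^{+}\bm{A}_{t_{j}}^{+}(\bm{b}_{t_{j}}-\bm{c}_{t_{j}})$, the pseudo-inverse estimate $\|\bm{L}^{+}\|_{2}\leq\|\bm{L}\|_{\mathsf{F}}/p$ coming from $\bm{\lambda}_{\min}(\bm{L}\bm{L}^{\mathsf{T}})\geq p$, and $\|\bm{L}\|_{\mathsf{F}}^{2}\leq qd$ coming from $\bm{\lambda}_{\max}(\bm{L}\bm{L}^{\mathsf{T}})\leq q$; and (iv) bounds $\|\hat{\bm{\theta}}_{r}\|_{2}$ the same way to control the affine part of $\hat{\bm{f}}_{t_{r}}$ in Eq.~\eqref{Eq:fHatFinal}. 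These steps yield the explicit constants $\bm{\gamma}_{1},\bm{\gamma}_{2},\bm{\gamma}_{5}$ in which the final bound is stated. Without carrying out (i)--(iv), your proposal amounts to ``standard FTRL gives $\mathcal{O}(\sqrt{R})$ whenever the linearized losses are uniformly bounded,'' which is precisely the part of the theorem that needed proof.
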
 


\textbf{Proof Roadmap:} The remainder of this section completes our proof of Theorem~\ref{Theo:Main}; further details are given in Appendix~\ref{appendix:proofs}. 
We assume linear losses for all tasks in the constrained case in accordance
with Sect.~\ref{Sec:Constraint}. Although linear losses for
policy search RL are too restrictive given a single
operating point, as discussed previously, we remedy this problem by
generalizing to the case of piece-wise linear losses, where the linearization
operating point is a resultant of the optimization problem. 
To bound the regret, we need to bound the dual Euclidean norm
(which is the same as the Euclidean norm) of the gradient of the loss function,
then prove Theorem~\ref{Theo:Main} by bounding: (1) task $\vTaskJ$'s
gradient loss (Sect.~\ref{Sec:itGradient}), and (2) linearized
losses with respect to $\bm{L}$ and $\bm{S}$ (Sect.~\ref{Sec:LinearBound}).

\subsection{Bounding $\vTaskJ$'s Gradient Loss}
\label{Sec:itGradient} 

We start by stating essential lemmas for Theorem~\ref{Theo:Main}; due to space constraints, proofs for all lemmas are available in the supplementary material.
Here, we bound the gradient of a loss function $l_{\vTaskJ}(\bm{\theta})$
at round $\vCurrentRound$ under Gaussian policies%
\footnote{Please note that derivations for other forms of log-concave policy
distributions could be derived in similar manner. In this work, we
focus on Gaussian policies since they cover a broad spectrum of real-world
applications.}.  

\begin{assumption}\label{Ass:Pol} 
We assume that the policy for a task $\vTaskJ$ is Gaussian, the action set $\mathcal{U}$ is bounded by $\bm{u}_{\max}$,
and the feature set is upper-bounded by $\bm{\Phi}_{\max}$. 
\end{assumption}


\begin{lemma}\label{Lemma:Lemma1} Assume task $\vTaskJ$'s policy at round $\vCurrentRound$ is given by 
$\pi_{\bm{\alpha}_{\vTaskJ}}^{\left(\vTaskJ\right)}\!\left(\!\bm{u}_{m}^{\left(k,\ \vTaskJ\right)}|\bm{x}_{m}^{\left(k,\ \vTaskJ\right)}\!\right)\!\Big|_{\hat{\bm{\theta}}_{\vCurrentRound}}\!=\mathcal{N}\!\left(\!\bm{\alpha}_{\vTaskJ}^{\mathsf{T}}\Big|_{\hat{\bm{\theta}}_{\vCurrentRound}}\bm{\Phi}\left(\!\bm{x}_{m}^{\left(k,\ \vTaskJ\right)}\right)\!,\bm{\sigma}_{\vTaskJ}\!\right)$, 
for states $\bm{x}_{m}^{\left(k,\ \vTaskJ\right)}\in\mathcal{X}_{\vTaskJ}$
and actions $\bm{u}_{m}^{\left(k,\ \vTaskJ\right)}\in\mathcal{U}_{\vTaskJ}$. 
For \\
$l_{\vTaskJ}\!\!\left(\bm{\alpha}_{\vTaskJ}\right)=-\frac{1}{n_{\vTaskJ}}\displaystyle\sum_{k=1}^{n_{\vTaskJ}}\!\sum_{m=0}^{M_{\vTaskJ}-1}\!\!\!\log\left[\pi_{\bm{\alpha}_{\vTaskJ}}^{\left(\vTaskJ\right)}\!\left(\!\bm{u}_{m}^{\left(k,\ \vTaskJ\right)}|\bm{x}_{m}^{\left(k,\ \vTaskJ\right)}\!\right)\!\right]$,~the gradient $\nabla_{\!\bm{\alpha}_{\vTaskJ}}l_{\vTaskJ}\!\!\left(\!\bm{\alpha}_{\vTaskJ}\!\right)\!\Big|_{\hat{\bm{\theta}}_{\vCurrentRound}}$ satisfies $\left|\left|\nabla_{\!\bm{\alpha}_{\vTaskJ}}l_{\vTaskJ}\!\!\left(\!\bm{\alpha}_{\vTaskJ}\!\right)\!\Big|_{\hat{\bm{\theta}}_{\vCurrentRound}}\right|\right|_{2}\leq$
\begin{align*}
 \frac{M_{\vTaskJ}}{\sigma_{\vTaskJ}^{2}}\Bigg(\!u_{\max}+\!\!\!\max_{\vTaskK\in\mathcal{I}_{\vCurrentRound-1}}\!\!\!\left\{ \left|\left|\bm{A}_{\vTaskK}^{+}\right|\right|_{2}\left(\left|\left|\bm{b}_{\vTaskK}\right|\right|_{2}+\bm{c}_{\max}\right)\!\right\} \bm{\Phi}_{\max}\!\Bigg)\bm{\Phi}_{\max}
\end{align*}
for all trajectories and all tasks, with $u_{\max}\!=\!\displaystyle\max_{k,m}\left\{ \left|\bm{u}_{m}^{\left(k,\ \vTaskJ\right)}\right|\right\}$ and $\bm{\Phi}_{\max}\!=\!\displaystyle\max_{k,m}\left\{ \left|\left|\bm{\Phi}\left(\!\bm{x}_{m}^{\left(k,\ \vTaskJ\!\right)}\right)\right|\right|_{2}\right\}$.
\end{lemma}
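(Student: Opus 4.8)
The plan is to differentiate the Gaussian log-likelihood in closed form and then control each factor of the resulting expression using the boundedness of Assumption~\ref{Ass:Pol} together with the safety constraints. First I would write $\log \pi_{\bm{\alpha}_{\vTaskJ}}^{(\vTaskJ)}\!\left(\bm{u}_m^{(k,\vTaskJ)} \mid \bm{x}_m^{(k,\vTaskJ)}\right) = -\frac{1}{2\sigma_{\vTaskJ}^2}\left(\bm{u}_m^{(k,\vTaskJ)} - \bm{\alpha}_{\vTaskJ}^{\transpose}\bm{\Phi}(\bm{x}_m^{(k,\vTaskJ)})\right)^2 + \text{const}$, so that each summand has gradient $\nabla_{\bm{\alpha}_{\vTaskJ}}\log\pi = \frac{1}{\sigma_{\vTaskJ}^2}\left(\bm{u}_m^{(k,\vTaskJ)} - \bm{\alpha}_{\vTaskJ}^{\transpose}\bm{\Phi}\right)\bm{\Phi}$. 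Summing over the $n_{\vTaskJ}$ trajectories and $M_{\vTaskJ}$ timesteps, the gradient of $l_{\vTaskJ}$ is the $\tfrac{1}{n_{\vTaskJ}}$-averaged double sum of these terms.

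Next I would apply the triangle inequality across the double sum and Cauchy--Schwarz to each summand, bounding $\|\nabla_{\bm{\alpha}_{\vTaskJ}}l_{\vTaskJ}\|_2$ by $\frac{1}{n_{\vTaskJ}\sigma_{\vTaskJ}^2}\sum_{k}\sum_m \left(|\bm{u}_m^{(k,\vTaskJ)}| + \|\bm{\alpha}_{\vTaskJ}\|_2\|\bm{\Phi}\|_2\right)\|\bm{\Phi}\|_2$. Assumption~\ref{Ass:Pol} then lets me replace $|\bm{u}_m^{(k,\vTaskJ)}|$ by $u_{\max}$ and $\|\bm{\Phi}\|_2$ by $\bm{\Phi}_{\max}$ uniformly; the sum over the $n_{\vTaskJ}$ trajectories cancels the $1/n_{\vTaskJ}$ prefactor and the sum over the $M_{\vTaskJ}$ timesteps produces the leading factor $M_{\vTaskJ}/\sigma_{\vTaskJ}^2$, leaving only $\|\bm{\alpha}_{\vTaskJ}\|_2$ to be controlled.

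The remaining and main step is to bound $\|\bm{\alpha}_{\vTaskJ}\|_2$ at the evaluation point $\hat{\bm{\theta}}_{\vCurrentRound}$. Here I would exploit that $\hat{\bm{\theta}}_{\vCurrentRound}$ is the \emph{constrained} solution from the previous round, so that for every observed task the equality form of the safety constraint from Sect.~\ref{Sec:Constraint}, namely $\bm{A}_{\vTaskK}\bm{\alpha}_{\vTaskK} = \bm{b}_{\vTaskK} - \bm{c}_{\vTaskK}$ with $\|\bm{c}_{\vTaskK}\|_2 \leq \bm{c}_{\max}$, holds. Solving via the pseudoinverse gives $\bm{\alpha}_{\vTaskK} = \bm{A}_{\vTaskK}^{+}(\bm{b}_{\vTaskK} - \bm{c}_{\vTaskK})$, whence submultiplicativity and the triangle inequality yield $\|\bm{\alpha}_{\vTaskK}\|_2 \leq \|\bm{A}_{\vTaskK}^{+}\|_2(\|\bm{b}_{\vTaskK}\|_2 + \bm{c}_{\max})$. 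Taking the maximum over $\vTaskK \in \mathcal{I}_{\vCurrentRound-1}$ makes the bound uniform over which task is played, and substituting it into the expression from the previous paragraph reproduces exactly the claimed inequality.

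I expect the principal obstacle to be this third step, and specifically justifying that the parameters evaluated at $\hat{\bm{\theta}}_{\vCurrentRound}$ genuinely inherit the constraint bound, i.e. that the linearize-then-project scheme of Sect.~\ref{Sec:Constraint} keeps every column $\bm{\alpha}_{\vTaskK} = \bm{L}\bm{s}_{\vTaskK}$ inside the safe set so the pseudoinverse identity applies. Some care is also needed when $\bm{A}_{\vTaskK}$ is rank-deficient (the pseudoinverse then recovers only the minimum-norm solution) and in reconciling the scalar treatment of $\sigma_{\vTaskJ}^2$ with a possibly vector-valued action, but these are routine once the constraint-inheritance claim is established.
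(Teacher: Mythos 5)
Your proposal is correct and follows essentially the same route as the paper's own proof: a closed-form gradient of the Gaussian log-likelihood, triangle inequality plus Cauchy--Schwarz to isolate $u_{\max}$, $\bm{\Phi}_{\max}$, and $\left\|\bm{\alpha}_{\vTaskJ}\right\|_{2}$, and then the equality-constraint identity $\bm{\alpha}_{\vTaskK}=\bm{A}_{\vTaskK}^{+}\left(\bm{b}_{\vTaskK}-\bm{c}_{\vTaskK}\right)$ with the left pseudo-inverse to bound $\left\|\bm{\alpha}_{\vTaskK}\right\|_{2}\leq\left\|\bm{A}_{\vTaskK}^{+}\right\|_{2}\left(\left\|\bm{b}_{\vTaskK}\right\|_{2}+\bm{c}_{\max}\right)$, maximized over observed tasks. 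The caveats you flag (constraint inheritance at $\hat{\bm{\theta}}_{\vCurrentRound}$, rank of $\bm{A}_{\vTaskK}$) are indeed glossed over by the paper, which simply asserts the equality constraints hold at the projected solution.
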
 


\subsection{Bounding Linearized Losses }
\label{Sec:LinearBound} 

As discussed previously, we linearize the
loss of task $\vTaskR$ 
around the constraint
solution of the previous round $\hat{\bm{\theta}}_{\vCurrentRound}$.
To acquire the regret bounds in Theorem~\ref{Theo:Main}, the next
step is to bound the dual norm, $\left\|\hat{\bm{f}}_{\vTaskR}\Big|_{\hat{\bm{\theta}}_{\vCurrentRound}}\right\|_{2}^{\star}=\left\|\hat{\bm{f}}_{\vTaskR}\Big|_{\hat{\bm{\theta}}_{\vCurrentRound}}\right\|_{2}$
of Eq.~\eqref{Eq:Linear}. It can be easily seen 
\begin{align}
\left\|\hat{\bm{f}}_{\vTaskR}\Big|_{\hat{\bm{\theta}}_{\vCurrentRound}}\right\|_{2} & \leq\underbrace{\left|l_{\vTaskR}\left(\bm{\theta}\right)\Big|_{\hat{\theta}_{\vCurrentRound}}\right|}_{\text{constant}}+\underbrace{\left\|\nabla_{\bm{\theta}}l_{\vTaskR}\left(\bm{\theta}\right)\Big|_{\hat{\bm{\theta}}_{\vCurrentRound}}\right\|_{2}}_{\text{Lemma~\ref{Lemma:GradientOne}}}\label{Eq:fhat}\\
 & \hspace{4em}+\left\|\nabla_{\bm{\theta}}l_{\vTaskR}(\bm{\theta})\Big|_{\hat{\bm{\theta}}_{\vCurrentRound}}\right\|_{2}\times\underbrace{\left\|\hat{\bm{\theta}}_{\vCurrentRound}\right\|_{2}}_{\text{Lemma~\ref{Lemma:Theta}}}\nonumber  \enspace .
\end{align}

Since $\left|l_{\vTaskR}\left(\bm{\theta}\right)\Big|_{\hat{\bm{\theta}}_{\vCurrentRound}}\right|$
can be bounded by $\bm{\delta}_{l_{\vTaskR}}$ (see Sect.~\ref{Sec:Background}),\\[-0.5em]
the next step is to bound $\left\|\nabla_{\bm{\theta}}l_{\vTaskR}\left(\bm{\theta}\right)\Big|_{\hat{\bm{\theta}}_{\vCurrentRound}}\right\|_{2}$,
and $\|\hat{\bm{\theta}}_{\vCurrentRound}\|_{2}$.


\begin{lemma}\label{Lemma:GradientOne} The norm of the gradient
of the loss function evaluated at $\hat{\bm{\theta}}_{\vCurrentRound}$
satisfies 
\begin{align*}
 & \left|\left|\nabla_{\bm{\theta}}l_{\vTaskR}\left(\bm{\theta}\right)\Big|_{\hat{\bm{\theta}}_{\vCurrentRound}}\right|\right|_{2}^{2}\leq\Big|\Big|\nabla_{\bm{\alpha}_{\vTaskR}}l_{\vTaskR}\left(\bm{\theta}\right)\Big|_{\hat{\bm{\theta}}_{\vCurrentRound}}\Big|\Big|_{2}^{2}\Bigg(q\times d\\
 & \hspace{0em}\left(\sfrac{2d}{p^{2}}\max_{\vTaskK\in\mathcal{I}_{\vCurrentRound-1}}\left\{ \left|\left|\bm{A}_{\vTaskK}^{\dagger}\right|\right|_{2}^{2}\left(\left|\left|\bm{b}_{\vTaskK}\right|\right|_{2}^{2}+\bm{c}_{\text{max}}^{2}\right)\right\} +1\right)\!\Bigg) \enspace .
\end{align*}
\end{lemma}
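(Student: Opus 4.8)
The plan is to expand $\nabla_{\bm{\theta}}l_{\vTaskR}$ by the chain rule through the factorization $\bm{\alpha}_{\vTaskR}=\bm{L}\bm{s}_{\vTaskR}$, and then reduce every resulting factor to a quantity already controlled by the spectral constraints on $\bm{L}$ (namely $p\le\lambda(\bm{L}\bm{L}^{\transpose})\le q$) and by the active safety constraint. Since $l_{\vTaskR}$ depends on $\bm{\theta}$ only through $\bm{L}$ and through the single column $\bm{s}_{\vTaskR}$ (all other columns of $\bm{S}$ leave $l_{\vTaskR}$ invariant, so their gradient blocks vanish), I would first write the unrolled Euclidean norm as a sum of block norms, $\|\nabla_{\bm{\theta}}l_{\vTaskR}\|_2^2=\|\nabla_{\bm{L}}l_{\vTaskR}\|_{\mathsf{F}}^2+\|\nabla_{\bm{s}_{\vTaskR}}l_{\vTaskR}\|_2^2$.

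For the two blocks, differentiating $l_{\vTaskR}(\bm{L}\bm{s}_{\vTaskR})$ gives the rank-one outer product $\nabla_{\bm{L}}l_{\vTaskR}=(\nabla_{\bm{\alpha}_{\vTaskR}}l_{\vTaskR})\,\bm{s}_{\vTaskR}^{\transpose}$ and $\nabla_{\bm{s}_{\vTaskR}}l_{\vTaskR}=\bm{L}^{\transpose}\nabla_{\bm{\alpha}_{\vTaskR}}l_{\vTaskR}$. The first has Frobenius norm exactly $\|\nabla_{\bm{\alpha}_{\vTaskR}}l_{\vTaskR}\|_2\,\|\bm{s}_{\vTaskR}\|_2$; the second is bounded by $\|\bm{L}\|_2\,\|\nabla_{\bm{\alpha}_{\vTaskR}}l_{\vTaskR}\|_2$. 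Both carry the common factor $\|\nabla_{\bm{\alpha}_{\vTaskR}}l_{\vTaskR}\|_2^2$, which is precisely the prefactor in the statement (and which Lemma~\ref{Lemma:Lemma1} bounds separately), so what remains is to control $\|\bm{s}_{\vTaskR}\|_2^2$ and $\|\bm{L}\|_2^2$. Relaxing the operator norm of $\bm{L}$ to the Frobenius norm, $\|\bm{L}\|_2^2\le\|\bm{L}\|_{\mathsf{F}}^2=\mathrm{trace}(\bm{L}\bm{L}^{\transpose})\le qd$ (using that $\bm{L}\bm{L}^{\transpose}$ has $d$ eigenvalues, each at most $q$), produces exactly the ``$+1$'' contribution once the common $qd$ is factored out.

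The crux is bounding $\|\bm{s}_{\vTaskR}\|_2$, because $\bm{s}_{\vTaskR}$ is never constrained directly---only the induced policy $\bm{\alpha}_{\vTaskR}=\bm{L}\bm{s}_{\vTaskR}$ is, through the safety constraint. I would invert the factorization with the Moore--Penrose pseudoinverse, $\bm{s}_{\vTaskR}=(\bm{L}^{\transpose}\bm{L})^{-1}\bm{L}^{\transpose}\bm{\alpha}_{\vTaskR}$, which is well defined precisely because the lower spectral constraint $\lambda_{\min}(\bm{L}^{\transpose}\bm{L})\ge p>0$ (the reason this bound was imposed in Eq.~\eqref{Eq:OriginalOpti}) makes $\bm{L}^{\transpose}\bm{L}$ invertible. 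Controlling $\|(\bm{L}^{\transpose}\bm{L})^{-1}\|$ via $1/p$ and $\|\bm{L}^{\transpose}\|$ via the Frobenius relaxation $\|\bm{L}\|_{\mathsf{F}}\le\sqrt{qd}$, where each relaxation of an operator norm to a Frobenius norm costs a factor of $\sqrt{d}$, yields $\|\bm{s}_{\vTaskR}\|_2\le\tfrac{d\sqrt{q}}{p}\,\|\bm{\alpha}_{\vTaskR}\|_2$; squaring is the source of the $qd^2/p^2$. For $\|\bm{\alpha}_{\vTaskR}\|_2$ I would use the equality form of the safety constraint from Sect.~\ref{Sec:Constraint}, $\bm{L}\bm{s}_{\vTaskR}=\bm{A}_{\vTaskR}^{\dagger}(\bm{b}_{\vTaskR}-\bm{c}_{\vTaskR})$, giving $\|\bm{\alpha}_{\vTaskR}\|_2\le\|\bm{A}_{\vTaskR}^{\dagger}\|_2(\|\bm{b}_{\vTaskR}\|_2+\bm{c}_{\max})$ because $\|\bm{c}_{\vTaskR}\|_2\le\bm{c}_{\max}$. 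Applying $(a+b)^2\le 2(a^2+b^2)$ and passing to the maximum over $\vTaskK\in\mathcal{I}_{\vCurrentRound-1}$ then gives $\|\bm{s}_{\vTaskR}\|_2^2\le\tfrac{2qd^2}{p^2}\max_{\vTaskK}\{\|\bm{A}_{\vTaskK}^{\dagger}\|_2^2(\|\bm{b}_{\vTaskK}\|_2^2+\bm{c}_{\max}^2)\}$.

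Finally I would add the two blocks, $\|\nabla_{\bm{\theta}}l_{\vTaskR}\|_2^2\le\|\nabla_{\bm{\alpha}_{\vTaskR}}l_{\vTaskR}\|_2^2(\|\bm{s}_{\vTaskR}\|_2^2+qd)$, substitute the two estimates, and factor out the common $qd$ to obtain the stated form $qd(\tfrac{2d}{p^2}\max_{\vTaskK}\{\cdots\}+1)$. The only genuinely delicate step is the $\|\bm{s}_{\vTaskR}\|_2$ estimate: it relies entirely on $p>0$ so that $\bm{L}$ is bounded below and invertible on its row space, and the proliferating factors of $d$ (and of $q$) are artifacts of relaxing operator norms to Frobenius norms rather than anything tight. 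Everything else is an application of submultiplicativity, the rank-one Frobenius identity, and the prescribed constraints.
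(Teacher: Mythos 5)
Your proof is correct and takes essentially the same route as the paper's: both factor the gradient through $\bm{\alpha}_{\vTaskR}=\bm{L}\bm{s}_{\vTaskR}$ to pull out $\left\|\nabla_{\bm{\alpha}_{\vTaskR}}l_{\vTaskR}\right\|_{2}^{2}$, both recover $\bm{s}_{\vTaskR}=\left(\bm{L}^{\transpose}\bm{L}\right)^{-1}\bm{L}^{\transpose}\bm{\alpha}_{\vTaskR}$ and bound $\bm{\alpha}_{\vTaskR}$ via the active equality constraint $\bm{A}_{\vTaskR}\bm{L}\bm{s}_{\vTaskR}=\bm{b}_{\vTaskR}-\bm{c}_{\vTaskR}$, and both use the spectral constraints to get $\left\|\left(\bm{L}^{\transpose}\bm{L}\right)^{-1}\right\|_{2}\leq 1/p$ and $\left\|\bm{L}\right\|_{\mathsf{F}}^{2}\leq qd$. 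The only differences are in where the slack sits: your block decomposition gives the tighter $\left\|\bm{s}_{\vTaskR}\right\|_{2}^{2}+\left\|\bm{L}\right\|_{\mathsf{F}}^{2}$ (the paper uses the looser $d\left\|\bm{s}_{\vTaskR}\right\|_{2}^{2}+\left\|\bm{L}\right\|_{\mathsf{F}}^{2}$), while your estimate $\left\|\bm{s}_{\vTaskR}\right\|_{2}\leq\frac{d\sqrt{q}}{p}\left\|\bm{\alpha}_{\vTaskR}\right\|_{2}$ carries an unneeded extra $\sqrt{d}$ (the chain of inequalities actually yields $\sqrt{qd}/p$, and the stated justification that Frobenius relaxation ``costs $\sqrt{d}$'' is backwards, though the weaker inequality remains valid) --- the two discrepancies exactly cancel, landing on the paper's stated constant.
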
 
To finalize the bound of $\left\|\hat{\bm{f}}_{\vTaskR}\Big|_{\hat{\bm{\theta}}_{\vCurrentRound}}\right\|_{2}$
as needed for deriving the regret, we must derive an upper-bound for
$\|\hat{\bm{\theta}}_{\vCurrentRound}\|_{2}$: 
\begin{lemma}\label{Lemma:Theta}
The L$_{2}$ norm of the constraint solution at round $\vCurrentRound-1$,
$\|\hat{\bm{\theta}}_{\vCurrentRound}\|_{2}^{2}$ is bounded by
\begin{align*}
\|\hat{\bm{\theta}}_{\vCurrentRound}\|_{2}^{2} & \leq q\times d\Bigg[1+\left|\mathcal{I}_{\vCurrentRound-1}\right|\frac{1}{p^{2}}\\[-0.5em]
 & \hspace{3em}\max_{\vTaskK\in\mathcal{I}_{\vCurrentRound-1}}\left\{ \left|\left|\bm{A}_{\vTaskK}^{\dagger}\right|\right|_{2}^{2}\left(\left|\left|\bm{b}_{\vTaskK}\right|\right|_{2}+\bm{c}_{\text{max}}\right)^{2}\right\} \Bigg] \enspace ,
\end{align*}
where $\left|\mathcal{I}_{\vCurrentRound-1}\right|$ is the number of
unique tasks observed so far. 
\end{lemma}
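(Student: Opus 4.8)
The plan is to exploit the fact that $\hat{\bm{\theta}}_{\vCurrentRound}$ is the unrolled vector $[\text{vec}(\bm{L})\ \text{vec}(\bm{S})]^{\mathsf{T}}$, so that $\|\hat{\bm{\theta}}_{\vCurrentRound}\|_{2}^{2}=\|\bm{L}\|_{\mathsf{F}}^{2}+\|\bm{S}\|_{\mathsf{F}}^{2}$, and to bound the two Frobenius terms separately using the guarantees that the projection step of Sect.~\ref{Sec:Constraint} enforces on the constrained solution. Throughout I use that $\hat{\bm{\theta}}_{\vCurrentRound}$ satisfies $\lambda_{\min}(\bm{L}\bm{L}^{\mathsf{T}})\geq p$, $\lambda_{\max}(\bm{L}\bm{L}^{\mathsf{T}})\leq q$, together with $\bm{A}_{\vTaskJ}\bm{L}\bm{s}_{\vTaskJ}=\bm{b}_{\vTaskJ}-\bm{c}_{\vTaskJ}$ and $\|\bm{c}_{\vTaskJ}\|_{2}\leq\bm{c}_{\text{max}}$ for every $\vTaskJ\in\mathcal{I}_{\vCurrentRound-1}$. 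The basis term is immediate: writing $\|\bm{L}\|_{\mathsf{F}}^{2}=\text{trace}(\bm{L}\bm{L}^{\mathsf{T}})=\sum_{i=1}^{d}\lambda_{i}(\bm{L}\bm{L}^{\mathsf{T}})$ and applying the upper eigenvalue constraint $\lambda_{i}\leq q$ across all $d$ eigenvalues gives $\|\bm{L}\|_{\mathsf{F}}^{2}\leq qd$, which is exactly the leading $qd$ term (the ``$1$'' inside the bracket).

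Next I would bound the coefficient term $\|\bm{S}\|_{\mathsf{F}}^{2}=\sum_{\vTaskJ\in\mathcal{I}_{\vCurrentRound-1}}\|\bm{s}_{\vTaskJ}\|_{2}^{2}\leq|\mathcal{I}_{\vCurrentRound-1}|\max_{\vTaskJ}\|\bm{s}_{\vTaskJ}\|_{2}^{2}$, where the columns outside $\mathcal{I}_{\vCurrentRound-1}$ vanish by construction. For a fixed task I would use the equality constraint to write $\bm{\alpha}_{\vTaskJ}=\bm{L}\bm{s}_{\vTaskJ}=\bm{A}_{\vTaskJ}^{\dagger}(\bm{b}_{\vTaskJ}-\bm{c}_{\vTaskJ})$, so that submultiplicativity and the triangle inequality give $\|\bm{\alpha}_{\vTaskJ}\|_{2}\leq\|\bm{A}_{\vTaskJ}^{\dagger}\|_{2}(\|\bm{b}_{\vTaskJ}\|_{2}+\bm{c}_{\text{max}})$. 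To turn this into a bound on $\|\bm{s}_{\vTaskJ}\|_{2}$ I would invert $\bm{L}$ on its row space: since $\lambda_{\min}(\bm{L}\bm{L}^{\mathsf{T}})\geq p$, the smallest nonzero singular value of $\bm{L}$ is at least $\sqrt{p}$, whence $\|\bm{s}_{\vTaskJ}\|_{2}\leq\frac{1}{\sqrt{p}}\|\bm{L}\bm{s}_{\vTaskJ}\|_{2}=\frac{1}{\sqrt{p}}\|\bm{\alpha}_{\vTaskJ}\|_{2}$. Squaring and summing yields $\|\bm{S}\|_{\mathsf{F}}^{2}\leq|\mathcal{I}_{\vCurrentRound-1}|\frac{1}{p}\max_{\vTaskK\in\mathcal{I}_{\vCurrentRound-1}}\{\|\bm{A}_{\vTaskK}^{\dagger}\|_{2}^{2}(\|\bm{b}_{\vTaskK}\|_{2}+\bm{c}_{\text{max}})^{2}\}$. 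Reconciling with the stated form then only requires pulling $qd$ out front and using $\frac{1}{p}\leq\frac{qd}{p^{2}}$, which holds because $q\geq p$ and $d\geq1$; combining with the $\|\bm{L}\|_{\mathsf{F}}^{2}$ bound produces the claimed inequality.

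The main obstacle is the inversion step $\|\bm{s}_{\vTaskJ}\|_{2}\leq\frac{1}{\sqrt{p}}\|\bm{L}\bm{s}_{\vTaskJ}\|_{2}$. Since $\bm{L}\in\mathbb{R}^{d\times k}$ with $k>d$ has a nontrivial null space, $\bm{L}\bm{s}_{\vTaskJ}$ controls only the component of $\bm{s}_{\vTaskJ}$ lying in the row space of $\bm{L}$, so the inequality fails for an arbitrary $\bm{s}_{\vTaskJ}$. I would close this gap by invoking that $\bm{s}_{\vTaskJ}$ is the minimizer of the norm-penalized second-order cone program of Sect.~\ref{Sec:Cone}: its minimum-norm character forces the null-space component to zero, so equivalently $\bm{s}_{\vTaskJ}=(\bm{A}_{\vTaskJ}\bm{L})^{\dagger}(\bm{b}_{\vTaskJ}-\bm{c}_{\vTaskJ})$ sits in the row space of $\bm{L}$, and the restricted smallest eigenvalue $p$ governs the inversion. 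The remaining looseness in the constants is harmless, since any over-estimate still yields a round-independent constant, which is all that the regret argument of Theorem~\ref{Theo:Main} needs.
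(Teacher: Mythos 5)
Your proposal is correct and takes essentially the same route as the paper's own proof: decompose $\|\hat{\bm{\theta}}_{r}\|_{2}^{2}$ into the $\bm{L}$-block plus the per-task $\bm{s}_{t}$-blocks, bound $\|\bm{L}\|_{\mathsf{F}}^{2}\leq q\,d$ from the eigenvalue constraint, and bound each $\|\bm{s}_{t}\|_{2}$ by combining the safety equality $\bm{A}_{t}\bm{L}\bm{s}_{t}=\bm{b}_{t}-\bm{c}_{t}$ with a spectral inversion of $\bm{L}$ controlled by $p$. The only deviations are minor: the paper inverts via $\|\bm{L}^{+}\|_{2}\leq\|\bm{L}\|_{\mathsf{F}}/p\leq\sqrt{qd}/p$, which yields the $qd/p^{2}$ factor directly (your tighter $1/\sqrt{p}$ singular-value bound, subsequently relaxed, is equivalent for the lemma's purposes), and the row-space/null-space caveat you flag is the very step the paper silently assumes away when it writes $\bm{s}_{t}=\bm{L}^{+}\bm{A}_{t}^{+}\left(\bm{b}_{t}-\bm{c}_{t}\right)$.
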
 
Given the previous two lemmas, we can prove the bound
for $\left\|\hat{\bm{f}}_{\vTaskR}\Big|_{\hat{\bm{\theta}}_{\vCurrentRound}}\right\|_{2}$:
\begin{lemma} The L$_{2}$ norm of the linearizing
term of $l_{\vTaskR}(\bm{\theta})$ around $\hat{\bm{\theta}}_{\vCurrentRound}$,
$\left\|\hat{\bm{f}}_{\vTaskR}\Big|_{\hat{\bm{\theta}}_{\vCurrentRound}}\right\|_{2}$,
is bounded by 
\begin{align}
\hspace{-.6em}\left\|\hat{\bm{f}}_{\vTaskR}\Big|_{\hat{\bm{\theta}}_{\vCurrentRound}}\right\|_{2} \!\!& \leq \left\|\nabla_{\bm{\theta}}l_{\vTaskR}\!(\bm{\theta})\Big|_{\hat{\bm{\theta}}_{\vCurrentRound}}\right\|_{2}\!\!\left(\!1\!+\!\|\hat{\bm{\theta}}_{\vCurrentRound}\|_{2}\!\right)+\left|l_{\vTaskR}\!(\bm{\theta})\Big|_{\hat{\bm{\theta}}_{\vCurrentRound}}\right|\label{Eq:fHatFinal}\\
 & \leq\bm{\gamma}_{1}(\vCurrentRound)\left(1+\bm{\gamma}_{2}(\vCurrentRound)\right)+\bm{\delta}_{l_{\vTaskR}}\nonumber \enspace ,
\end{align}
where $\bm{\delta}_{l_{\vTaskR}}$ is the constant upper-bound
on $\left|l_{\vTaskR}(\bm{\theta})\Big|_{\hat{\bm{\theta}}_{\vCurrentRound}}\right|$,
and 
\begin{align*}
 & \bm{\gamma}_{1}(\vCurrentRound)=\frac{1}{n_{\vTaskJ}\sigma_{\vTaskJ}^{2}}\Bigg[\Bigg(u_{\text{max}}\\
 & \hspace{1em}+\max_{\vTaskK\in\mathcal{I}_{\vCurrentRound-1}}\left\{ \left|\left|\bm{A}_{\vTaskK}^{+}\right|\right|_{2}\left(\left|\left|\bm{b}_{\vTaskK}\right|\right|_{2}+\bm{c}_{\text{max}}\right)\right\} \bm{\Phi}_{\text{max}}\Bigg)\bm{\Phi}_{\text{max}}\Bigg]\\
 & \hspace{1em}\times\left(\frac{d}{p}\sqrt{2q}\sqrt{\!\!\max_{\vTaskK\in\mathcal{I}_{\vCurrentRound-1}}\!\!\left\{\! \|\bm{A}_{\vTaskK}^{\dagger}\|_{2}^{2}\left(\|\bm{b}_{\vTaskK}\|_{2}^{2}+\bm{c}_{\text{max}}^{2}\right)\!\right\} }\!+\!\sqrt{qd}\right)\\
 & \bm{\gamma}_{2}(\vCurrentRound)\leq\sqrt{q\times d}\\
 & +\sqrt{\left|\mathcal{I}_{\vCurrentRound-1}\right|}\sqrt{\!1\!+\!\frac{1}{p^{2}}\max_{\vTaskK\in\mathcal{I}_{\vCurrentRound-1}}\!\!\left\{ \left|\left|\bm{A}_{\vTaskK}^{\dagger}\right|\right|_{2}^{2}\!\!\left(\left|\left|\bm{b}_{\vTaskK}\right|\right|_{2}+\bm{c}_{\text{max}}\right)^{2}\right\} }~.
\end{align*}
\end{lemma}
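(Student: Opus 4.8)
The plan is to read the bound off directly from the explicit block form of the linearizing vector. Recall that $\hat{\bm{f}}_{\vTaskR}\Big|_{\hat{\bm{\theta}}_{\vCurrentRound}}$ stacks the gradient $\nabla_{\bm{\theta}}l_{\vTaskR}(\bm{\theta})\Big|_{\hat{\bm{\theta}}_{\vCurrentRound}}$ on top of the single scalar $l_{\vTaskR}(\bm{\theta})\Big|_{\hat{\bm{\theta}}_{\vCurrentRound}}-\nabla_{\bm{\theta}}l_{\vTaskR}(\bm{\theta})\Big|_{\hat{\bm{\theta}}_{\vCurrentRound}}^{\transpose}\hat{\bm{\theta}}_{\vCurrentRound}$, so that its Euclidean norm equals $\sqrt{\|\nabla_{\bm{\theta}}l_{\vTaskR}\|_2^2+(\,\cdot\,)^2}$. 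Since the chosen norm is the $\ell_2$ norm, the dual norm coincides with it and no separate dual computation is needed. First I would apply the elementary inequality $\sqrt{a^2+b^2}\le |a|+|b|$ to separate the gradient block from the scalar block, and then bound the scalar block by the triangle inequality followed by Cauchy--Schwarz, $\big|l_{\vTaskR}-\nabla_{\bm{\theta}}l_{\vTaskR}^{\transpose}\hat{\bm{\theta}}_{\vCurrentRound}\big|\le \big|l_{\vTaskR}\big|+\|\nabla_{\bm{\theta}}l_{\vTaskR}\|_2\,\|\hat{\bm{\theta}}_{\vCurrentRound}\|_2$. This produces exactly the three-term decomposition already displayed in Eq.~\eqref{Eq:fhat}.

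Collecting the two occurrences of $\|\nabla_{\bm{\theta}}l_{\vTaskR}\|_2$ then yields the compact form $\|\hat{\bm{f}}_{\vTaskR}\|_2\le \|\nabla_{\bm{\theta}}l_{\vTaskR}\|_2\big(1+\|\hat{\bm{\theta}}_{\vCurrentRound}\|_2\big)+\big|l_{\vTaskR}\big|$ of Eq.~\eqref{Eq:fHatFinal}. At this stage the lemma reduces to substituting three quantities I already control: I would use Lemma~\ref{Lemma:GradientOne} (which itself rests on the single-task gradient bound of Lemma~\ref{Lemma:Lemma1}) to set $\|\nabla_{\bm{\theta}}l_{\vTaskR}\|_2\le\bm{\gamma}_1(\vCurrentRound)$; use Lemma~\ref{Lemma:Theta} to control $\|\hat{\bm{\theta}}_{\vCurrentRound}\|_2$ and thereby define $\bm{\gamma}_2(\vCurrentRound)$; and invoke the standing boundedness of the losses from Sect.~\ref{Sec:Background} to write $\big|l_{\vTaskR}\big|\le\bm{\delta}_{l_{\vTaskR}}$. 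Assembling these gives $\|\hat{\bm{f}}_{\vTaskR}\|_2\le\bm{\gamma}_1(\vCurrentRound)\big(1+\bm{\gamma}_2(\vCurrentRound)\big)+\bm{\delta}_{l_{\vTaskR}}$, as claimed.

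I do not anticipate a conceptual obstacle, since the statement is essentially an assembly of the three preceding lemmas glued together by the triangle inequality and Cauchy--Schwarz. The step demanding the most care is the passage from the squared bound of Lemma~\ref{Lemma:Theta} to the closed form of $\bm{\gamma}_2(\vCurrentRound)$: I would take the square root of $\|\hat{\bm{\theta}}_{\vCurrentRound}\|_2^2\le qd\big[1+|\mathcal{I}_{\vCurrentRound-1}|\,p^{-2}\max_{\vTaskK}\{\cdots\}\big]$ and apply subadditivity $\sqrt{a+b}\le\sqrt{a}+\sqrt{b}$, making sure the factors $q$, $d$, $p$ and $|\mathcal{I}_{\vCurrentRound-1}|$ land in exactly the grouping claimed and that the inner $\max$ over $\mathcal{I}_{\vCurrentRound-1}$ is carried through the square root rather than loosened prematurely. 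The analogous substitution of Lemma~\ref{Lemma:Lemma1} into Lemma~\ref{Lemma:GradientOne} to produce $\bm{\gamma}_1(\vCurrentRound)$ is routine but similarly bookkeeping-heavy, and I would track the constants there with equal care.
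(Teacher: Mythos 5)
Your proposal is correct and takes essentially the same route as the paper's own proof: the same three-term decomposition of $\left\|\hat{\bm{f}}_{\vTaskR}\Big|_{\hat{\bm{\theta}}_{\vCurrentRound}}\right\|_{2}$ from Eq.~\eqref{Eq:fhat} (which you derive explicitly from the block structure of $\hat{\bm{f}}_{\vTaskR}$ via $\sqrt{a^2+b^2}\le|a|+|b|$ and Cauchy--Schwarz, where the paper simply cites it as already shown), followed by substituting Lemma~\ref{Lemma:GradientOne} for the gradient, Lemma~\ref{Lemma:Theta} for $\|\hat{\bm{\theta}}_{\vCurrentRound}\|_{2}$, and the constant $\bm{\delta}_{l_{\vTaskR}}$ for the loss. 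Your noted bookkeeping step, passing from the squared bounds to $\bm{\gamma}_{1}(\vCurrentRound)$ and $\bm{\gamma}_{2}(\vCurrentRound)$ via $\sqrt{a+b}\le\sqrt{a}+\sqrt{b}$, is exactly what the paper does as well.
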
 

\subsection{Completing the Proof of Sublinear Regret}

Given the lemmas in the previous section, we now can derive the
sublinear regret bound given in Theorem~\ref{Theo:Main}.
Using results developed by~\citet{AYBKSSz13}, it is easy to see that
\[
\nabla_{\bm{\theta}}\bm{\Omega}_{0}\left(\tilde{\bm{\theta}}_{j}\right)-\nabla_{\bm{\theta}}\bm{\Omega}_{0}\left(\tilde{\bm{\theta}}_{j+1}\right)=\eta_{\vTaskJ}\hat{\bm{f}}_{\vTaskJ}\Big|_{\hat{\bm{\theta}}_{j}} \enspace .
\]
From the convexity of the regularizer, we obtain: 
\begin{align*}
\bm{\Omega}_{0}\left(\hat{\bm{\theta}}_{j}\right)\geq\bm{\Omega}_{0}\left(\hat{\bm{\theta}}_{j+1}\right) & +\left\langle \nabla_{\bm{\theta}}\bm{\Omega}_{0}\left(\hat{\bm{\theta}}_{j+1}\right),\hat{\bm{\theta}}_{j}-\hat{\bm{\theta}}_{j+1}\right\rangle \\
 & +\frac{1}{2}\left|\left|\hat{\bm{\theta}}_{j}-\hat{\bm{\theta}}_{j+1}\right|\right|_{2}^{2} \enspace .
\end{align*}
We have: 
\[
\left\|\hat{\bm{\theta}}_{j}-\hat{\bm{\theta}}_{j+1}\right\|_{2}\leq\eta_{\vTaskJ}\left\|\hat{\bm{f}}_{\vTaskJ}\Big|_{\hat{\bm{\theta}}_{j}}\right\|_{2} \enspace .
\]
Therefore, for any $\bm{u}\in\mathcal{K}$ 
\begin{align*}
\sum_{j=1}^{\vCurrentRound}\eta_{\vTaskJ}\!\left(\!l_{\vTaskJ}\!\left(\hat{\bm{\theta}}_{j}\right)-l_{\vTaskJ}(\bm{u})\!\right) & \leq\sum_{j=1}^{\vCurrentRound}\eta_{\vTaskJ}\left\|\hat{\bm{f}}_{\vTaskJ}\Big|_{\hat{\bm{\theta}}_{j}}\right\|_{2}^{2}\\
 & \hspace{1.5em}+\bm{\Omega}_{0}(\bm{u})-\bm{\Omega}_{0}(\hat{\bm{\theta}}_{1}) \enspace .
\end{align*}
Assuming that $\forall \vTaskJ \ \eta_{\vTaskJ}=\eta$, we can derive: 
\begin{align*}
\sum_{j=1}^{\vCurrentRound}\left(l_{\vTaskJ}\left(\hat{\bm{\theta}}_{j}\right)-l_{\vTaskJ}(\bm{u})\right) & \leq\eta\sum_{j=1}^{\vCurrentRound}\left\|\hat{\bm{f}}_{\vTaskJ}\Big|_{\hat{\bm{\theta}}_{j}}\right\|_{2}^{2}\\
 & \hspace{2em}+\sfrac{1}{\eta}\left(\bm{\Omega}_{0}(\bm{u})-\bm{\Omega}_{0}(\hat{\bm{\theta}}_{1})\right) \enspace .
\end{align*}

The following lemma finalizes the proof of Theorem~\ref{Theo:Main}:
\begin{lemma} After $\vNumTotalRounds$ rounds with $\forall \vTaskJ \ \eta_{\vTaskJ}=\eta=\frac{1}{\sqrt{\vNumTotalRounds}}$,
for any $\bm{u}\in\mathcal{K}$ we have that $\sum_{j=1}^{\vNumTotalRounds}l_{\vTaskJ}(\hat{\bm{\theta}}_{j})-l_{\vTaskJ}(\bm{u})\leq\mathcal{O}\left(\sqrt{\vNumTotalRounds}\right)$.
\end{lemma}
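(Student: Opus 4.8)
The plan is to start from the inequality established immediately before the statement, which after setting $\vCurrentRound=\vNumTotalRounds$ and assuming the uniform weight $\eta_{\vTaskJ}=\eta$ reads
\begin{align*}
\sum_{j=1}^{\vNumTotalRounds}\left(l_{\vTaskJ}\left(\hat{\bm{\theta}}_{j}\right)-l_{\vTaskJ}(\bm{u})\right) & \leq\eta\sum_{j=1}^{\vNumTotalRounds}\left\|\hat{\bm{f}}_{\vTaskJ}\Big|_{\hat{\bm{\theta}}_{j}}\right\|_{2}^{2}+\frac{1}{\eta}\left(\bm{\Omega}_{0}(\bm{u})-\bm{\Omega}_{0}(\hat{\bm{\theta}}_{1})\right) \enspace .
\end{align*}
With the choice $\eta=\sfrac{1}{\sqrt{\vNumTotalRounds}}$ the prefactor of the summed term becomes $\sfrac{1}{\sqrt{\vNumTotalRounds}}$ and the prefactor of the second term becomes $\sqrt{\vNumTotalRounds}$, so it suffices to show that the per-round quantity $\|\hat{\bm{f}}_{\vTaskJ}|_{\hat{\bm{\theta}}_{j}}\|_{2}^{2}$ and the regularizer gap $\bm{\Omega}_{0}(\bm{u})-\bm{\Omega}_{0}(\hat{\bm{\theta}}_{1})$ are each bounded by a constant independent of $\vNumTotalRounds$. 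Summing the first then gives $\sfrac{1}{\sqrt{\vNumTotalRounds}}\cdot\vNumTotalRounds\cdot\text{const}=\mathcal{O}(\sqrt{\vNumTotalRounds})$, and the second is directly $\mathcal{O}(\sqrt{\vNumTotalRounds})$.

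For the summed term I would invoke the preceding lemma, $\|\hat{\bm{f}}_{\vTaskR}|_{\hat{\bm{\theta}}_{\vCurrentRound}}\|_{2}\leq\bm{\gamma}_{1}(\vCurrentRound)(1+\bm{\gamma}_{2}(\vCurrentRound))+\bm{\delta}_{l_{\vTaskR}}$. The key observation is that every quantity in $\bm{\gamma}_{1}$, $\bm{\gamma}_{2}$, and $\bm{\delta}_{l_{\vTaskR}}$ is fixed problem data independent of $\vNumTotalRounds$: the constants $u_{\max}$, $\bm{\Phi}_{\max}$, $\sigma$, $p$, $q$, $\bm{c}_{\max}$, and the maxima of the $\|\bm{A}_{\vTaskK}^{\dagger}\|_{2}$-type terms over $\mathcal{I}_{\vCurrentRound-1}$, which are in turn dominated by the corresponding maxima over the full finite task set. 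The only index-dependent factor is $|\mathcal{I}_{\vCurrentRound-1}|$ inside $\bm{\gamma}_{2}$, and since $\mathcal{I}_{\vCurrentRound-1}\subseteq\mathcal{T}$ we have $|\mathcal{I}_{\vCurrentRound-1}|\leq\vNumTotalTasks$, a constant independent of $\vNumTotalRounds$. Hence there is a constant $G$ (depending only on the task collection) with $\|\hat{\bm{f}}_{\vTaskJ}|_{\hat{\bm{\theta}}_{j}}\|_{2}^{2}\leq G$ for all $j$, yielding $\eta\sum_{j=1}^{\vNumTotalRounds}\|\hat{\bm{f}}_{\vTaskJ}|_{\hat{\bm{\theta}}_{j}}\|_{2}^{2}\leq\sqrt{\vNumTotalRounds}\,G$.

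For the regularizer gap I would use the stated initialization. Since $\bm{S}|_{\hat{\bm{\theta}}_{1}}=\bm{0}$ and $\bm{L}|_{\hat{\bm{\theta}}_{1}}=\text{diag}_{k}(\zeta)$ with $p\leq\zeta^{2}\leq q$, we get $\bm{\Omega}_{0}(\hat{\bm{\theta}}_{1})=\mu_{2}\|\bm{L}|_{\hat{\bm{\theta}}_{1}}\|_{\mathsf{F}}^{2}=\mu_{2}k\zeta^{2}\leq\mu_{2}kq$, a constant. For any $\bm{u}\in\mathcal{K}$, the constraint $\bm{\lambda}_{\text{max}}(\bm{L}\bm{L}^{\mathsf{T}})\leq q$ bounds $\|\bm{L}\|_{\mathsf{F}}^{2}=\text{trace}(\bm{L}\bm{L}^{\mathsf{T}})\leq kq$, while $\bm{\lambda}_{\text{min}}(\bm{L}\bm{L}^{\mathsf{T}})\geq p>0$ together with the safety relations $\bm{L}\bm{s}_{\vTaskJ}=\bm{A}_{\vTaskJ}^{\dagger}(\bm{b}_{\vTaskJ}-\bm{c}_{\vTaskJ})$ and $\|\bm{c}_{\vTaskJ}\|_{2}\leq\bm{c}_{\max}$ bounds each $\|\bm{s}_{\vTaskJ}\|_{2}$ and hence $\|\bm{S}\|_{\mathsf{F}}^{2}$. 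Thus $\bm{\Omega}_{0}(\bm{u})$ is bounded, so $\bm{\Omega}_{0}(\bm{u})-\bm{\Omega}_{0}(\hat{\bm{\theta}}_{1})\leq D$ for an $\vNumTotalRounds$-independent $D$, making the second term $\sqrt{\vNumTotalRounds}\,D$. Adding the two bounds gives $\sum_{j=1}^{\vNumTotalRounds}(l_{\vTaskJ}(\hat{\bm{\theta}}_{j})-l_{\vTaskJ}(\bm{u}))\leq\sqrt{\vNumTotalRounds}\,(G+D)=\mathcal{O}(\sqrt{\vNumTotalRounds})$.

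The step I expect to be the main obstacle is rigorously certifying the $\vNumTotalRounds$-independence of the per-round constant $G$: one must confirm that linearizing at the \emph{previous} constrained iterate $\hat{\bm{\theta}}_{\vCurrentRound}$ never lets the bound grow with the round index, which is precisely where the cap $|\mathcal{I}_{\vCurrentRound-1}|\leq\vNumTotalTasks$ and the uniform boundedness of $\hat{\bm{\theta}}_{\vCurrentRound}$ from Lemma~\ref{Lemma:Theta} are essential. A secondary subtlety is ensuring $\bm{\Omega}_{0}$ is genuinely bounded over $\mathcal{K}$, which relies on both the eigenvalue bounds on $\bm{L}$ and the bounded-slack safety constraints to pin down $\bm{S}$.
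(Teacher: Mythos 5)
Your proposal is correct and follows essentially the same route as the paper's own proof: starting from the regret inequality with the $\eta$-weighted decomposition, bounding $\bigl\|\hat{\bm{f}}_{\vTaskJ}\big|_{\hat{\bm{\theta}}_{j}}\bigr\|_{2}^{2}$ by an $\vNumTotalRounds$-independent constant via the preceding lemmas and the cap $|\mathcal{I}_{\vNumTotalRounds-1}|\leq\vNumTotalTasks$, bounding $\bm{\Omega}_{0}(\bm{u})-\bm{\Omega}_{0}(\hat{\bm{\theta}}_{1})$ using the stated initialization and the constraint set, and then setting $\eta=\sfrac{1}{\sqrt{\vNumTotalRounds}}$. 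Your only departures are cosmetic improvements: you make explicit the boundedness of $\bm{\Omega}_{0}$ over $\mathcal{K}$ (which the paper asserts as ``easy to see''), and you correctly write $\bm{\Omega}_{0}(\hat{\bm{\theta}}_{1})=\mu_{2}k\zeta^{2}$ where the paper has the apparent typo $\mu_{2}k\zeta$.
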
 
\begin{proof} From Eq.~\eqref{Eq:fHatFinal}, it follows that 
\begin{align*}
 & \left\|\hat{\bm{f}}_{\vTaskJ}\Big|_{\hat{\bm{\theta}}_{\vCurrentRound}}\right\|_{2}^{2}\leq\bm{\gamma}_{3}(\vNumTotalRounds)+4\bm{\gamma}_{1}^{2}(\vNumTotalRounds)\bm{\gamma}_{2}^{2}(\vNumTotalRounds)\\
 & \hspace{4em}\leq\bm{\gamma}_{3}(\vNumTotalRounds)+8\frac{d}{p^{2}}\bm{\gamma}_{1}^{2}(\vNumTotalRounds)qd \Bigg( 1 +\left|\mathcal{I}_{\vNumTotalRounds-1}\right|\\
 & \hspace{4em}\times\max_{\vTaskK\in\mathcal{I}_{\vNumTotalRounds-1}}\left\{ \|\bm{A}_{\vTaskK}^{\dagger}\|_{2}\left(\|\bm{b}_{\vTaskK}\|_{2}+\bm{c}_{\text{max}}\right)^{2}\right\} \Bigg)
\end{align*}
with $\bm{\gamma}_{3}(\vNumTotalRounds)=4\bm{\gamma}_{1}^{2}(\vNumTotalRounds)+2\max_{\vTaskJ\in\mathcal{I}_{\vNumTotalRounds-1}}\bm{\delta}_{\vTaskJ}^{2}$. 
Since $|\mathcal{I}_{\vNumTotalRounds-1}|\leq\vNumTotalTasks$, we have that 
$\left\|\hat{\bm{f}}_{\vTaskJ}\Big|_{\hat{\theta}_{\vCurrentRound}}\right\|_{2}^{2}\leq\bm{\gamma}_{5}(\vNumTotalRounds)\vNumTotalTasks$
with 
$\displaystyle\bm{\gamma}_{5}=8\sfrac{d}{p^{2}}q\bm{\gamma}_{1}^{2}(\vNumTotalRounds)\max_{\vTaskK\in\mathcal{I}_{\vNumTotalRounds-1}}\left\{ \|\bm{A}_{\vTaskK}^{\dagger}\|_{2}^{2}\left(\|\bm{b}_{\vTaskK}\|_{2}+\bm{c}_{\text{max}}\right)^{2}\right\}$.

Given that $\bm{\Omega}_{0}(\bm{u})\leq qd+\bm{\gamma}_{5}(\vNumTotalRounds)\vNumTotalTasks$,
with $\bm{\gamma}_{5}(\vNumTotalRounds)$ being a constant, we have:
\begin{align*}
\sum_{j=1}^{\vCurrentRound}\!\left(\!l_{\vTaskJ}\!\left(\!\hat{\bm{\theta}}_{j}\!\right)\!-\!l_{\vTaskJ}\!(\bm{u})\!\right) \leq \ & \eta\sum_{j=1}^{\vCurrentRound}\bm{\gamma}_{5}(\vNumTotalRounds)\vNumTotalTasks\\
 &\!\!+\frac{1}{\eta}\!\left(qd+\bm{\gamma}_{5}(\vNumTotalRounds)\vNumTotalTasks-\bm{\Omega}_{0}(\hat{\bm{\theta}}_{1})\!\right)~.
\end{align*}
\textbf{Initializing $\bm{L}$ and $\bm{S}$:} We initialize $\bm{L}\Big|_{\hat{\bm{\theta}}_{1}}\!=\text{diag}_{\bm{k}}(\zeta)$,
with $p\leq\zeta^{2}\leq q$ and $\bm{S}\Big|_{\hat{\bm{\theta}}_{1}}\!=\bm{0}_{k\times\vNumTotalTasks}$ to ensure the invertibility of $\bm{L}$ and that the constraints are
met. This leads to 
\begin{align*}
\sum_{j=1}^{\vCurrentRound}\!\left(\!l_{\vTaskJ}\!\left(\!\hat{\bm{\theta}}_{j}\!\right)\!-\!l_{\vTaskJ}\!(\bm{u})\!\right) \leq \ &\eta\sum_{j=1}^{\vCurrentRound}\bm{\gamma}_{5}(\vNumTotalRounds)\vNumTotalTasks\\
 &\!\!+\sfrac{1}{\eta}\left(qd+\bm{\gamma}_{5}(\vNumTotalRounds)\vNumTotalTasks-\mu_{2}k\zeta\right) ~.
\end{align*}
Choosing $\forall \vTaskJ \ \eta_{\vTaskJ}=\eta=\sfrac{1}{\sqrt{\vNumTotalRounds}}$,
we acquire sublinear regret, finalizing the statement of Theorem~\ref{Theo:Main}:
\begin{align*}
\sum_{j=1}^{\vCurrentRound}\!\left(\!l_{\vTaskJ}\!\left(\!\hat{\bm{\theta}}_{j}\!\right)\!-\!l_{\vTaskJ}\!(\bm{u})\!\right) & \leq\sfrac{1}{\sqrt{\vNumTotalRounds}}\bm{\gamma}_{5}(\vNumTotalRounds)\vNumTotalTasks \vNumTotalRounds\\
 & \hspace{1em}+\sqrt{\vNumTotalRounds}\left(qd+\bm{\gamma}_{5}(\vNumTotalRounds)\vNumTotalTasks-\mu_{2}k\zeta\right)\\
 & \hspace{-3em}\leq\sqrt{\vNumTotalRounds}\Big(\bm{\gamma}_{5}(\vNumTotalRounds)\vNumTotalTasks+qd\bm{\gamma}_{5}(\vNumTotalRounds)\vNumTotalTasks -\mu_{2}k\zeta\Big)\\
 & \hspace{-3em}\leq\mathcal{O}\left(\sqrt{\vNumTotalRounds}\right) \enspace . \\[-2.5em]
\end{align*}
\end{proof}



\section{Experimental Validation}

To validate the empirical performance of our method, we applied our safe online PG algorithm to learn multiple consecutive control tasks on three dynamical systems (Figure~\ref{Fig:Dynamical}).  To generate multiple tasks, we varied the parameterization of each system, yielding a set of control tasks from each domain with varying dynamics.  The optimal control policies for these systems vary widely with only minor changes in the system parameters, providing substantial diversity among the tasks within a single domain.
\begin{figure}[h!]
\centering 
\vspace{-.9em}
\includegraphics[trim = 10mm 130mm 10mm 65mm, clip, width=\columnwidth]{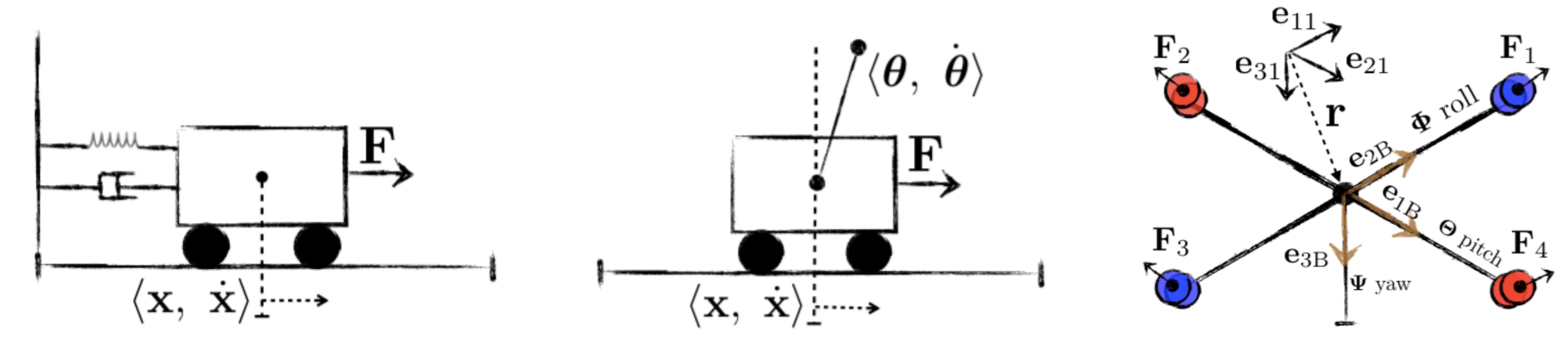}
\vspace{-2em}
\caption{Dynamical systems used in the experiments: \textit{a)} simple mass system (left), \textit{b)} cart-pole (middle), and \textit{c)} quadrotor unmanned aerial vehicle (right).}
\label{Fig:Dynamical}
\vspace{-0.5em}
\end{figure}

\textbf{Simple Mass Spring Damper:~~} The simple mass (SM) system is
characterized by three parameters: the spring constant $k$ in N/m,
the damping constant $d$ in Ns/m and the mass $m$ in kg. The system's
state is given by the position $\bm{x}$ and $\dot{\bm{x}}$ of the
mass, which varies according to a linear force $\bm{F}$. The goal is to train a policy for controlling the mass in a specific state $\bm{g}_{\text{ref}}=\langle\bm{x}_{\text{ref}},\dot{\bm{x}}_{\text{ref}}\rangle$.

 \textbf{Cart Pole:~~} The cart-pole (CP) has been used extensively
as a benchmark for evaluating RL methods~\cite{Busoniu}. CP dynamics
are characterized by the cart's mass $m_{c}$ in kg, the pole's mass
$m_{p}$ in kg, the pole's length in meters, and a damping parameter
$d$ in Ns/m. The state is given by the cart's position $\bm{x}$
and velocity $\dot{\bm{x}}$, as well as the pole's angle $\bm{\theta}$
and angular velocity $\dot{\bm{\theta}}$. The goal is to
train a policy that controls  the pole in an upright position.

\subsection{Experimental Protocol}

We generated 10 tasks for each domain by varying the system 
parameters to ensure a variety of tasks with diverse optimal policies, 
including those with highly chaotic dynamics that are difficult to control. 
We ran each experiment for a total of $\vNumTotalRounds$ rounds, varying from $150$ for the simple mass to $10,000$ for the quadrotor to train $\bm{L}$ and $\bm{S}$, as well as for updating the PG-ELLA and PG models. At each round $j$, the learner observed a task $\vTaskJ$ through 50 trajectories of 150 steps and updated $\bm{L}$ and $\bm{s}_{\vTaskJ}$.   The dimensionality $k$ of the latent space was chosen independently for each domain via cross-validation over 3 tasks, and the learning step size for 
each task domain was determined by a line search after gathering 10 trajectories of length 150.  We used eNAC, a standard PG algorithm, as the base learner.




We compared our approach to both standard PG (i.e., eNAC) and PG-ELLA~\cite{BouAmmar2014Online}, examining both the constrained and unconstrained variants of our algorithm.  We also varied the number of iterations in our alternating optimization from $10$ to $100$ to evaluate the effect of these inner iterations on the performance, as shown in Figures~\ref{Fig:ResBenchmark}~and~\ref{fig:Quad}.  For the two MTL algorithms (our approach and PG-ELLA), the policy parameters for each task $\vTaskJ$ were initialized
using the learned basis (i.e., $\bm{\alpha}_{\vTaskJ}=\bm{L}\bm{s}_{\vTaskJ}$).  We configured PG-ELLA as described by \citet{BouAmmar2014Online}, ensuring a fair comparison.  For the standard PG learner, we provided additional trajectories in order to ensure a fair comparison, as described below.

For the experiments with policy constraints, we generated a set of constraints $(\bm{A}_\vTask, \bm{b}_\vTask)$ for each task that restricted the policy parameters to pre-specified ``safe'' regions, as shown in Figures~\ref{fig:TrajSM} and~\ref{fig:TrajCP}.
We also tested different values for the constraints on $\bm{L}$, varying $p$ and $q$ between $0.1$ to
$10$; our approach showed robustness against this broad range, yielding similar average cost performance.

\subsection{Results on Benchmark Systems}

Figure~\ref{Fig:ResBenchmark} reports our results on the benchmark
simple mass and cart-pole systems. Figures~\ref{fig:PerfSM} and~\ref{fig:PerfCP}
depicts the performance of the learned policy in a lifelong learning
setting over consecutive unconstrained tasks, averaged over all 10 systems over 100 different initial conditions.  These results demonstrate that our approach is capable of outperforming both standard PG (which was provided with 50 {\em additional} trajectories each iteration to ensure a more fair comparison) and PG-ELLA, both in terms of initial performance and learning speed. These figures also show that the performance of our method increases as it is given more alternating iterations per-round for fitting $\bm{L}$ and $\bm{S}$.
\begin{figure*}[t!]
\centering
\vspace{-.5em}
\subfigure[Simple Mass]{
	\label{fig:PerfSM}
\includegraphics[width=0.25\textwidth,height=1.25in,clip,trim=0.0in 0.00in 0.65in 0.4in]{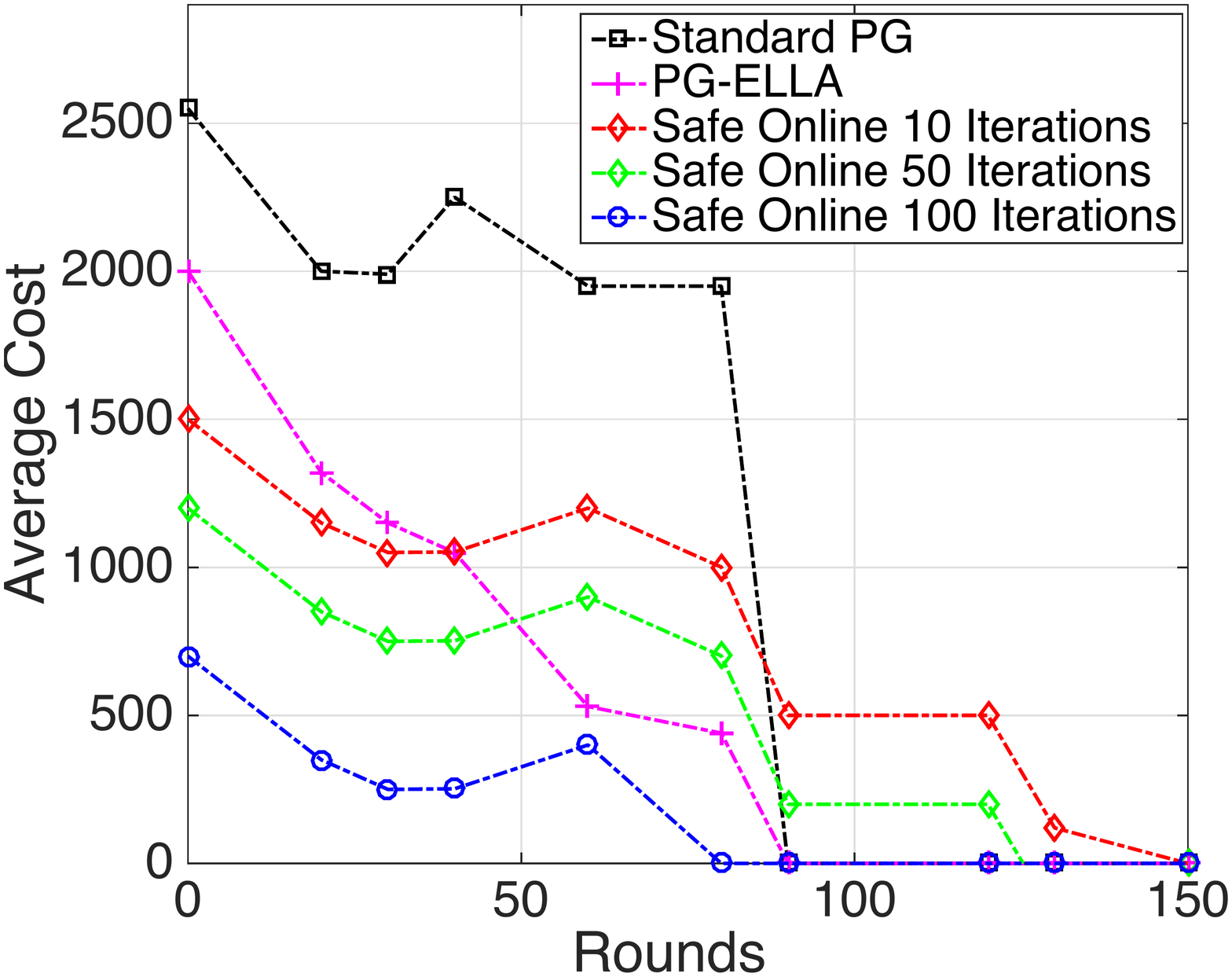}
}
\hfill\hspace{-1em}\hfill
\subfigure[Cart Pole]{
	\label{fig:PerfCP}
\includegraphics[width=0.25\textwidth,height=1.25in,clip,trim=0.0in 0.05in 0.55in 0.4in]{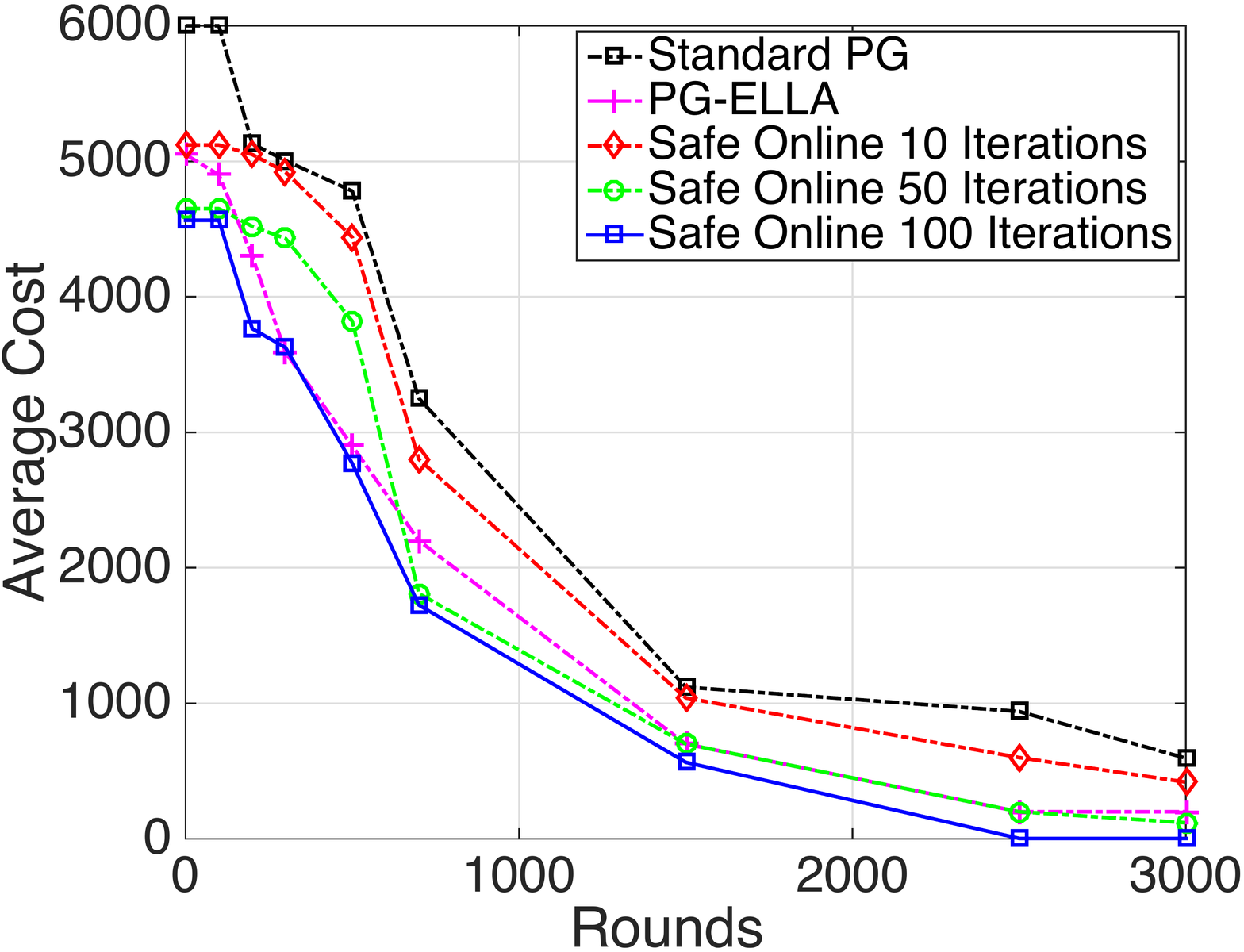}
}
\hfill\hspace{-1em}
\subfigure[Trajectory Simple Mass]{
	\label{fig:TrajSM}
\includegraphics[width=0.25\textwidth,height=1.25in,clip,trim=0.75in 0.35in 0.5in 0.58in]{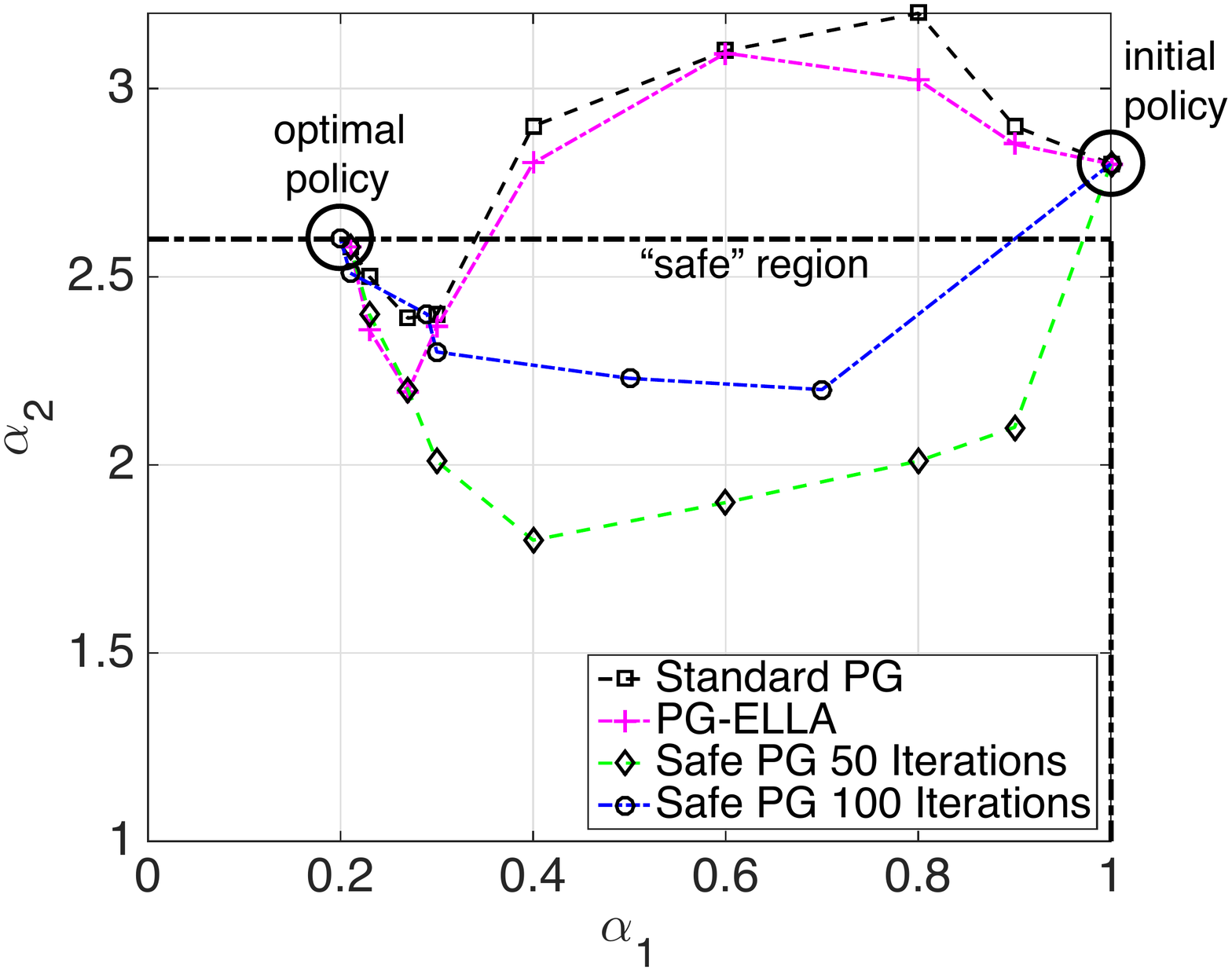}
}
\hfill
\hfill\hspace{-1em}\hfill
\subfigure[Trajectory Cart Pole]{
	\label{fig:TrajCP}
\includegraphics[width=0.25\textwidth,height=1.25in,clip,trim=0.75in 0.35in 0.5in 0.58in]{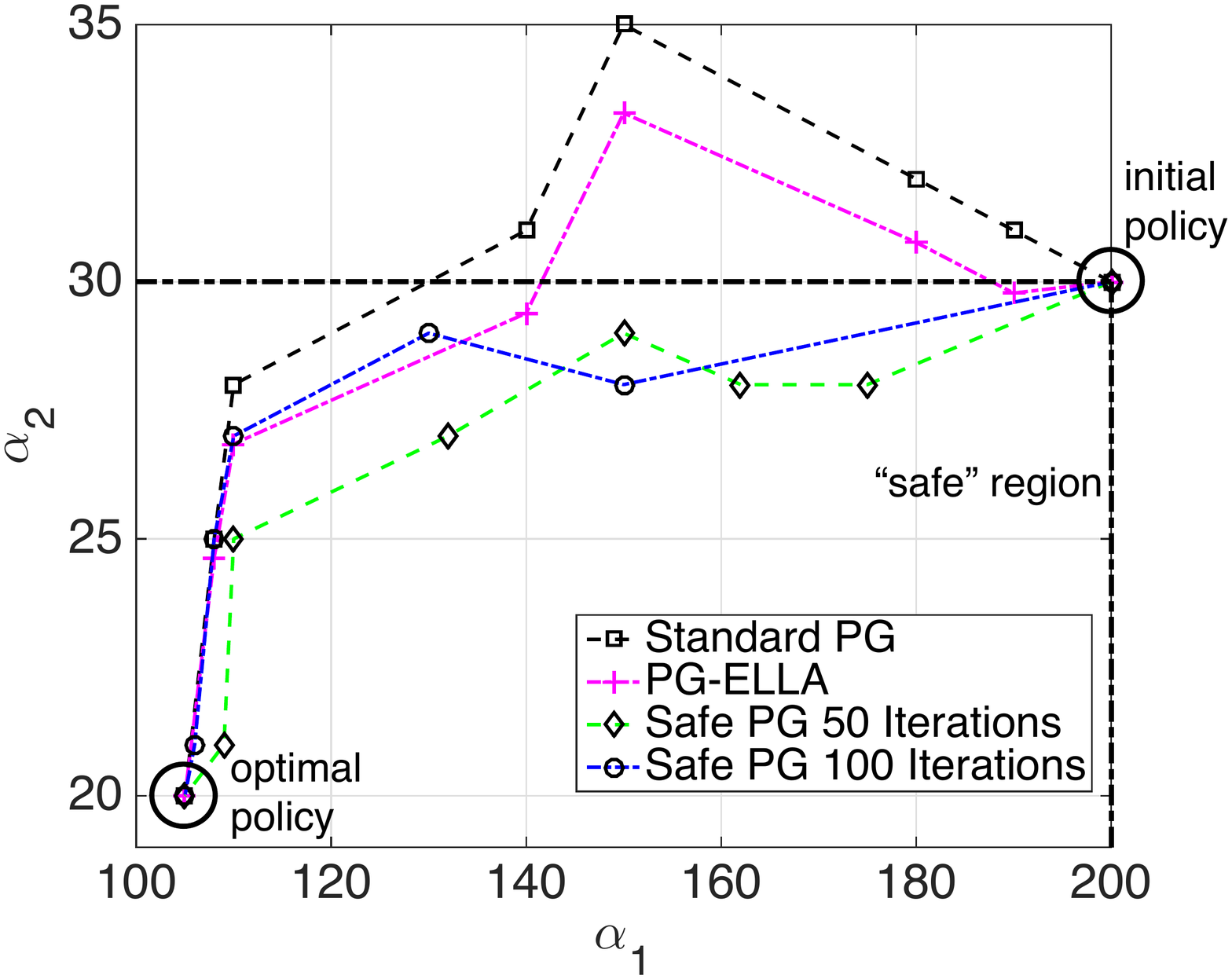}
}
\hfill
\vspace{-1.2em}
\caption{Results on benchmark simple mass and cart-pole systems.  Figures (a) and (b) depict performance in lifelong learning scenarios over consecutive unconstrained tasks, showing that our approach outperforms standard PG and PG-ELLA.  Figures (c) and (d) examine the ability of these method to abide by safety constraints on sample constrained tasks, depicting two dimensions of the policy space ($\alpha_1$ vs $\alpha_2$) and demonstrating that our approach abides by the constraints (the dashed black region).}
\label{Fig:ResBenchmark}
\vspace{-.6em}
\end{figure*}

We evaluated the ability of these methods to respect safety constraints, as shown in 
Figures~\ref{fig:TrajSM} and~\ref{fig:TrajCP}. The thicker black lines in each figure depict the allowable ``safe'' region of the policy space.  To enable online
learning per-task, the same task $\vTaskJ$ was observed on each
round and the shared basis $\bm{L}$ and coefficients 
$\bm{s}_{\vTaskJ}$ were updated using alternating optimization. We then plotted the change in the policy parameter
vectors per iterations (i.e., $\bm{\alpha}_{\vTaskJ}=\bm{L}\bm{s}_{\vTaskJ}$) for each method, demonstrating that our approach abides by the safety constraints,
while standard PG and PG-ELLA can violate them (since they only solve an unconstrained optimization problem). In addition, these figures show that increasing the number of alternating iterations in our method causes it to take a more direct path to the optimal solution.



\subsection{Application to Quadrotor Control}

We also applied our approach to the more challenging domain of quadrotor control. The dynamics of the quadrotor system (Figure~\ref{Fig:Dynamical}) are influenced by inertial
constants around $\bm{e}_{1,B}$, $\bm{e}_{2,B}$, and $\bm{e}_{3,B}$,
thrust factors influencing how the rotor's speed affects the overall
variation of the system's state, and the lengths of the rods supporting
the rotors. Although the overall state of the system can be described
by a 12-dimensional vector, we focus on stability and so consider
only six of these state-variables. The quadrotor system has a high-dimensional
action space, where the goal is to control the four rotational velocities
$\{w_{i}\}_{i=1}^{4}$ of the rotors to stabilize the system. To ensure
realistic dynamics, we used the simulated model described by~\cite{Bouabdallah07designand,Haitham2010b},
which has been verified and used in the control of physical quadrotors.

We generated 10 different quadrotor systems by varying the inertia around the
x, y and z-axes. We used a linear quadratic regulator, as described
by~\citet{Bouabdallah07designand}, to initialize the policies in both the learning
and testing phases. We followed a similar experimental procedure to
that discussed above to update the models.

Figure~\ref{fig:Quad} shows the performance of the unconstrained solution
as compared to standard PG and PG-ELLA. Again, our approach clearly outperforms
standard PG and PG-ELLA in both the initial performance
and learning speed. We also evaluated constrained tasks in a similar manner, again showing that our approach is capable of respecting constraints.  Since the policy space is higher dimensional, we cannot visualize it as well as the benchmark systems, and so instead report the number of iterations it takes our approach to project the policy into the safe region.  Figure~\ref{Fig:NumObs}
shows that our approach requires only one observation of the task to
acquire safe policies, which is substantially lower then standard PG or PG-ELLA (e.g., which require 545 and 510 observations, respectively,
in the quadrotor scenario).
\begin{figure}[tb!]
\vspace{-.25em}
\centering
\includegraphics[width=.28\textwidth,clip,trim=0.2in 0.05in 0.5in 0.0in]{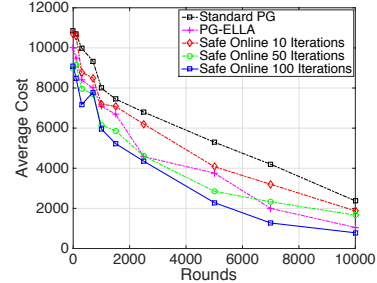}
\vspace{-1em}
\caption{Performance on quadrotor control.}
\label{fig:Quad}
\vspace{-1.5em}
\end{figure}


\begin{figure}[tb!]
\vspace{-0.5em}
\centering
\includegraphics[width=.35\textwidth,clip,trim=0.2in 1.25in 0.65in 1.5in]{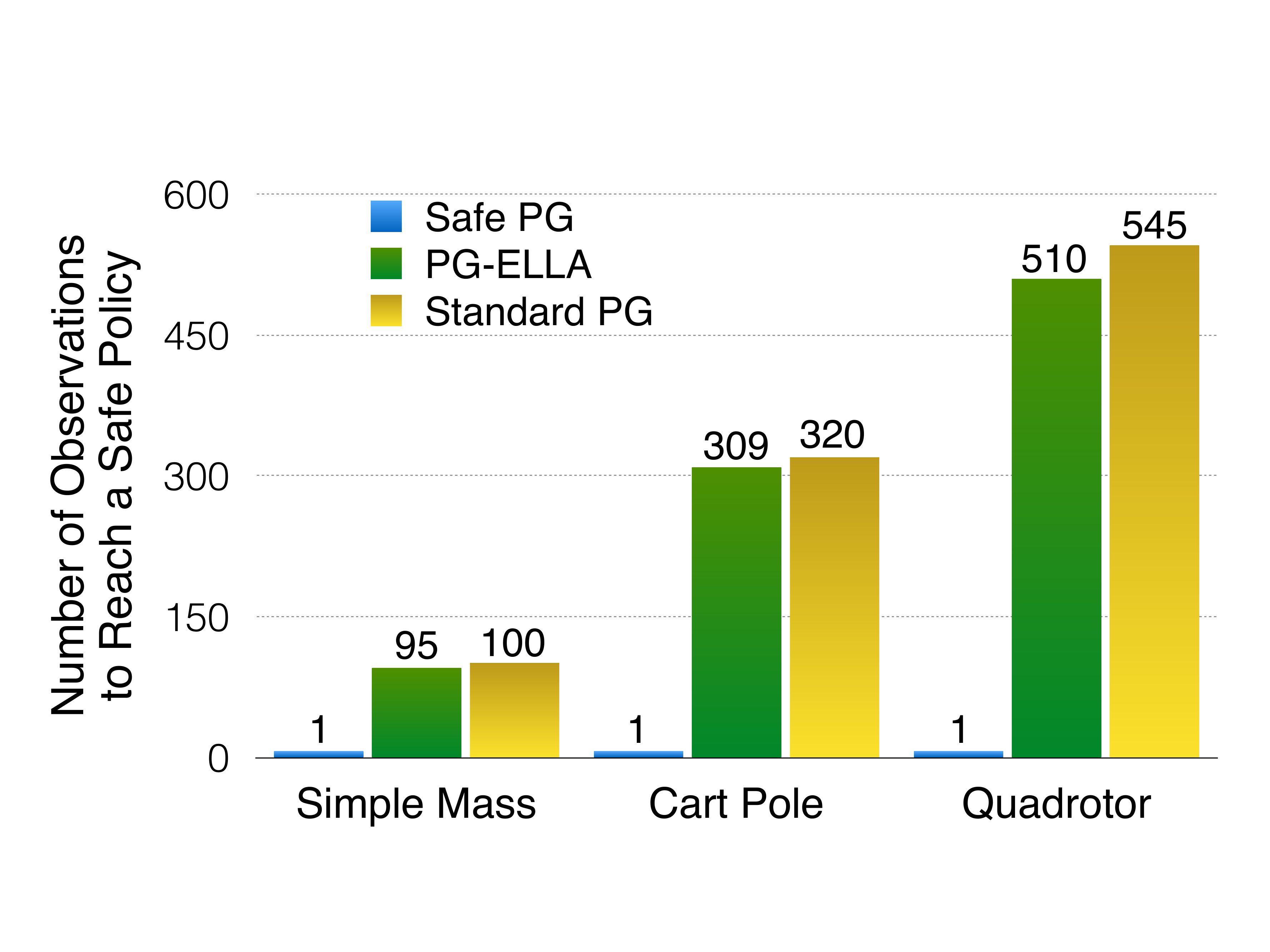}
\vspace{-1em}
\caption{Average number of task observations before acquiring policy parameters that abide by the constraints, showing that our approach immediately projects policies to safe regions.}
\label{Fig:NumObs}
\vspace{-.5em}
\end{figure}

\section{Conclusion}

We described the first lifelong PG learner that provides sublinear regret $\mathcal{O}(\sqrt{\vNumTotalRounds})$ with $\vNumTotalRounds$ total rounds.  In addition, our approach supports safety constraints on the learned policy, which are essential for robust learning in real applications. Our framework formalizes
lifelong learning as online MTL with limited
resources, and enables safe transfer by sharing policy parameters through
a latent knowledge base that is efficiently updated over time.



\newpage{}

\onecolumn

%
%
\setlength{\parskip}{5pt  minus .75pt}

\appendix
%

\section{Update Equations Derivation}\label{App:Update}
In this appendix, we derive the update equations for $\bm{L}$ and $\bm{S}$ in the special case of Gaussian policies. Please note that these derivations can be easily extended to other policy forms in higher dimensional action spaces. 

For a task $t_{j}$, the policy $\pi^{(t_{j})}_{\bm{\alpha}_{t_{j}}}\left(\bm{u}_{m}^{(k,t_{j})}|\bm{x}_{m}^{(k,t_{j})}\right)$ is given by: 
\begin{equation*}
\pi^{(t_{j})}_{\bm{\alpha}_{t_{j}}}\left(\bm{u}_{m}^{(k,t_{j})}|\bm{x}_{m}^{(k,t_{j})}\right)= \frac{1}{\sqrt{2\pi\sigma_{t_{j}}^{2}}}\exp\left(-\frac{1}{2\sigma_{t_{j}}^{2}}\left(\bm{u}_{m}^{(k,t_{j})}-\left(\bm{L}\bm{s}_{t_{j}}\right)^{\mathsf{T}}\bm{\Phi}\left(\bm{x}_{m}^{(k,t_{j})}\right)\right)^{2}\right). 
\end{equation*}
Therefore, the safe lifelong reinforcement learning optimization objective can be written as: 
\begin{equation}\label{Eq:LifelongApp}
\bm{e}_{r}(\bm{L},\bm{S})= \sum_{j=1}^{r} \frac{\eta_{t_{j}}}{2\sigma_{t_{j}}^{2}n_{t_{j}}} \sum_{k=1}^{n_{t_{j}}}\sum_{m=0}^{M_{t_{j}}-1} \left(\bm{u}_{m}^{(k,t_{j})}-\left(\bm{L}\bm{s}_{t_{j}}\right)^{\mathsf{T}}\bm{\Phi}\left(\bm{x}_{m}^{(k,t_{j})}\right)\right)^{2} + \mu_{1} ||\bm{S}||_{\mathsf{F}}^{2} + \mu_{2} ||\bm{L}||_{\mathsf{F}}^{2} \enspace .
\end{equation}
To arrive at the update equations, we need to derive Eq.~\eqref{Eq:LifelongApp} with respect to each $\bm{L}$ and $\bm{S}$. 
\subsection{Update Equations for $\bm{L}$}
Starting with the derivative of $\bm{e}_{r}(\bm{L},\bm{S})$ with respect to the shared repository $\bm{L}$, we can write: 
\begin{align*}
\nabla_{\bm{L}} \bm{e}_{r}(\bm{L},\bm{S}) &= \nabla_{\bm{L}}\left[\sum_{j=1}^{r} \frac{\eta_{t_{j}}}{2\sigma_{t_{j}}^{2}n_{t_{j}}} \sum_{k=1}^{n_{t_{j}}}\sum_{m=0}^{M_{t_{j}}-1} \left(\bm{u}_{m}^{(k,t_{j})}-\left(\bm{L}\bm{s}_{t_{j}}\right)^{\mathsf{T}}\bm{\Phi}\left(\bm{x}_{m}^{(k,t_{j})}\right)\right)^{2} + \mu_{1} ||\bm{S}||_{\mathsf{F}}^{2} + \mu_{2} ||\bm{L}||_{\mathsf{F}}^{2}
\right] \\
& = - \sum_{j=1}^{r}\left[ \frac{\eta_{t_{j}}}{\sigma_{t_{j}}^{2}n_{t_{j}}} \sum_{k=1}^{n_{t_{j}}}\sum_{m=0}^{M_{t_{j}}-1} \left(\bm{u}_{m}^{(k,t_{j})}-\left(\bm{L}\bm{s}_{t_{j}}\right)^{\mathsf{T}}\bm{\Phi}\left(\bm{x}_{m}^{(k,t_{j})}\right)\right) \bm{\Phi}\left(\bm{x}_{m}^{(k,t_{j})}\right)\bm{s}_{t_{j}}^{\mathsf{T}}\right] + 2 \mu_{2}\bm{L} \enspace .
\end{align*}
To acquire the minimum, we set the above to zero: 
\begin{align*}
&\sum_{j=1}^{r}\left[ \frac{\eta_{t_{j}}}{\sigma_{t_{j}}^{2}n_{t_{j}}} \sum_{k=1}^{n_{t_{j}}}\sum_{m=0}^{M_{t_{j}}-1} \left(\bm{u}_{m}^{(k,t_{j})}-\left(\bm{L}\bm{s}_{t_{j}}\right)^{\mathsf{T}}\bm{\Phi}\left(\bm{x}_{m}^{(k,t_{j})}\right)\right) \bm{\Phi}\left(\bm{x}_{m}^{(k,t_{j})}\right)\bm{s}_{t_{j}}^{\mathsf{T}}\right] + 2 \mu_{2}\bm{L}  = 0 \\
&\sum_{j=1}^{r} \left[\frac{\eta_{t_{j}}}{\sigma_{t_{j}}^{2} n_{t_{j}}} \sum_{k=1}^{n_{t_{j}}} \sum_{m=0}^{M_{t_{j}}-1} \bm{s}_{t_{j}}^{\mathsf{T}}\bm{L}^{\mathsf{T}} \bm{\Phi}\left(\bm{x}_{m}^{(k,t_{j})}\right) \bm{\Phi}\left(\bm{x}_{m}^{(k,t_{j})}\right) \bm{s}_{t_{j}}^{\mathsf{T}} \right] + 2\mu_{2} \bm{L} = \sum_{j=1}^{r} \frac{\eta_{t_{j}}}{\sigma_{t_{j}}^{2}n_{t_{j}}} \sum_{k=1}^{n_{t_{j}}}\sum_{m=0}^{M_{t_{j}}-1} \bm{u}_{m}^{(k,t_{j})} \bm{\Phi}\left(\bm{x}_{m}^{(k,t_{j})}\right) \bm{s}_{t_{j}}^{\mathsf{T}} \enspace .
\end{align*}
Noting that $\bm{s}_{t_{j}}^{\mathsf{T}}\bm{L}^{\mathsf{T}} \bm{\Phi}\left(\bm{x}_{m}^{(k,t_{j})}\right) \in \mathbb{R}$, we can write: 
\begin{equation}\label{Eq:LDer}
\sum_{j=1}^{r} \left[\frac{\eta_{t_{j}}}{\sigma_{t_{j}}^{2} n_{t_{j}}} \sum_{k=1}^{n_{t_{j}}} \sum_{m=0}^{M_{t_{j}}-1} \bm{\Phi}\left(\bm{x}_{m}^{(k,t_{j})}\right) \bm{s}_{t_{j}}^{\mathsf{T}} \bm{\Phi}^{\mathsf{T}}\left(\bm{x}_{m}^{(k,t_{j})}\right) \bm{L}\bm{s}_{t_{j}} \right] + 2\mu_{2} \bm{L} = \sum_{j=1}^{r} \frac{\eta_{t_{j}}}{\sigma_{t_{j}}^{2}n_{t_{j}}} \sum_{k=1}^{n_{t_{j}}}\sum_{m=0}^{M_{t_{j}}-1} \bm{u}_{m}^{(k,t_{j})} \bm{\Phi}\left(\bm{x}_{m}^{(k,t_{j})}\right) \bm{s}_{t_{j}}^{\mathsf{T}} \enspace . 
\end{equation}
To solve Eq.~\eqref{Eq:LDer}, we introduce the standard $\text{vec}(\cdot)$ operator leading to:
\begin{align*}
& \text{vec}\left(\sum_{j=1}^{r} \left[\frac{\eta_{t_{j}}}{\sigma_{t_{j}}^{2} n_{t_{j}}} \sum_{k=1}^{n_{t_{j}}} \sum_{m=0}^{M_{t_{j}}-1} \bm{\Phi}\left(\bm{x}_{m}^{(k,t_{j})}\right) \bm{s}_{t_{j}}^{\mathsf{T}} \bm{\Phi}^{\mathsf{T}}\left(\bm{x}_{m}^{(k,t_{j})}\right) \bm{L}\bm{s}_{t_{j}} \right] + 2\mu_{2} \bm{L}\right)  \\
&\hspace{26em} = \text{vec}\left(  \sum_{j=1}^{r} \frac{\eta_{t_{j}}}{\sigma_{t_{j}}^{2}n_{t_{j}}} \sum_{k=1}^{n_{t_{j}}}\sum_{m=0}^{M_{t_{j}}-1} \bm{u}_{m}^{(k,t_{j})} \bm{\Phi}\left(\bm{x}_{m}^{(k,t_{j})}\right) \bm{s}_{t_{j}}^{\mathsf{T}} \right) \\
&\sum_{j=1}^{r} \frac{\eta_{t_{j}}}{\sigma_{t_{j}}^{2} n_{t_{j}}} \sum_{k=1}^{n_{t_{j}}} \sum_{m=0}^{M_{t_{j}}-1} \text{vec}\left({\bm{\Phi}\left(\bm{x}_{m}^{(k,t_{j})}\right)} \bm{s}_{t_{j}}^{\mathsf{T}}\right) \text{vec}\left(\bm{\Phi}^{\mathsf{T}}\left(\bm{x}_{m}^{(k,t_{j})}\right) \bm{L}\bm{s}_{t_{j}}\right) + 2\mu_{2} \text{vec}(\bm{L}) \\
&\hspace{26em}= \sum_{j=1}^{r} \frac{\eta_{t_{j}}}{\sigma_{t_{j}}^{2}n_{t_{j}}} \sum_{k=1}^{n_{t_{j}}}\sum_{m=0}^{M_{t_{j}}-1} \text{vec}\left(\bm{u}_{m}^{(k,t_{j})} \bm{\Phi}\left(\bm{x}_{m}^{(k,t_{j})}\right) \bm{s}_{t_{j}}^{\mathsf{T}} \right) \enspace .
\end{align*}
Knowing that for a given set of matrices $\bm{A}$, $\bm{B}$, and $\bm{X}$,  $\text{vec}(\bm{A}\bm{X}\bm{B}) = \left(\bm{B}^{\mathsf{T}}\otimes \bm{A} \right)\text{vec}(\bm{X})$, we can write 
\begin{align*}
&\sum_{j=1}^{r} \frac{\eta_{t_{j}}}{\sigma_{t_{j}}^{2} n_{t_{j}}} \sum_{k=1}^{n_{t_{j}}} \sum_{m=0}^{M_{t_{j}}-1} \text{vec}\left({\bm{\Phi}\left(\bm{x}_{m}^{(k,t_{j})}\right)} \bm{s}_{t_{j}}^{\mathsf{T}}\right) \left(\bm{s}_{t_{j}}^{\mathsf{T}} \otimes \bm{\Phi}^{\mathsf{T}}\left(\bm{x}_{m}^{(k,t_{j})}\right)\right)\text{vec}(\bm{L}) + 2\mu_{2} \text{vec}(\bm{L}) \\
&\hspace{26em}= \sum_{j=1}^{r} \frac{\eta_{t_{j}}}{\sigma_{t_{j}}^{2}n_{t_{j}}} \sum_{k=1}^{n_{t_{j}}}\sum_{m=0}^{M_{t_{j}}-1} \text{vec}\left(\bm{u}_{m}^{(k,t_{j})} \bm{\Phi}\left(\bm{x}_{m}^{(k,t_{j})}\right) \bm{s}_{t_{j}}^{\mathsf{T}} \right) \enspace .
\end{align*} 
By choosing $\bm{Z}_{\bm{L}}= 2\mu_{2} \bm{I}_{dk\times dk} +\sum_{j=1}^{r}\frac{\eta_{t_{j}}}{n_{t_{j}}\sigma_{t_{j}}^{2}}\sum_{k=1}^{n_{t_{j}}}\sum_{m=0}^{M_{t_{j}}-1} \text{vec}\left(\bm{\Phi}\left(\bm{x}_{m}^{(k,t_{j})}\right)\bm{s}_{t_{j}}^{\mathsf{T}}\right)\left(\bm{\Phi}\left(\bm{x}_{m}^{(k,t_{j})}\right) \otimes \bm{s}_{t_{j}}^{\mathsf{T}}\right)$, and $\bm{v}_{\bm{L}}=\sum_{j=1}^{r} \frac{\eta_{t_{j}}}{n_{t_{j}}\sigma_{t_{j}}^{2}} \sum_{k=1}^{n_{t_{j}}}\sum_{m=0}^{M_{t_{j}}-1}\text{vec}\left(\bm{u}_{m}^{(k,t_{j})}\bm{\Phi}\left(\bm{x}_{m}^{(k,t_{j})}\right)\bm{s}_{t_{j}}^{\mathsf{T}}\right)$, we can update $\bm{L}=\bm{Z}_{\bm{L}}^{-1}\bm{v}_{\bm{L}}$. 

\subsection{Update Equations for $\bm{S}$}
To derive the update equations with respect to $\bm{S}$,  similar approach to that of $\bm{L}$ can be followed. The derivative of $\bm{e}_{r}(\bm{L},\bm{S})$ with respect to $\bm{S}$ can be computed column-wise for all tasks observed so far: 
\begin{align*}
\nabla_{\bm{s}_{t_{j}}} \bm{e}_{r}(\bm{L},\bm{S}) &= \nabla_{\bm{s}_{t_{j}}}\left[\sum_{j=1}^{r} \frac{\eta_{t_{j}}}{2\sigma_{t_{j}}^{2}n_{t_{j}}} \sum_{k=1}^{n_{t_{j}}}\sum_{m=0}^{M_{t_{j}}-1} \left(\bm{u}_{m}^{(k,t_{j})}-\left(\bm{L}\bm{s}_{t_{j}}\right)^{\mathsf{T}}\bm{\Phi}\left(\bm{x}_{m}^{(k,t_{j})}\right)\right)^{2} + \mu_{1} ||\bm{S}||_{\mathsf{F}}^{2} + \mu_{2} ||\bm{L}||_{\mathsf{F}}^{2}
\right] \\
& = - \sum_{t_{k}=t_{j}}\left[ \frac{\eta_{t_{j}}}{\sigma_{t_{j}}^{2}n_{t_{j}}} \sum_{k=1}^{n_{t_{j}}}\sum_{m=0}^{M_{t_{j}}-1} \left(\bm{u}_{m}^{(k,t_{j})}-\left(\bm{L}\bm{s}_{t_{j}}\right)^{\mathsf{T}}\bm{\Phi}\left(\bm{x}_{m}^{(k,t_{j})}\right)\right) \bm{L}^{\mathsf{T}}\bm{\Phi}\left(\bm{x}_{m}^{(k,t_{j})}\right)\right] + 2 \mu_{2}\bm{s}_{t_{j}} \enspace .
\end{align*}
Using a similar analysis to the previous section, choosing 
\begin{align*}
\bm{Z}_{\bm{s}_{t_{j}}} &= 2\mu_{1} \bm{I}_{k \times k} + \sum_{t_{k}=t_{j}}\frac{\eta_{t_{j}}}{n_{t_{j}}\sigma_{t_{j}}^{2}}\sum_{k=1}^{n_{t_{j}}}\sum_{m=0}^{M_{t_{j}}-1}\bm{L}^{\mathsf{T}}\bm{\Phi}\left(\bm{x}_{m}^{(k,t_{j})}\right)\bm{\Phi}^{\mathsf{T}}\left(\bm{x}_{m}^{(k,t_{j})}\right)\bm{L} \enspace ,\\
\bm{v}_{\bm{s}_{t_{j}}} & =\sum_{t_{k}=t_{j}}\frac{\eta_{t_{j}}}{n_{t_{j}}\sigma_{t_{j}}^{2}}\sum_{k=1}^{n_{t_{j}}}\sum_{m=0}^{M_{t_{j}}-1}\bm{u}_{m}^{(k,t_{j})}\bm{L}^{\mathsf{T}}\bm{\Phi}\left(\bm{x}_{m}^{(k,t_{j})}\right) \enspace ,
\end{align*}
we can update $\bm{s}_{t_{j}}=\bm{Z}_{s_{t_{j}}}^{-1}\bm{v}_{\bm{s}_{t_{j}}}$.

\section{Proofs of Theoretical Guarantees} \label{appendix:proofs}

In this appendix, we prove the claims and lemmas from the main paper, leading to sublinear regret (Theorem 1).






\begin{mylemma1}
Assume the policy for a task $\vTaskJ$ at a round $\vCurrentRound$
to be given by $\pi_{\bm{\alpha}_{\vTaskJ}}^{\left(\vTaskJ\right)}\left(\bm{u}_{m}^{\left(k,\ \vTaskJ\right)}|\bm{x}_{m}^{\left(k,\ \vTaskJ\right)}\right)\Big|_{\hat{\bm{\theta}}_{\vCurrentRound}}=\mathcal{N}\left(\bm{\alpha}_{\vTaskJ}^{\mathsf{T}}\Big|_{\hat{\bm{\theta}}_{\vCurrentRound}}\bm{\Phi}\left(\bm{x}_{m}^{\left(k,\ \vTaskJ\right)}\right),\bm{\sigma}_{\vTaskJ}\right)$,
for $\bm{x}_{m}^{\left(k,\ \vTaskJ\right)}\in\mathcal{X}_{\vTaskJ}$
and $\bm{u}_{m}^{\left(k,\ \vTaskJ\right)}\in\mathcal{U}_{\vTaskJ}$
with $\mathcal{X}_{\vTaskJ}$ and $\mathcal{U}_{\vTaskJ}$ representing
the state and action spaces, respectively. The gradient $\nabla_{\bm{\alpha}_{\vTaskJ}}l_{\vTaskJ}\left(\bm{\alpha}_{\vTaskJ}\right)\Big|_{\hat{\bm{\theta}}_{\vCurrentRound}}$,
for $l_{\vTaskJ}\left(\bm{\alpha}_{\vTaskJ}\right)=-\sfrac{1}{n_{\vTaskJ}}\sum_{k=1}^{n_{\vTaskJ}}\sum_{m=0}^{M_{\vTaskJ}-1}\log\left[\pi_{\bm{\alpha}_{\vTaskJ}}^{\left(\vTaskJ\right)}\left(\bm{u}_{m}^{\left(k,\ \vTaskJ\right)}|\bm{x}_{m}^{\left(k,\ \vTaskJ\right)}\right)\right]$
satisfies 
\[
\left|\left|\nabla_{\bm{\alpha}_{\vTaskJ}}l_{\vTaskJ}\left(\bm{\alpha}_{\vTaskJ}\right)\Big|_{\hat{\bm{\theta}}_{\vCurrentRound}}\right|\right|_{2}\leq\frac{M_{\vTaskJ}}{\sigma_{\vTaskJ}^{2}}\left[\left(u_{\text{max}}+\max_{\vTaskK\in\mathcal{I}_{\vCurrentRound-1}}\left\{ \left|\left|\bm{A}_{\vTaskK}^{+}\right|\right|_{2}\left(\left|\left|\bm{b}_{\vTaskK}\right|\right|_{2}+\bm{c}_{\text{max}}\right)\right\} \bm{\Phi}_{\text{max}}\right)\bm{\Phi}_{\text{max}}\right],
\]
with $u_{\text{max}}=\max_{k,m}\left\{ \left|\bm{u}_{m}^{\left(k,\ \vTaskJ\right)}\right|\right\} $
and $\bm{\Phi}_{\text{max}}=\max_{k,m}\left\{ \left|\left|\bm{\Phi}\left(\bm{x}_{m}^{\left(k,\ \vTaskJ\right)}\right)\right|\right|_{2}\right\} $
for all trajectories and all tasks. \end{mylemma1}

\begin{proof} The proof of the above lemma will be provided as a
collection of claims. We start with the following: \begin{claim}
Given $\pi_{\bm{\alpha}_{\vTaskJ}}^{\left(\vTaskJ\right)}\left(\bm{u}_{m}^{(k)}|\bm{x}_{m}^{(k)}\right)\Big|_{\hat{\bm{\theta}}_{\vCurrentRound}}=\mathcal{N}\left(\bm{\alpha}_{\vTaskJ}^{\mathsf{T}}\Big|_{\hat{\bm{\theta}}_{\vCurrentRound}}\bm{\Phi}\left(\bm{x}_{m}^{\left(k,\ \vTaskJ\right)}\right),\bm{\sigma}_{\vTaskJ}\right)$,
for $\bm{x}_{m}^{\left(k,\ \vTaskJ\right)}\in\mathcal{X}_{\vTaskJ}$
and $\bm{u}_{m}^{\left(k,\ \vTaskJ\right)}\in\mathcal{U}_{\vTaskJ}$,
and $l_{\vTaskJ}\left(\bm{\alpha}_{\vTaskJ}\right)=-\sfrac{1}{n_{\vTaskJ}}\sum_{k=1}^{n_{\vTaskJ}}\sum_{m=0}^{M_{\vTaskJ}-1}\log\left[\pi_{\bm{\alpha}_{\vTaskJ}}^{\left(\vTaskJ\right)}\left(\bm{u}_{m}^{\left(k,\ \vTaskJ\right)}|\bm{x}_{m}^{\left(k,\ \vTaskJ\right)}\right)\right]$,
$\left|\left|\nabla_{\bm{\alpha}_{\vTaskJ}}l_{\vTaskJ}\left(\bm{\alpha}_{\vTaskJ}\right)\Big|_{\hat{\bm{\theta}}_{\vCurrentRound}}\right|\right|_{2}$
satisfies 
\begin{equation}
\left|\left|\nabla_{\bm{\alpha}_{\vTaskJ}}l_{\vTaskJ}\left(\bm{\alpha}_{\vTaskJ}\right)\Big|_{\hat{\bm{\theta}}_{\vCurrentRound}}\right|\right|_{2}\leq\frac{M_{\vTaskJ}}{\sigma_{\vTaskJ}^{2}}\left[\left(u_{\text{max}}+\left|\left|\bm{\alpha}_{\vTaskJ}\Big|_{\hat{\bm{\theta}}_{\vCurrentRound}}\right|\right|_{2}\bm{\Phi}_{\text{max}}\right)\bm{\Phi}_{\text{max}}\right].\label{Eq:ClaimOne}
\end{equation}

\end{claim}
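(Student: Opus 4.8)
The plan is to compute the gradient in closed form directly from the Gaussian density and then bound it summand-by-summand using the triangle inequality, Cauchy–Schwarz, and the uniform bounds of Assumption~\ref{Ass:Pol}. First I would substitute the explicit form $\pi_{\bm{\alpha}_{\vTaskJ}}^{(\vTaskJ)}(\bm{u}_m^{(k,\vTaskJ)}\mid\bm{x}_m^{(k,\vTaskJ)}) = (2\pi\sigma_{\vTaskJ}^2)^{-1/2}\exp\!\big({-}(\bm{u}_m^{(k,\vTaskJ)}-\bm{\alpha}_{\vTaskJ}^{\mathsf{T}}\bm{\Phi}(\bm{x}_m^{(k,\vTaskJ)}))^2/(2\sigma_{\vTaskJ}^2)\big)$ into $l_{\vTaskJ}$. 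Taking the logarithm, the normalization constant is independent of $\bm{\alpha}_{\vTaskJ}$ and vanishes under differentiation, while the chain rule on the quadratic residual produces
\[
\nabla_{\bm{\alpha}_{\vTaskJ}}l_{\vTaskJ}(\bm{\alpha}_{\vTaskJ})\Big|_{\hat{\bm{\theta}}_{\vCurrentRound}} = -\frac{1}{n_{\vTaskJ}\sigma_{\vTaskJ}^2}\sum_{k=1}^{n_{\vTaskJ}}\sum_{m=0}^{M_{\vTaskJ}-1}\left(\bm{u}_m^{(k,\vTaskJ)}-\bm{\alpha}_{\vTaskJ}^{\mathsf{T}}\Big|_{\hat{\bm{\theta}}_{\vCurrentRound}}\bm{\Phi}(\bm{x}_m^{(k,\vTaskJ)})\right)\bm{\Phi}(\bm{x}_m^{(k,\vTaskJ)}).
\]

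Next I would take the Euclidean norm of both sides and push it inside the double sum with the triangle inequality, so that each summand contributes at most $\big|\bm{u}_m^{(k,\vTaskJ)}-\bm{\alpha}_{\vTaskJ}^{\mathsf{T}}\bm{\Phi}(\bm{x}_m^{(k,\vTaskJ)})\big|\,\|\bm{\Phi}(\bm{x}_m^{(k,\vTaskJ)})\|_2$. The scalar residual is then bounded by $|\bm{u}_m^{(k,\vTaskJ)}| + |\bm{\alpha}_{\vTaskJ}^{\mathsf{T}}\bm{\Phi}(\bm{x}_m^{(k,\vTaskJ)})|$, and a single application of Cauchy–Schwarz to the inner product gives $|\bm{\alpha}_{\vTaskJ}^{\mathsf{T}}\bm{\Phi}| \le \|\bm{\alpha}_{\vTaskJ}\|_2\,\|\bm{\Phi}\|_2$. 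Invoking the uniform bounds $u_{\text{max}}$ on the actions and $\bm{\Phi}_{\text{max}}$ on the feature norms from Assumption~\ref{Ass:Pol}, every summand is dominated by the common constant $\big(u_{\text{max}} + \|\bm{\alpha}_{\vTaskJ}\|_2\bm{\Phi}_{\text{max}}\big)\bm{\Phi}_{\text{max}}$, where $\bm{\alpha}_{\vTaskJ}$ is understood at $\hat{\bm{\theta}}_{\vCurrentRound}$.

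Finally I would perform the counting step: the double sum contains exactly $n_{\vTaskJ}M_{\vTaskJ}$ terms, each now bounded by the same constant, so the factor $n_{\vTaskJ}$ produced by the summation cancels against the $1/n_{\vTaskJ}$ prefactor, leaving precisely $\tfrac{M_{\vTaskJ}}{\sigma_{\vTaskJ}^2}\big(u_{\text{max}} + \|\bm{\alpha}_{\vTaskJ}\|_2\bm{\Phi}_{\text{max}}\big)\bm{\Phi}_{\text{max}}$, which is the claimed inequality~\eqref{Eq:ClaimOne}. I do not expect a genuine obstacle here, since the argument is a direct computation rather than a subtle estimate; the only points demanding care are getting the sign and the $1/\sigma_{\vTaskJ}^2$ factor correct when differentiating the log-Gaussian, and tracking the $n_{\vTaskJ}$ cancellation so that the final constant matches the stated bound exactly.
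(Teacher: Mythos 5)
Your proposal is correct and follows essentially the same route as the paper's proof: differentiate the log-Gaussian to get the explicit gradient, bound each of the $n_{\vTaskJ}M_{\vTaskJ}$ summands via the triangle inequality and Cauchy--Schwarz applied to $\bm{\alpha}_{\vTaskJ}^{\mathsf{T}}\bm{\Phi}$, invoke the uniform bounds $u_{\text{max}}$ and $\bm{\Phi}_{\text{max}}$, and cancel the $\sfrac{1}{n_{\vTaskJ}}$ prefactor against the sum over trajectories to leave the factor $M_{\vTaskJ}/\sigma_{\vTaskJ}^{2}$. The only cosmetic difference is that the paper takes maxima over $k,m$ before splitting the residual, whereas you bound summand-by-summand first; the two orderings are equivalent.
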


\begin{claimproof} Since $\pi_{\bm{\alpha}_{\vTaskJ}}^{\left(\vTaskJ\right)}\left(\bm{u}_{m}^{\left(k,\ \vTaskJ\right)}|\bm{x}_{m}^{\left(k,\ \vTaskJ\right)}\right)\Big|_{\hat{\bm{\theta}}_{\vCurrentRound}}=\mathcal{N}\left(\bm{\alpha}_{\vTaskJ}^{\mathsf{T}}\Big|_{\hat{\bm{\theta}}_{\vCurrentRound}}\bm{\Phi}\left(\bm{x}_{m}^{\left(k,\ \vTaskJ\right)}\right),\bm{\sigma}_{\vTaskJ}\right)$,
we can write 
\[
\log\left[\pi_{\bm{\alpha}_{\vTaskJ}}^{\left(\vTaskJ\right)}\left(\bm{u}_{m}^{\left(k,\ \vTaskJ\right)}|\bm{x}_{m}^{\left(k,\ \vTaskJ\right)}\right)\Big|_{\hat{\bm{\theta}}_{\vCurrentRound}}\right]=-\log\left[\sqrt{2\pi\sigma_{\vTaskJ}^{2}}\right]-\frac{1}{2\sigma_{\vTaskJ}^{2}}\left(\bm{u}_{m}^{\left(k,\ \vTaskJ\right)}-\bm{\alpha}_{\vTaskJ}^{\mathsf{T}}\Big|_{\hat{\bm{\theta}}_{\vCurrentRound}}\bm{\Phi}\left(\bm{x}_{m}^{\left(k,\ \vTaskJ\right)}\right)\right)^{2}.
\]
Therefore: 
\begin{align*}
\nabla_{\bm{\alpha}_{\vTaskJ}}l_{\vTaskJ}\left(\bm{\alpha}_{\vTaskJ}\right)\Big|_{\hat{\bm{\theta}}_{\vCurrentRound}} & =-\frac{1}{n_{\vTaskJ}}\sum_{k=1}^{n_{\vTaskJ}}\sum_{m=0}^{M_{\vTaskJ}-1}\frac{1}{\sigma_{\vTaskJ}^{2}}\left(\bm{u}_{m}^{\left(k,\ \vTaskJ\right)}-\bm{\alpha}_{\vTaskJ}^{\mathsf{T}}\Big|_{\hat{\bm{\theta}}_{\vCurrentRound}}\bm{\Phi}\left(\bm{x}_{m}^{\left(k,\ \vTaskJ\right)}\right)\right)\bm{\Phi}\left(\bm{x}_{m}^{\left(k,\ \vTaskJ\right)}\right)\\
\left|\left|\nabla_{\bm{\alpha}_{\vTaskJ}}l_{\vTaskJ}\left(\bm{\alpha}_{\vTaskJ}\right)\Big|_{\hat{\bm{\theta}}_{\vCurrentRound}}\right|\right|_{2} & \leq\frac{M_{\vTaskJ}}{\sigma_{\vTaskJ}^{2}}\left[\max_{k,m}\left\{ \left|\bm{u}_{m}^{\left(k,\ \vTaskJ\right)}-\bm{\alpha}_{\vTaskJ}^{\mathsf{T}}\Big|_{\hat{\bm{\theta}}_{\vCurrentRound}}\bm{\Phi}\left(\bm{x}_{m}^{\left(k,\ \vTaskJ\right)}\right)\right|\times\left|\left|\bm{\Phi}\left(\bm{x}_{m}^{\left(k,\ \vTaskJ\right)}\right)\right|\right|_{2}\right\} \right]\\
 & \leq\frac{M_{\vTaskJ}}{\sigma_{\vTaskJ}^{2}}\Bigg[\max_{k,m}\left\{ \left|\bm{u}_{m}^{\left(k,\ \vTaskJ\right)}\right|\times\left|\left|\bm{\Phi}\left(\bm{x}_{m}^{\left(k,\ \vTaskJ\right)}\right)\right|\right|_{2}\right\} \\
 & \hspace{12em}+\max_{k,m}\left\{ \left|\bm{\alpha}_{\vTaskJ}^{\mathsf{T}}\Big|_{\hat{\bm{\theta}}_{\vCurrentRound}}\bm{\Phi}\left(\bm{x}_{m}^{\left(k,\ \vTaskJ\right)}\right)\right|\times\left|\left|\bm{\Phi}\left(\bm{x}_{m}^{\left(k,\ \vTaskJ\right)}\right)\right|\right|_{2}\right\} \Bigg]\\
 & \leq\frac{M_{\vTaskJ}}{\sigma_{\vTaskJ}^{2}}\Bigg[\max_{k,m}\left\{ \left|\bm{u}_{m}^{\left(k,\ \vTaskJ\right)}\right|\right\} \max_{k,m}\left\{ \left|\left|\bm{\Phi}\left(\bm{x}_{m}^{\left(k,\ \vTaskJ\right)}\right)\right|\right|_{2}\right\} \\
 & \hspace{8.1em}+\max_{k,m}\left\{ \left|\left\langle \bm{\alpha}_{\vTaskJ}\Big|_{\hat{\bm{\theta}}_{\vCurrentRound}},\bm{\Phi}\left(\bm{x}_{m}^{\left(k,\ \vTaskJ\right)}\right)\right\rangle \right|\right\} \max_{k,m}\left\{ \left|\left|\bm{\Phi}\left(\bm{x}_{m}^{\left(k,\ \vTaskJ\right)}\right)\right|\right|_{2}\right\} \Bigg] \enspace .
\end{align*}
Denoting $\max_{k,m}\left\{ \left|\bm{u}_{m}^{\left(k,\ \vTaskJ\right)}\right|\right\} =u_{\text{max}}$
and $\max_{k,m}\left\{ \left|\left|\bm{\Phi}\left(\bm{x}_{m}^{\left(k,\ \vTaskJ\right)}\right)\right|\right|_{2}\right\} =\bm{\Phi}_{\text{max}}$
for all trajectories and all tasks, we can write 
\[
\left|\left|\nabla_{\bm{\alpha}_{\vTaskJ}}l_{\vTaskJ}\left(\bm{\alpha}_{\vTaskJ}\right)\Big|_{\hat{\bm{\theta}}_{\vCurrentRound}}\right|\right|_{2}\leq\frac{M_{\vTaskJ}}{\sigma_{\vTaskJ}^{2}}\left[\left(u_{\text{max}}+\max_{k,m}\left\{ \left|\left\langle \bm{\alpha}_{\vTaskJ}\Big|_{\hat{\bm{\theta}}_{\vCurrentRound}},\bm{\Phi}\left(\bm{x}_{m}^{\left(k,\ \vTaskJ\right)}\right)\right\rangle \right|\right\} \right)\bm{\Phi}_{\text{max}}\right] \enspace .
\]
Using the Cauchy-Shwarz inequality~\cite{Horn199063}, we can upper
bound $\max_{k,m}\left\{ \left|\left\langle \bm{\alpha}_{\vTaskJ}\Big|_{\hat{\bm{\theta}}_{\vCurrentRound}},\bm{\Phi}\left(\bm{x}_{m}^{\left(k,\ \vTaskJ\right)}\right)\right\rangle \right|\right\} $
as 
\begin{align*}
\max_{k,m}\left\{ \left|\left\langle \bm{\alpha}_{\vTaskJ}\Big|_{\hat{\bm{\theta}}_{\vCurrentRound}},\bm{\Phi}\left(\bm{x}_{m}^{\left(k,\ \vTaskJ\right)}\right)\right\rangle \right|\right\} \leq\max_{k,m}\left\{ \left|\left|\bm{\alpha}_{\vTaskJ}\Big|_{\hat{\bm{\theta}}_{\vCurrentRound}}\right|\right|_{2}\left|\left|\bm{\Phi}\left(\bm{x}_{m}^{\left(k,\ \vTaskJ\right)}\right)\right|\right|_{2}\right\}  & \leq\max_{k,m}\left\{ \left|\left|\bm{\alpha}_{\vTaskJ}\Big|_{\hat{\bm{\theta}}_{\vCurrentRound}}\right|\right|_{2}\right\} \bm{\Phi}_{\text{max}}\\
 & \leq\left|\left|\bm{\alpha}_{\vTaskJ}\Big|_{\hat{\bm{\theta}}_{\vCurrentRound}}\right|\right|_{2}\bm{\Phi}_{\text{max}} \enspace . 
\end{align*}
Finalizing the statement of the claim, the overall bound on the norm
of the gradient of $l_{\vTaskJ}(\bm{\alpha}_{\vTaskJ})$ can be
written as 
\begin{equation}
\left|\left|\nabla_{\bm{\alpha}_{\vTaskJ}}l_{\vTaskJ}\left(\bm{\alpha}_{\vTaskJ}\right)\Big|_{\hat{\bm{\theta}}_{\vCurrentRound}}\right|\right|_{2}\leq\frac{M_{\vTaskJ}}{\sigma_{\vTaskJ}^{2}}\left[\left(u_{\text{max}}+\left|\left|\bm{\alpha}_{\vTaskJ}\Big|_{\hat{\bm{\theta}}_{\vCurrentRound}}\right|\right|_{2}\bm{\Phi}_{\text{max}}\right)\bm{\Phi}_{\text{max}}\right] \enspace .\label{Eq:One}
\end{equation}
\end{claimproof}

\begin{claim}
The norm of the gradient of the loss function satisfies: 
\[
\left|\left|\nabla_{\bm{\alpha}_{\vTaskJ}}l_{\vTaskJ}\left(\bm{\alpha}_{\vTaskJ}\right)\Big|_{\hat{\bm{\theta}}_{\vCurrentRound}}\right|\right|_{2}\leq\frac{M_{\vTaskJ}}{\sigma_{\vTaskJ}^{2}}\left[\left(u_{\text{max}}+\max_{\vTaskK\in\mathcal{I}_{\vCurrentRound-1}}\left\{ \|\bm{A}_{\vTaskK}^{+}\|_{2}\left(\|\bm{b}_{\vTaskK}\|_{2}+\bm{c}_{\text{max}}\right)\right\} \bm{\Phi}_{\text{max}}\right)\bm{\Phi}_{\text{max}}\right] \enspace . 
\]
\end{claim}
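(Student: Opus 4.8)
The plan is to start from the bound already established in the previous claim, namely Eq.~\eqref{Eq:One}, which reduces the entire problem to controlling the single quantity $\left\|\bm{\alpha}_{\vTaskJ}\big|_{\hat{\bm{\theta}}_{\vCurrentRound}}\right\|_2$. Every other factor in the target inequality already coincides with Eq.~\eqref{Eq:One}, so it suffices to prove
\[
\left\|\bm{\alpha}_{\vTaskJ}\Big|_{\hat{\bm{\theta}}_{\vCurrentRound}}\right\|_2\leq\max_{\vTaskK\in\mathcal{I}_{\vCurrentRound-1}}\left\{\|\bm{A}_{\vTaskK}^{+}\|_2\left(\|\bm{b}_{\vTaskK}\|_2+\bm{c}_{\max}\right)\right\}
\]
and then substitute this uniform bound back into Eq.~\eqref{Eq:One}.

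First I would recall that $\hat{\bm{\theta}}_{\vCurrentRound}$ is the \emph{constrained} solution, so by construction the induced policy parameters $\bm{\alpha}_{\vTaskJ}=\bm{L}\bm{s}_{\vTaskJ}$ satisfy the safety constraints of every task observed through round $\vCurrentRound-1$. Using the slack-variable (equality) reformulation of Sect.~\ref{Sec:Constraint}, each active constraint reads $\bm{A}_{\vTaskJ}\bm{\alpha}_{\vTaskJ}=\bm{b}_{\vTaskJ}-\bm{c}_{\vTaskJ}$ with $\bm{c}_{\vTaskJ}\geq0$ and $\|\bm{c}_{\vTaskJ}\|_2\leq\bm{c}_{\max}$. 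Solving for $\bm{\alpha}_{\vTaskJ}$ through the (pseudo)inverse gives $\bm{\alpha}_{\vTaskJ}=\bm{A}_{\vTaskJ}^{+}(\bm{b}_{\vTaskJ}-\bm{c}_{\vTaskJ})$, which is exactly the relation already exploited in the SDP derivation of Sect.~\ref{Sec:SemiDefiniteProgram}. I would then apply submultiplicativity of the induced $\ell_2$ operator norm followed by the triangle inequality:
\[
\left\|\bm{\alpha}_{\vTaskJ}\right\|_2=\left\|\bm{A}_{\vTaskJ}^{+}(\bm{b}_{\vTaskJ}-\bm{c}_{\vTaskJ})\right\|_2\leq\|\bm{A}_{\vTaskJ}^{+}\|_2\left(\|\bm{b}_{\vTaskJ}\|_2+\|\bm{c}_{\vTaskJ}\|_2\right)\leq\|\bm{A}_{\vTaskJ}^{+}\|_2\left(\|\bm{b}_{\vTaskJ}\|_2+\bm{c}_{\max}\right),
\]
using the slack bound in the final step. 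Since this holds for the parameters of every task in $\mathcal{I}_{\vCurrentRound-1}$, replacing the right-hand side by its maximum over $\vTaskK\in\mathcal{I}_{\vCurrentRound-1}$ yields the desired uniform bound, and substitution into Eq.~\eqref{Eq:One} closes the argument.

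The step I would be most careful about is the passage $\bm{\alpha}_{\vTaskJ}=\bm{A}_{\vTaskJ}^{+}(\bm{b}_{\vTaskJ}-\bm{c}_{\vTaskJ})$. This is immediate when $\bm{A}_{\vTaskJ}\in\mathbb{R}^{d\times d}$ is invertible, so that $\bm{A}_{\vTaskJ}^{+}=\bm{A}_{\vTaskJ}^{-1}$; in the rank-deficient case $\bm{A}_{\vTaskJ}^{+}(\bm{b}_{\vTaskJ}-\bm{c}_{\vTaskJ})$ is only the minimum-norm solution of the active equality, so one must argue that the constrained projection actually returns this representative (or at least that the norm of $\bm{\alpha}_{\vTaskJ}$ is dominated by that of this representative). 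I would justify this by appealing to the fact that the projection is \emph{defined} precisely through the pseudoinverse relation in Sect.~\ref{Sec:SemiDefiniteProgram}, so the identity is consistent with how $\hat{\bm{\theta}}_{\vCurrentRound}$ is computed. I would also note explicitly that the index set is $\mathcal{I}_{\vCurrentRound-1}$ rather than $\mathcal{I}_{\vCurrentRound}$ because $\hat{\bm{\theta}}_{\vCurrentRound}$ is produced as the constrained solution respecting only the constraints of tasks seen strictly before round $\vCurrentRound$.
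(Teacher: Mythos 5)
Your proposal follows essentially the same route as the paper's own proof: both exploit the slack-variable equality $\bm{A}_{\vTaskK}\bm{\alpha}_{\vTaskK}=\bm{b}_{\vTaskK}-\bm{c}_{\vTaskK}$ satisfied by the constrained solution $\hat{\bm{\theta}}_{\vCurrentRound}$, invert it via the left pseudo-inverse $\bm{A}_{\vTaskK}^{+}=\left(\bm{A}_{\vTaskK}^{\mathsf{T}}\bm{A}_{\vTaskK}\right)^{-1}\bm{A}_{\vTaskK}^{\mathsf{T}}$, bound $\left\|\bm{\alpha}_{\vTaskK}\right\|_{2}$ by the triangle inequality and $\|\bm{c}_{\vTaskK}\|_{2}\leq\bm{c}_{\max}$, and substitute the maximum over $\vTaskK\in\mathcal{I}_{\vCurrentRound-1}$ into Eq.~\eqref{Eq:One}. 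Your extra caution about the rank-deficient case is a fair observation, but the paper sidesteps it by implicitly assuming full column rank (its pseudo-inverse formula requires invertibility of $\bm{A}_{\vTaskK}^{\mathsf{T}}\bm{A}_{\vTaskK}$), under which the implication is exact.
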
 \begin{claimproof} As mentioned previously, we consider
the linearization of the loss function $l_{\vTaskJ}$ around the
constraint solution of the previous round, $\hat{\bm{\theta}}_{\vCurrentRound}$.
Since $\hat{\bm{\theta}}_{\vCurrentRound}$ satisfies $\bm{A}_{\vTaskK}\bm{\alpha}_{\vTaskK}=\bm{b}_{\vTaskK}-\bm{c}_{\vTaskK},\forall \vTaskK\in\mathcal{I}_{\vCurrentRound-1}$.
Hence, we can write 
\begin{align*}
\bm{A}_{\vTaskK}\bm{\bm{\alpha}_{\vTaskK}}+\bm{c}_{\vTaskK} & =\bm{b}_{\vTaskK}\ \ \ \ \forall \vTaskK\in\mathcal{I}_{\vCurrentRound-1}\\
\implies\bm{\alpha}_{\vTaskK} & =\bm{A}_{\vTaskK}^{+}\left(\bm{b}_{\vTaskK}-\bm{c}_{\vTaskK}\right)\ \ \ \text{with \ensuremath{\bm{A}_{\vTaskK}^{+}=\left(\bm{A}_{\vTaskK}^{\mathsf{T}}\bm{A}_{\vTaskK}\right)^{-1}\bm{A}_{\vTaskK}^{\mathsf{T}}} being the left pseudo-inverse.}
\end{align*}
Therefore 
\begin{align*}
\left|\left|\bm{\alpha}_{\vTaskK}\right|\right|_{2} & \leq\left|\left|\bm{A}_{\vTaskK}^{+}\right|\right|_{2}\left(\left|\left|\bm{b}_{\vTaskK}\right|\right|_{2}+\left|\left|\bm{c}_{\vTaskK}\right|\right|_{2}\right)\\
 & \leq\left|\left|\bm{A}_{\vTaskK}^{+}\right|\right|_{2}\left(\left|\left|\bm{b}_{\vTaskK}\right|\right|_{2}+\bm{c}_{\text{max}}\right) \enspace .
\end{align*}

Combining the above results with those of Eq.~\eqref{Eq:One}
we arrive at 
\[
\left|\left|\nabla_{\bm{\alpha}_{\vTaskJ}}l_{\vTaskJ}\left(\bm{\alpha}_{\vTaskJ}\right)\right|\right|_{2}\leq\frac{M_{\vTaskJ}}{\sigma_{\vTaskJ}^{2}}\left[\left(u_{\text{max}}+\max_{\vTaskK\in\mathcal{I}_{\vCurrentRound-1}}\left\{ \left|\left|\bm{A}_{\vTaskK}^{+}\right|\right|_{2}\left(\left|\left|\bm{b}_{\vTaskK}\right|\right|_{2}+\bm{c}_{\text{max}}\right)\right\} \bm{\Phi}_{\text{max}}\right)\bm{\Phi}_{\text{max}}\right] \enspace .
\] 
\end{claimproof}

The previous result finalizes the statement of the lemma, bounding
the gradient of the loss function in terms of the \emph{safety} constraints.

\end{proof}

\begin{mylemma2}
The norm of the gradient
of the loss function evaluated at $\hat{\bm{\theta}}_{\vCurrentRound}$
satisfies 
\begin{align*}
\left|\left|\nabla_{\bm{\theta}}l_{\vTaskJ}\left(\bm{\theta}\right)\Big|_{\hat{\bm{\theta}}_{\vCurrentRound}}\right|\right|_{2}^{2} & \leq\Big|\Big|\nabla_{\bm{\alpha}_{\vTaskJ}}l_{\vTaskJ}\left(\bm{\theta}\right)\Big|_{\hat{\bm{\theta}}_{\vCurrentRound}}\Big|\Big|_{2}^{2}\Bigg(q\times d \left(\sfrac{2d}{p^{2}}\max_{\vTaskK\in\mathcal{I}_{\vCurrentRound-1}}\left\{ \left|\left|\bm{A}_{\vTaskJ}^{\dagger}\right|\right|_{2}^{2}\left(\left|\left|\bm{b}_{\vTaskJ}\right|\right|_{2}^{2}+\bm{c}_{\text{max}}^{2}\right)\right\} +1\right)\Bigg) \enspace .
\end{align*}
\end{mylemma2}

\begin{proof} The derivative of $l_{\vTaskJ}(\bm{\theta})\Big|_{\hat{\bm{\theta}}_{\vCurrentRound}}$ 
can be written as 
\begin{align*}
\nabla_{\bm{\theta}}l_{\vTaskJ}(\bm{\theta})\Big|_{\hat{\bm{\theta}}_{\vCurrentRound}} & =\left[\begin{array}{c}
\nabla_{\bm{\alpha}_{\vTaskJ}}l_{\vTaskJ}^{\mathsf{T}}(\bm{\theta})\Big|_{\hat{\bm{\theta}}_{\vCurrentRound}}\left[\begin{array}{c}
\frac{\partial\bm{\alpha}_{\vTaskJ}^{(1)}}{\partial\bm{\theta}_{1}}\Big|_{\hat{\bm{\theta}}_{\vCurrentRound}}\\
\vdots\\
\frac{\partial\bm{\alpha}_{\vTaskJ}^{(d)}}{\partial\bm{\theta}_{1}}\Big|_{\hat{\bm{\theta}}_{\vCurrentRound}}
\end{array}\right]\\
\vdots\\
\nabla_{\bm{\alpha}_{\vTaskJ}}l_{\vTaskJ}^{\mathsf{T}}(\bm{\theta})\Big|_{\hat{\bm{\theta}}_{\vCurrentRound}}\left[\begin{array}{c}
\frac{\partial\bm{\alpha}_{\vTaskJ}^{(1)}}{\partial\bm{\theta}_{dk+k\vNumTotalTasks}}\Big|_{\hat{\bm{\theta}}_{\vCurrentRound}}\\
\vdots\\
\frac{\partial\bm{\alpha}_{\vTaskJ}^{(d)}}{\partial\bm{\theta}_{dk+k\vNumTotalTasks}}\Big|_{\hat{\bm{\theta}}_{\vCurrentRound}}
\end{array}\right]
\end{array}\right]=\left[\begin{array}{c}
\nabla_{\bm{\alpha}_{\vTaskJ}}l_{\vTaskJ}^{\mathsf{T}}(\bm{\theta})\Big|_{\hat{\bm{\theta}}_{\vCurrentRound}}\left[\begin{array}{c}
\bm{\theta}_{dk+1}\Big|_{\hat{\bm{\theta}}_{\vCurrentRound}}\\
0\\
\vdots\\
0
\end{array}\right]\\
\vdots\\
\nabla_{\bm{\alpha}_{\vTaskJ}}l_{\vTaskJ}^{\mathsf{T}}(\bm{\theta})\Big|_{\hat{\bm{\theta}}_{\vCurrentRound}}\left[\begin{array}{c}
0\\
\vdots\\
\bm{\theta}_{(d+1)k+1}\Big|_{\hat{\bm{\theta}}_{\vCurrentRound}}
\end{array}\right]\\
\vdots\\
\nabla_{\bm{\alpha}_{\vTaskJ}}l_{\vTaskJ}^{\mathsf{T}}(\bm{\theta})\Big|_{\hat{\bm{\theta}}_{\vCurrentRound}}\left[\begin{array}{c}
\bm{\theta}_{d(k+1)+1}\Big|_{\hat{\bm{\theta}}_{\vCurrentRound}}\\
\vdots\\
\bm{\theta}_{dk}\Big|_{\hat{\bm{\theta}}_{\vCurrentRound}}
\end{array}\right]
\end{array}\right]\\
 & \implies \left\|\nabla_{\bm{\theta}}l_{\vTaskJ}(\bm{\theta})\Big|_{\hat{\bm{\theta}}_{\vCurrentRound}}\right\|_{2}^{2}\leq\left\|\nabla_{\bm{\alpha}_{\vTaskJ}}l_{\vTaskJ}(\bm{\alpha}_{\vTaskJ})\Big|_{\hat{\bm{\theta}}_{\vCurrentRound}}\right\|_{2}^{2}\left[d\left\|\bm{s}_{\vTaskJ}\Big|_{\hat{\bm{\theta}}_{\vCurrentRound}}\right\|_{2}^{2}+\left\|\bm{L}\Big|_{\hat{\bm{\theta}}_{\vCurrentRound}}\right\|_{\mathsf{F}}^{2}\right] \enspace .
\end{align*}

The results of Lemma~\ref{Lemma:Lemma1} bound $\left\|\nabla_{\bm{\alpha}_{\vTaskJ}}l_{\vTaskJ}(\bm{\theta})\Big|_{\hat{\bm{\theta}}_{\vCurrentRound}}\right\|_{2}^{2}$.

Now, we target to bound each of $\Big|\Big|\bm{s}_{\vTaskJ}\Big|_{\hat{\bm{\theta}}_{\vCurrentRound}}\Big|\Big|_{2}^{2}$
and $\Big|\Big|\bm{L}\Big|_{\hat{\bm{\theta}}_{\vCurrentRound}}\Big|\Big|_{\mathsf{F}}^{2}$.

\paragraph{Bounding $\left\|\bm{s}_{\vTaskJ}\Big|_{\hat{\bm{\theta}}_{\vCurrentRound}}\right\|_{2}^{2}$
and $\|\bm{L}\Big|_{\hat{\bm{\theta}}_{\vCurrentRound}}\|_{\mathsf{F}}^{2}$:}

Considering the constraint $\bm{A}_{\vTaskJ}\bm{L}\bm{s}_{\vTaskJ}+\bm{c}_{\vTaskJ}=\bm{b}_{\vTaskJ}$
for a task $\vTaskJ$, we realize that $\bm{s}_{{\vTaskJ}}=\bm{L}^{+}\left(\bm{A}_{\vTaskJ}^{+}\left(\bm{b}_{\vTaskJ}-\bm{c}_{\vTaskJ}\right)\right)$.
Therefore, 
\begin{align}
\left\|\bm{s}_{\vTaskJ}\Big|_{\hat{\theta}_{\vCurrentRound}}\right\|_{2}\leq\left|\left|\bm{L}^{+}\left(\bm{A}_{\vTaskJ}^{+}\left(\bm{b}_{\vTaskJ}-\bm{c}_{\vTaskJ}\right)\right)\right|\right|_{2} & \leq\left|\left|\bm{L}^{+}\right|\right|_{2}\left|\left|\bm{A}_{\vTaskJ}^{+}\right|\right|_{2}\left(\left|\left|\bm{b}_{\vTaskJ}\right|\right|_{2}+\left|\left|\bm{c}_{\vTaskJ}\right|\right|_{2}\right). \label{Eq:S}\\
\nonumber 
\end{align}
Noting that
\begin{align*}
\left|\left|\bm{L}^{+}\right|\right|_{\mathsf{2}}=\left|\left|\left(\bm{L}^{\mathsf{T}}\bm{L}\right)^{-1}\bm{L}^{\mathsf{T}}\right|\right|_{\mathsf{2}} & \leq\left|\left|\left(\bm{L}^{\mathsf{T}}\bm{L}\right)^{-1}\right|\right|_{2}\left|\left|\bm{L}^{\mathsf{T}}\right|\right|_{2}\leq\left|\left|\left(\bm{L}^{\mathsf{T}}\bm{L}\right)^{-1}\right|\right|_{2}\left|\left|\bm{L}^{\mathsf{T}}\right|\right|_{\mathsf{F}}\\
 & =\left|\left|\left(\bm{L}^{\mathsf{T}}\bm{L}\right)^{-1}\right|\right|_{2}\left|\left|\bm{L}\right|\right|_{\mathsf{F}} \enspace . 
\end{align*}
To relate $\left|\left|\bm{L}^{+}\right|\right|_{\mathsf{2}}$ to
$\left|\left|\bm{L}\right|\right|_{\mathsf{F}}$, we need to bound
$\left|\left|\left(\bm{L}^{\mathsf{T}}\bm{L}\right)^{-1}\right|\right|_{2}$
in terms of $\|\bm{L}\|_{\mathsf{F}}$. Denoting the spectrum of $\bm{L}^{\mathsf{T}}\bm{L}$
as $\text{spec}\left(\bm{L}^{\mathsf{T}}\bm{L}\right)=\left\{ \bm{\lambda}_{1},\dots,\bm{\lambda}_{k}\right\} $
such that $0<\bm{\lambda}_{1}\leq\dots\leq\bm{\lambda}_{k}$, then
$\text{spect}\left(\left(\bm{L}^{\mathsf{T}}\bm{L}\right)^{-1}\right)=\left\{ \sfrac{1}{\bm{\lambda}_{1}},\dots,\sfrac{1}{\bm{\lambda}_{k}}\right\} $
such that $\sfrac{1}{\bm{\lambda}_{k}}\leq\dots\leq\sfrac{1}{\bm{\lambda}_{k}}$.
Hence, $\left|\left|\left(\bm{L}^{\mathsf{T}}\bm{L}\right)^{-1}\right|\right|_{2}=\max\left\{ \text{spec}\left(\left(\bm{L}^{\mathsf{T}}\bm{L}\right)^{-1}\right)\right\} =\sfrac{1}{\bm{\lambda}_{1}}=\sfrac{1}{\lambda_{\min}\left(\bm{L}^{\mathsf{T}}\bm{L}\right)}$.
Noticing that $\text{spec}\left(\bm{L}^{\mathsf{T}}\bm{L}\right)=\text{spec}\left(\bm{L}\bm{L}^{\mathsf{T}}\right)$,
we recognize $\left|\left|\left(\bm{L}^{\mathsf{T}}\bm{L}\right)^{-1}\right|\right|_{2}=\sfrac{1}{\bm{\lambda}_{\text{min}}\left(\bm{L}\bm{L}^{\mathsf{T}}\right)}\leq\sfrac{1}{p}$.
Therefore 
\begin{equation}
\left|\left|\bm{L}^{+}\right|\right|_{\mathsf{2}}\leq\frac{1}{p}\left|\left|\bm{L}\right|\right|_{\mathsf{F}} \enspace .\label{Eq:L}
\end{equation}
Plugging the results of Eq.~\eqref{Eq:L} into Eq.~\eqref{Eq:S},
we arrive at 
\begin{equation}
\left\|\bm{s}_{\vTaskJ}\Big|_{\hat{\bm{\theta}}_{\vCurrentRound}}\right\|_{2}\leq\sfrac{1}{p}\left\|\bm{L}\Big|_{\hat{\bm{\theta}}_{\vCurrentRound}}\right\|_{\mathsf{F}}\max_{\vTaskK\in\mathcal{I}_{\vCurrentRound-1}}\left\{ \left|\left|\bm{A}_{\vTaskK}^{+}\right|\right|_{2}\left(\left|\left|\bm{b}_{\vTaskK}\right|\right|_{2}+\bm{c}_{\text{max}}\right)\right\} \enspace . \label{Eq:SS}
\end{equation}
Finally, since $\hat{\bm{\theta}}_{\vCurrentRound}$ satisfies
the constraints, we note that $\left\|\bm{L}\Big|_{\hat{\bm{\theta}}_{\vCurrentRound}}\right\|_{\mathsf{F}}^{2}\leq q\times d$.
Consequently, 
\begin{align*}
\Big|\Big|\nabla_{\bm{\theta}}l_{\vTaskJ}\left(\bm{\theta}\right)\Big|_{\hat{\bm{\theta}}_{\vCurrentRound}}\Big|\Big|_{2}^{2} & \leq\Big|\Big|\nabla_{\bm{\alpha}_{\vTaskJ}}l_{\vTaskJ}\left(\bm{\theta}\right)\Big|_{\hat{\bm{\theta}}_{\vCurrentRound}}\Big|\Big|_{2}^{2}\Bigg(q\times d \left(\frac{2d}{p^{2}}\max_{\vTaskK\in\mathcal{I}_{\vCurrentRound-1}}\left\{ \left|\left|\bm{A}_{\vTaskK}^{\dagger}\right|\right|_{2}^{2}\left(\left|\left|\bm{b}_{\vTaskK}\right|\right|_{2}^{2}+\bm{c}_{\text{max}}^{2}\right)\right\} +1\right)\Bigg) \enspace .
\end{align*}
\end{proof}

\begin{mylemma3} The L$_{2}$ norm of the constraint solution at round
$\vCurrentRound-1$, $\|\hat{\bm{\theta}}_{\vCurrentRound}\|_{2}^{2}$
is bounded by 
\[
\|\hat{\bm{\theta}}_{\vCurrentRound}\|_{2}^{2}\leq q\times d\left[1+\left|\mathcal{I}_{\vCurrentRound-1}\right|\frac{1}{p^{2}}\max_{\vTaskK\in\mathcal{I}_{\vCurrentRound-1}}\left\{ \left|\left|\bm{A}_{\vTaskK}^{\dagger}\right|\right|_{2}^{2}\left(\left|\left|\bm{b}_{\vTaskK}\right|\right|_{2}+\bm{c}_{\text{max}}\right)^{2}\right\} \right] \enspace .
\]
with $\left|\mathcal{I}_{\vCurrentRound-1}\right|$ being the cardinality
of $\mathcal{I}_{\vCurrentRound-1}$ representing the number of
different tasks observed so-far. \end{mylemma3} \begin{proof} Noting
that $\hat{\bm{\theta}}_{\vCurrentRound}=\Big[\underbrace{\bm{\theta}_{1},\dots,\bm{\theta}_{dk}}_{\bm{L}\Big|_{\hat{\bm{\theta}}_{\vCurrentRound}}},\underbrace{\bm{\theta}_{dk+1},\dots}_{\bm{s}_{i_{1}}\Big|_{\hat{\theta}_{\vCurrentRound}}},\underbrace{\dots,\dots}_{\bm{s}_{i_{\vCurrentRound-1}}\Big|_{\hat{\theta_{\vCurrentRound}}}},\underbrace{\dots,\bm{\theta}_{dk+kT^{\star}}}_{\text{\ensuremath{\bm{0}}'s: unobserved tasks}}\Big]^{\mathsf{T}}$,
it is easy to see 
\begin{align*}
\|\hat{\bm{\theta}}_{\vCurrentRound}\|_{2}^{2} & \leq\left\|\bm{L}\Big|_{\hat{\theta}_{\vCurrentRound}}\right\|_{\mathsf{F}}^{2}+\left|\mathcal{I}_{\vCurrentRound-1}\right|\max_{\vTaskK\in\mathcal{I}_{\vCurrentRound-1}}\left\{ \left\|\bm{s}_{\vTaskK}\Big|_{\hat{\bm{\theta}}_{\vCurrentRound}}\right\|_{2}^{2}\right\} \\
 & \leq q\times d+\left|\mathcal{I}_{\vCurrentRound-1}\right|\max_{\vTaskK\in\mathcal{I}_{\vCurrentRound-1}}\left[\frac{q\times d}{p^{2}}\left|\left|\bm{A}_{\vTaskK}^{\dagger}\right|\right|_{2}^{2}\left(\left|\left|\bm{b}_{\vTaskK}\right|\right|_{2}+\bm{c}_{\text{max}}\right)^{2}\right]\\
 & \leq q\times d\left[1+\left|\mathcal{I}_{\vCurrentRound-1}\right|\frac{1}{p^{2}}\max_{\vTaskK\in\mathcal{I}_{\vCurrentRound-1}}\left\{ \left|\left|\bm{A}_{\vTaskK}^{\dagger}\right|\right|_{2}^{2}\left(\left|\left|\bm{b}_{\vTaskK}\right|\right|_{2}+\bm{c}_{\text{max}}\right)^{2}\right\} \right].\\
\end{align*}
\end{proof}

\begin{mylemma4} The L$_{2}$ norm of the linearizing term of $l_{\vTaskJ}(\bm{\theta})$
around $\hat{\bm{\theta}}_{\vCurrentRound}$, $\left\|\hat{\bm{f}}_{\vTaskJ}\Big|_{\hat{\bm{\theta}}_{\vCurrentRound}}\right\|_{2}$,
is bounded by 
\begin{align*}
\left\|\hat{\bm{f}}_{\vTaskJ}\Big|_{\hat{\bm{\theta}}_{\vCurrentRound}}\right\|_{2} & \leq\left\|\nabla_{\bm{\theta}}l_{\vTaskJ}(\bm{\theta})\Big|_{\hat{\bm{\theta}}_{\vCurrentRound}}\right\|_{2}\left(1+\|\hat{\bm{\theta}}_{\vCurrentRound}\|_{2}\right)+\left|l_{\vTaskJ}(\bm{\theta})\Big|_{\hat{\bm{\theta}}_{\vCurrentRound}}\right|\leq\bm{\gamma}_{1}(\vCurrentRound)\left(1+\bm{\gamma}_{2}(\vCurrentRound)\right)+\bm{\delta}_{l_{\vTaskJ}},
\end{align*}
with $\bm{\delta}_{l_{\vTaskJ}}$ being the constant upper-bound
on $\left|l_{\vTaskJ}(\bm{\theta})\Big|_{\hat{\bm{\theta}}_{\vCurrentRound}}\right|$,
and 
\begin{align*}
\bm{\gamma}_{1}(\vCurrentRound) & =\frac{1}{n_{\vTaskJ}\sigma_{\vTaskJ}^{2}}\left[\left(u_{\text{max}}+\max_{\vTaskK\in\mathcal{I}_{\vCurrentRound-1}}\left\{ \left|\left|\bm{A}_{\vTaskK}^{+}\right|\right|_{2}\left(\left|\left|\bm{b}_{\vTaskK}\right|\right|_{2}+\bm{c}_{\text{max}}\right)\right\} \bm{\Phi}_{\text{max}}\right)\bm{\Phi}_{\text{max}}\right]\\
 & \hspace{8em}\times\left(\sfrac{d}{p}\sqrt{2q}\sqrt{\max_{\vTaskK\in\mathcal{I}_{\vCurrentRound-1}}\left\{ \|\bm{A}_{\vTaskK}^{\dagger}\|_{2}^{2}\left(\|\bm{b}_{\vTaskK}\|_{2}^{2}+\bm{c}_{\text{max}}^{2}\right)\right\} }+\sqrt{qd}\right) \enspace .\\
\bm{\gamma}_{2}(\vCurrentRound) & \leq\sqrt{q\times d}+\sqrt{\left|\mathcal{I}_{\vCurrentRound-1}\right|}\sqrt{\left[1+\frac{1}{p^{2}}\max_{\vTaskK\in\mathcal{I}_{\vCurrentRound-1}}\left\{ \left|\left|\bm{A}_{\vTaskK}^{\dagger}\right|\right|_{2}^{2}\left(\left|\left|\bm{b}_{\vTaskK}\right|\right|_{2}+\bm{c}_{\text{max}}\right)^{2}\right\} \right]} \enspace .
\end{align*}
\end{mylemma4}

\begin{proof} We have previously shown that $\Big|\Big|\hat{\bm{f}}_{\vTaskJ}\Big|_{\hat{\bm{\theta}}_{\vCurrentRound}}\Big|\Big|_{2}\leq\Big|\Big|\nabla_{\bm{\theta}}l_{\vTaskJ}\left(\bm{\theta}\right)\Big|_{\hat{\bm{\theta}}_{\vCurrentRound}}\Big|\Big|_{2}+\Big|l_{\vTaskJ}\left(\hat{\theta}_{\vCurrentRound}\right)\Big|+\Big|\Big|\nabla_{\bm{\theta}}l_{\vTaskJ}(\bm{\theta})\Big|_{\hat{\bm{\theta}}_{\vCurrentRound}}\Big|\Big|_{2}\times\Big|\Big|\hat{\bm{\theta}}_{\vCurrentRound}\Big|\Big|_{2}$.
Using the previously derived lemmas we can upper-bound $\Big|\Big|\hat{\bm{f}}_{\vTaskJ}\Big|_{\hat{\bm{\theta}}_{\vCurrentRound}}\Big|\Big|_{2}$
as follows 
\begin{align*}
\Big|\Big|\nabla_{\bm{\theta}}l_{\vTaskJ}\left(\bm{\theta}\right)\Big|_{\hat{\bm{\theta}}_{\vCurrentRound}}\Big|\Big|_{2}^{2} & \leq\Big|\Big|\nabla_{\bm{\alpha}_{\vTaskJ}}l_{\vTaskJ}\left(\bm{\theta}\right)\Big|_{\hat{\bm{\theta}}_{\vCurrentRound}}\Big|\Big|_{2}^{2}\Bigg(q\times d\left(\frac{2d}{p^{2}}\max_{\vTaskK\in\mathcal{I}_{\vCurrentRound-1}}\left\{ \left|\left|\bm{A}_{\vTaskK}^{\dagger}\right|\right|_{2}^{2}\left(\left|\left|\bm{b}_{\vTaskK}\right|\right|_{2}^{2}+\bm{c}_{\text{max}}^{2}\right)\right\} +1\right)\Bigg)\\
\Big|\Big|\nabla_{\bm{\theta}}l_{\vTaskJ}\left(\bm{\theta}\right)\Big|_{\hat{\bm{\theta}}_{\vCurrentRound}}\Big|\Big|_{2} & \leq\Big|\Big|\nabla_{\bm{\alpha}_{\vTaskJ}}l_{\vTaskJ}\left(\bm{\theta}\right)\Big|_{\hat{\bm{\theta}}_{\vCurrentRound}}\Big|\Big|_{2}\left(\sfrac{d}{p}\sqrt{2q}\sqrt{\max_{\vTaskK\in\mathcal{I}_{\vCurrentRound-1}}\left\{ \|\bm{A}_{\vTaskK}^{\dagger}\|_{2}^{2}\left(\|\bm{b}_{\vTaskK}\|_{2}^{2}+\bm{c}_{\text{max}}^{2}\right)\right\} }+\sqrt{qd}\right)\\
 & \leq\frac{1}{n_{\vTaskJ}\sigma_{\vTaskJ}^{2}}\left[\left(u_{\text{max}}+\max_{\vTaskK\in\mathcal{I}_{\vCurrentRound-1}}\left\{ \left|\left|\bm{A}_{\vTaskK}^{+}\right|\right|_{2}\left(\left|\left|\bm{b}_{\vTaskK}\right|\right|_{2}+\bm{c}_{\text{max}}\right)\right\} \bm{\Phi}_{\text{max}}\right)\bm{\Phi}_{\text{max}}\right]\\
 & \hspace{8em}\times\left(\sfrac{d}{p}\sqrt{2q}\sqrt{\max_{\vTaskK\in\mathcal{I}_{\vCurrentRound-1}}\left\{ \|\bm{A}_{\vTaskK}^{\dagger}\|_{2}^{2}\left(\|\bm{b}_{\vTaskK}\|_{2}^{2}+\bm{c}_{\text{max}}^{2}\right)\right\} }+\sqrt{qd}\right) \enspace .
\end{align*}
Further, 
\begin{align*}
\left|\left|\hat{\bm{\theta}}_{\vCurrentRound}\right|\right|_{2}^{2} & \leq q\times d+\left|\mathcal{I}_{\vCurrentRound-1}\right|\max_{\vTaskK\in\mathcal{I}_{\vCurrentRound-1}}\left[1+\frac{1}{p^{2}}\max_{\vTaskK\in\mathcal{I}_{\vCurrentRound-1}}\left\{ \left|\left|\bm{A}_{\vTaskK}^{\dagger}\right|\right|_{2}^{2}\left(\left|\left|\bm{b}_{\vTaskK}\right|\right|_{2}+\bm{c}_{\text{max}}\right)^{2}\right\} \right]\\
\implies \left|\left|\hat{\bm{\theta}}_{\vCurrentRound}\right|\right|_{2} & \leq\sqrt{q\times d}+\sqrt{\left|\mathcal{I}_{\vCurrentRound-1}\right|}\sqrt{\left[1+\frac{1}{p^{2}}\max_{\vTaskK\in\mathcal{I}_{\vCurrentRound-1}}\left\{ \left|\left|\bm{A}_{\vTaskK}^{\dagger}\right|\right|_{2}^{2}\left(\left|\left|\bm{b}_{\vTaskK}\right|\right|_{2}+\bm{c}_{\text{max}}\right)^{2}\right\} \right]} \enspace .
\end{align*}
Therefore 
\begin{align}
\left|\left|\hat{\bm{f}}_{\vTaskJ}\Big|_{\hat{\bm{\theta}}_{\vCurrentRound}}\right|\right| & \leq\left|\left|\nabla_{\bm{\theta}}l_{\vTaskJ}(\bm{\theta})\Big|_{\hat{\bm{\theta}}_{\vCurrentRound}}\right|\right|_{2}\left(1+\left|\left|\hat{\bm{\theta}}_{\vCurrentRound}\right|\right|_{2}\right)+\left|l_{\vTaskJ}(\bm{\theta})\Big|_{\hat{\bm{\theta}}_{\vCurrentRound}}\right|\label{Eq:f}\\
 & \leq\bm{\gamma}_{1}(\vCurrentRound)\left(1+\bm{\gamma}_{2}(\vCurrentRound)\right)+\bm{\delta}_{l_{\vTaskJ}},\nonumber 
\end{align}
with $\bm{\delta}_{l_{\vTaskJ}}$ being the constant upper-bound
on $\left|l_{\vTaskJ}(\bm{\theta})\Big|_{\hat{\bm{\theta}}_{\vCurrentRound}}\right|$,
and 
\begin{align*}
\bm{\gamma}_{1}(\vCurrentRound) & =\frac{1}{n_{\vTaskJ}\sigma_{\vTaskJ}^{2}}\left[\left(u_{\text{max}}+\max_{\vTaskK\in\mathcal{I}_{\vCurrentRound-1}}\left\{ \left|\left|\bm{A}_{\vTaskK}^{+}\right|\right|_{2}\left(\left|\left|\bm{b}_{\vTaskK}\right|\right|_{2}+\bm{c}_{\text{max}}\right)\right\} \bm{\Phi}_{\text{max}}\right)\bm{\Phi}_{\text{max}}\right]\\
 & \hspace{8em}\times\left(\sfrac{d}{p}\sqrt{2q}\sqrt{\max_{\vTaskK\in\mathcal{I}_{\vCurrentRound-1}}\left\{ \|\bm{A}_{\vTaskK}^{\dagger}\|_{2}^{2}\left(\|\bm{b}_{\vTaskK}\|_{2}^{2}+\bm{c}_{\text{max}}^{2}\right)\right\} }+\sqrt{qd}\right).\\
\bm{\gamma}_{2}(\vCurrentRound) & \leq\sqrt{q\times d}+\sqrt{\left|\mathcal{I}_{\vCurrentRound-1}\right|}\sqrt{\left[1+\frac{1}{p^{2}}\max_{\vTaskK\in\mathcal{I}_{\vCurrentRound-1}}\left\{ \left|\left|\bm{A}_{\vTaskK}^{\dagger}\right|\right|_{2}^{2}\left(\left|\left|\bm{b}_{\vTaskK}\right|\right|_{2}+\bm{c}_{\text{max}}\right)^{2}\right\} \right]} \enspace .
\end{align*}
\end{proof}

\begin{mytheo}[Sublinear Regret; restated from the main paper] After $\vNumTotalRounds$
rounds and choosing $\eta_{\vTaskOne}=\dots=\eta_{\vTaskJ}=\eta=\frac{1}{\sqrt{\vNumTotalRounds}}$,
$\bm{L}\Big|_{\hat{\bm{\theta}}_{1}}=\text{diag}_{\bm{k}}(\zeta)$,
with $\text{diag}_{\bm{k}}(\cdot)$ being a diagonal matrix among
the $\bm{k}$ columns of $\bm{L}$, $p\leq\zeta^{2}\leq q$, and $\bm{S}\Big|_{\hat{\bm{\theta}}_{1}}=\bm{0}_{k\times \vNumTotalTasks}$,
for any $\bm{u}\in\mathcal{K}$ our algorithm exhibits a sublinear regret of the form 
\[
\sum_{j=1}^{\vNumTotalRounds}l_{\vTaskJ}\left(\hat{\bm{\theta}}_{\vCurrentRound}\right)-l_{\vTaskJ}(\bm{u})=\mathcal{O}\left(\sqrt{\vNumTotalRounds}\right) \enspace .
\]
\end{mytheo}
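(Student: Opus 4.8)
The plan is to treat the learner as an instance of Follow-the-Regularized-Leader run on the \emph{linearized} losses $\hat{\bm{f}}_{\vTaskJ}\Big|_{\hat{\bm{\theta}}_j}$, and to control the cumulative regret through the usual two-term FTRL decomposition. First I would certify the per-round update: because the regularizer $\bm{\Omega}_0$ is the strongly convex quadratic $\mu_2\|\bm{L}\|_{\mathsf{F}}^2+\mu_1\|\bm{S}\|_{\mathsf{F}}^2$ and the current loss is linear after linearization, the optimality conditions for the Bregman projection give the identity $\nabla_{\bm{\theta}}\bm{\Omega}_0(\tilde{\bm{\theta}}_j)-\nabla_{\bm{\theta}}\bm{\Omega}_0(\tilde{\bm{\theta}}_{j+1})=\eta_{\vTaskJ}\hat{\bm{f}}_{\vTaskJ}\Big|_{\hat{\bm{\theta}}_j}$, which I import from \citet{AYBKSSz13}. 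Pairing this with the first-order convexity inequality for $\bm{\Omega}_0$ and its unit strong-convexity modulus yields the stability estimate $\|\hat{\bm{\theta}}_j-\hat{\bm{\theta}}_{j+1}\|_2\leq\eta_{\vTaskJ}\|\hat{\bm{f}}_{\vTaskJ}\Big|_{\hat{\bm{\theta}}_j}\|_2$.

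The second step is to telescope. Feeding the stability estimate into the convexity inequality and summing over $j=1,\dots,\vNumTotalRounds$ produces, after setting $\eta_{\vTaskJ}=\eta$ for all $j$ and rearranging, the balanced bound
\begin{equation*}
\sum_{j=1}^{\vNumTotalRounds}\left(l_{\vTaskJ}(\hat{\bm{\theta}}_j)-l_{\vTaskJ}(\bm{u})\right)\leq\eta\sum_{j=1}^{\vNumTotalRounds}\left\|\hat{\bm{f}}_{\vTaskJ}\Big|_{\hat{\bm{\theta}}_j}\right\|_2^2+\frac{1}{\eta}\left(\bm{\Omega}_0(\bm{u})-\bm{\Omega}_0(\hat{\bm{\theta}}_1)\right),
\end{equation*}
where the quadratic term arises from the stability estimate and the second term from the regularizer gap. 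Everything now reduces to showing that each summand $\|\hat{\bm{f}}_{\vTaskJ}\|_2^2$ is bounded by a quantity that does \emph{not} grow with $\vNumTotalRounds$.

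The third and decisive step is precisely what Lemmas 1--4 supply. The chain runs: the Gaussian-policy gradient $\|\nabla_{\bm{\alpha}_{\vTaskJ}}l_{\vTaskJ}\|_2$ is governed by $u_{\max}$, $\bm{\Phi}_{\max}$ and $\|\bm{\alpha}_{\vTaskJ}\|_2$ (Lemma 1); the active safety constraint $\bm{A}_{\vTaskK}\bm{\alpha}_{\vTaskK}=\bm{b}_{\vTaskK}-\bm{c}_{\vTaskK}$ with bounded slack lets the pseudo-inverse give $\|\bm{\alpha}_{\vTaskK}\|_2\leq\|\bm{A}_{\vTaskK}^{+}\|_2(\|\bm{b}_{\vTaskK}\|_2+\bm{c}_{\max})$; the spectral constraints $p\bm{I}\leq\bm{L}\bm{L}^{\mathsf{T}}\leq q\bm{I}$ bound $\|\bm{L}\|_{\mathsf{F}}^2\leq qd$ and $\|\bm{L}^{+}\|_2\leq\tfrac1p\|\bm{L}\|_{\mathsf{F}}$, so that $\|\nabla_{\bm{\theta}}l_{\vTaskJ}\|_2$ (Lemma 2) and $\|\hat{\bm{\theta}}_{\vCurrentRound}\|_2$ (Lemma 3) are uniformly controlled; assembling these in Lemma 4 gives $\|\hat{\bm{f}}_{\vTaskJ}\|_2\leq\bm{\gamma}_1(1+\bm{\gamma}_2)+\bm{\delta}_{l_{\vTaskJ}}$. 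The crucial point is that, although written as functions of $\vCurrentRound$, the quantities $\bm{\gamma}_1,\bm{\gamma}_2$ depend only on the \emph{fixed} constraint data and on $|\mathcal{I}_{\vCurrentRound-1}|\leq\vNumTotalTasks$, so $\|\hat{\bm{f}}_{\vTaskJ}\|_2^2\leq\bm{\gamma}_5(\vNumTotalRounds)\,\vNumTotalTasks$ with $\bm{\gamma}_5$ an $\vNumTotalRounds$-independent constant.

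Finally I would collect the pieces. Using $\bm{\Omega}_0(\bm{u})\leq qd+\bm{\gamma}_5\vNumTotalTasks$ for any feasible $\bm{u}$, together with the initialization $\bm{L}\Big|_{\hat{\bm{\theta}}_1}=\text{diag}_{\bm{k}}(\zeta)$ and $\bm{S}\Big|_{\hat{\bm{\theta}}_1}=\bm{0}$ (which guarantees invertibility and feasibility and gives $\bm{\Omega}_0(\hat{\bm{\theta}}_1)=\mu_2 k\zeta^2$), the right-hand side becomes $\eta\,\vNumTotalRounds\,\bm{\gamma}_5\vNumTotalTasks+\tfrac1\eta(qd+\bm{\gamma}_5\vNumTotalTasks-\mu_2 k\zeta^2)$; choosing $\eta=1/\sqrt{\vNumTotalRounds}$ balances the two terms at $\mathcal{O}(\sqrt{\vNumTotalRounds})$. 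The hard part is the third step: one must verify that the safety constraints render \emph{every} ingredient of $\hat{\bm{f}}_{\vTaskJ}$ bounded independently of the round index---in particular that linearizing about the previous \emph{constrained} iterate $\hat{\bm{\theta}}_{\vCurrentRound}$ (rather than an arbitrary point) keeps $\|\hat{\bm{\theta}}_{\vCurrentRound}\|_2$ and the loss value uniformly controlled---since otherwise the $\eta\sum\|\hat{\bm{f}}\|^2$ term would fail to be linear in $\vNumTotalRounds$ and the $\sqrt{\vNumTotalRounds}$ rate would collapse.
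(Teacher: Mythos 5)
Your proposal is correct and follows essentially the same route as the paper's proof: the same FTRL decomposition with the gradient identity from \citet{AYBKSSz13}, the same strong-convexity stability estimate, the same regret inequality, the same use of Lemmas 1--4 to bound $\left\|\hat{\bm{f}}_{\vTaskJ}\Big|_{\hat{\bm{\theta}}_{j}}\right\|_{2}^{2}$ by an $\vNumTotalRounds$-independent constant times $\vNumTotalTasks$, and the same initialization and choice $\eta=\sfrac{1}{\sqrt{\vNumTotalRounds}}$. (Incidentally, your value $\bm{\Omega}_{0}(\hat{\bm{\theta}}_{1})=\mu_{2}k\zeta^{2}$ is the correct evaluation of the regularizer at the stated initialization, whereas the paper writes $\mu_{2}k\zeta$; this discrepancy is immaterial to the $\mathcal{O}\left(\sqrt{\vNumTotalRounds}\right)$ rate.)
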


\begin{proof} Given the ingredients of the previous section, next
we derive the sublinear regret results which finalize the statement
of the theorem. First, it is easy to see that 
\begin{equation*}
\nabla_{\bm{\theta}}\bm{\Omega}_{0}\left(\tilde{\bm{\theta}}_{j}\right)-\nabla_{\bm{\theta}}\bm{\Omega}_{0}\left(\tilde{\bm{\theta}}_{j+1}\right)=\eta_{\vTaskJ}\hat{\bm{f}}_{\vTaskJ}\Big|_{\hat{\bm{\theta}}_{j}} \enspace .
\end{equation*}
Further, from strong convexity of the regularizer we obtain: 
\[
\bm{\Omega}_{0}\left(\hat{\bm{\theta}}_{j}\right)\geq\bm{\Omega}_{0}\left(\hat{\bm{\theta}}_{j+1}\right)+\left\langle \nabla_{\bm{\theta}}\bm{\Omega}_{0}\left(\hat{\bm{\theta}}_{j+1}\right),\hat{\bm{\theta}}_{j}-\hat{\bm{\theta}}_{j+1}\right\rangle +\frac{1}{2}\left|\left|\hat{\bm{\theta}}_{j}-\hat{\bm{\theta}}_{j+1}\right|\right|_{2}^{2} \enspace .
\]
It can be seen that 
\begin{equation*}
\left\|\hat{\bm{\theta}}_{j}-\hat{\bm{\theta}}_{j+1}\right\|_{2}\leq\eta_{\vTaskJ}\left\|\hat{\bm{f}}_{\vTaskJ}\Big|_{\hat{\bm{\theta}}_{j}}\right\|_{2} \enspace .
\end{equation*}
Finally, for any $\bm{u}\in\mathcal{K}$, we have: 
\begin{equation*}
\sum_{j=1}^{\vCurrentRound}\eta_{\vTaskJ}\left(l_{\vTaskJ}\left(\hat{\bm{\theta}}_{j}\right)-l_{\vTaskJ}(\bm{u})\right)\leq\sum_{j=1}^{\vCurrentRound}\left[\eta_{\vTaskJ}\left(\left\|\hat{\bm{f}}_{\vTaskJ}\Big|_{\hat{\bm{\theta}}_{j}}\right\|_{2}\right)^{2}\right]+\bm{\Omega}_{0}(\bm{u})-\bm{\Omega}_{0}(\hat{\bm{\theta}}_{1}) \enspace .
\end{equation*}
Assuming $\eta_{\vTaskOne}=\dots=\eta_{\vTaskJ}=\eta$, we can derive 
\begin{equation*}
\sum_{j=1}^{\vCurrentRound}\left(l_{\vTaskJ}\left(\hat{\bm{\theta}}_{j}\right)-l_{\vTaskJ}(\bm{u})\right)\leq\eta\sum_{j=1}^{\vCurrentRound}\left(\left\|\hat{\bm{f}}_{\vTaskJ}\Big|_{\hat{\bm{\theta}}_{j}}\right\|_{2}\right)^{2}+\sfrac{1}{\eta}\left(\bm{\Omega}_{0}(\bm{u})-\bm{\Omega}_{0}(\hat{\bm{\theta}}_{1})\right) \enspace .
\end{equation*}

The following lemma finalizes the statement of the theorem: 
\begin{mylemma5}
After T rounds and for $\eta_{\vTaskOne}=\dots=\eta_{\vTaskJ}=\eta=\frac{1}{\sqrt{\vNumTotalRounds}}$,
our algorithm exhibits, for any $\bm{u}\in\mathcal{K}$, a sublinear
regret of the form 
\begin{equation*}
\sum_{j=1}^{\vNumTotalRounds}l_{\vTaskJ}(\hat{\bm{\theta}}_{j})-l_{\vTaskJ}(\bm{u})\leq\mathcal{O}\left(\sqrt{\vNumTotalRounds}\right) \enspace .
\end{equation*}
\end{mylemma5} \begin{proof} It is then easy to see 
\begin{align*}
\left\|\hat{\bm{f}}_{\vTaskJ}\Big|_{\hat{\bm{\theta}}_{\vCurrentRound}}\right\|_{2}^{2} & \leq\bm{\gamma}_{3}(\vNumTotalRounds)+4\bm{\gamma}_{1}^{2}(\vNumTotalRounds)\bm{\gamma}_{2}^{2}(\vNumTotalRounds)\ \ \ \ \ \text{with}\ \ \ \ \ \bm{\gamma}_{3}(\vNumTotalRounds)=4\bm{\gamma}_{1}^{2}(\vNumTotalRounds)+2\max_{\vTaskJ\in\mathcal{I}_{\vNumTotalRounds-1}}\bm{\delta}_{\vTaskJ}^{2}\\
 & \leq\bm{\gamma}_{3}(\vNumTotalRounds)+8\frac{d}{p^{2}}\bm{\gamma}_{1}^{2}(\vNumTotalRounds)qd +8\frac{d}{p^{2}}\bm{\gamma}_{1}^{2}(\vNumTotalRounds)qd\left|\mathcal{I}_{\vNumTotalRounds-1}\right|\max_{\vTaskK\in\mathcal{I}_{\vNumTotalRounds-1}}\left\{ \|\bm{A}_{\vTaskK}^{\dagger}\|_{2}\left(\|\bm{b}_{\vTaskK}\|_{2}+\bm{c}_{\text{max}}\right)^{2}\right\} \enspace . \\
\end{align*}
Since $|\mathcal{I}_{\vNumTotalRounds-1}|\leq \vNumTotalTasks$ with $\vNumTotalTasks$
being the total number of tasks available, then we can write 
\[
\left\|\hat{\bm{f}}_{\vTaskJ}\Big|_{\hat{\theta}_{\vCurrentRound}}\right\|_{2}^{2}\leq\bm{\gamma}_{5}(\vNumTotalRounds)\vNumTotalTasks \enspace ,
\]
with $\bm{\gamma}_{5}=8\sfrac{d}{p^{2}}q\bm{\gamma}_{1}^{2}(\vNumTotalRounds)\max_{\vTaskK\in\mathcal{I}_{\vNumTotalRounds-1}}\left\{ \|\bm{A}_{\vTaskK}^{\dagger}\|_{2}^{2}\left(\|\bm{b}_{\vTaskK}\|_{2}+\bm{c}_{\text{max}}\right)^{2}\right\} $.
Further, it is easy to see that $\bm{\Omega}_{0}(\bm{u})\leq qd+\bm{\gamma}_{5}(\vNumTotalRounds)\vNumTotalTasks$
with $\bm{\gamma}_{5}(\vNumTotalRounds)$ being a constant, which
leads to 
\[
\sum_{j=1}^{\vCurrentRound}\left(l_{\vTaskJ}\left(\hat{\bm{\theta}}_{j}\right)-l_{\vTaskJ}(\bm{u})\right)\leq\eta\sum_{j=1}^{\vCurrentRound}\bm{\gamma}_{5}(\vNumTotalRounds)\vNumTotalTasks+\sfrac{1}{\eta}\left(qd+\bm{\gamma}_{5}(\vNumTotalRounds)\vNumTotalTasks-\bm{\Omega}_{0}(\hat{\bm{\theta}}_{1})\right) \enspace .
\]
\textbf{Initializing $\bm{L}$ and $\bm{S}$:} We initialize $\bm{L}\Big|_{\hat{\bm{\theta}}_{1}}=\text{diag}_{\bm{k}}(\zeta)$,
with $p\leq\zeta^{2}\leq q$ and $\bm{S}\Big|_{\hat{\bm{\theta}}_{1}}=\bm{0}_{k\times \vNumTotalTasks}$
ensures the invertability of $\bm{L}$ and that the constraints are
met. This leads us to 
\[
\sum_{j=1}^{\vCurrentRound}\left(l_{\vTaskJ}\left(\hat{\bm{\theta}}_{j}\right)-l_{\vTaskJ}(\bm{u})\right)\leq\eta\sum_{j=1}^{\vCurrentRound}\bm{\gamma}_{5}(\vNumTotalRounds)\vNumTotalTasks+\sfrac{1}{\eta}\left(qd+\bm{\gamma}_{5}(\vNumTotalRounds)\vNumTotalTasks-\mu_{2}k\zeta\right) \enspace .
\]
Choosing $\eta_{\vTaskOne}=\dots=\eta_{\vTaskJ}=\eta=\sfrac{1}{\sqrt{\vNumTotalRounds}}$,
we acquire sublinear regret, finalizing the statement of the theorem:
\begin{align*}
\sum_{j=1}^{\vCurrentRound}\left(l_{\vTaskJ}\left(\hat{\bm{\theta}}_{j}\right)-l_{\vTaskJ}(\bm{u})\right) & \leq\sfrac{1}{\sqrt{\vNumTotalRounds}}\bm{\gamma}_{5}(\vNumTotalRounds)\vNumTotalTasks\vNumTotalRounds+\sqrt{\vNumTotalRounds}\left(qd+\bm{\gamma}_{5}(\vNumTotalRounds)\vNumTotalTasks-\mu_{2}k\zeta\right)\\
 & \leq\sqrt{\vNumTotalRounds}\left(\bm{\gamma}_{5}(\vNumTotalRounds)\vNumTotalTasks+qd\bm{\gamma}_{5}(\vNumTotalRounds)\vNumTotalTasks-\mu_{2}k\zeta\right)\leq\mathcal{O}\left(\sqrt{\vNumTotalRounds}\right) \enspace ,
\end{align*}
with $\bm{\gamma}_{5}(\vNumTotalRounds)$ being a constant. \end{proof}
\end{proof}

\end{document}